\newcommand{\bi}{\begin{itemize}}
\newcommand{\ei}{\end{itemize}}
\newcommand{\bal}{\begin{align}}
\newcommand{\eal}{\end{align}}
\newcommand{\EE}{\mathbb{E}}
\newcommand{\PP}{\mathbb{P}}
\newcommand{\bX}{\mathbf{X}}
\newcommand{\bY}{\mathbf{Y}}
\newcommand{\bW}{\mathbf{W}}
\newcommand{\bx}{\mathbf{x}}
\newcommand{\by}{\mathbf{y}}
\newcommand{\bR}{\mathbf{R}}
\newcommand{\bp}{\mathbf{p}}
\newcommand{\bw}{\mathbf{w}}
\newcommand{\ba}{\mathbf{a}}
\newcommand{\bv}{\mathbf{v}}
\newcommand{\bu}{\mathbf{u}}
\newcommand{\bs}{\mathbf{s}}
\newcommand{\bA}{\mathbf{A}}
\newcommand{\bB}{\mathbf{B}}
\newcommand{\bU}{\mathbf{U}}
\newcommand{\bV}{\mathbf{V}}
\newcommand{\bQ}{\mathbf{Q}}
\newcommand{\bN}{\mathbf{N}}
\newcommand{\bK}{\mathbf{K}}
\newcommand{\bD}{\mathbf{D}}
\newcommand{\bE}{\mathbf{E}}
\newcommand{\bF}{\mathbf{F}}
\newcommand{\bI}{\mathbf{I}}
\newcommand{\cX}{\mathcal{X}}
\newcommand{\cY}{\mathcal{Y}}
\newcommand{\cO}{\mathcal{O}}
\newcommand{\cG}{\mathcal{G}}
\newcommand{\hbx}{\hat{\mathbf{x}}}
\newcommand{\rhoh}{\hat{\rho}}
\newcommand{\phih}{\hat{\phi}}
\newcommand{\tI}{\tilde{I}}
\newcommand{\hphi}{\hat{\phi}}
\newcommand{\bb}{\mathbf{b}}
\newcommand{\cK}{\mathcal{K}}
\newcommand{\btheta}{\mathbf{\theta}}
\newcommand{\eps}{\epsilon}
\definecolor{darkred}{RGB}{150,0,0}
\definecolor{darkgreen}{RGB}{0,150,0}
\definecolor{darkblue}{RGB}{0,0,200}
\numberwithin{equation}{section}
\def \endprf{\hfill {\vrule height6pt width6pt depth0pt}\medskip}
\newenvironment{proof}{\noindent {\bf Proof} }{\endprf\par}
\newtheorem{theorem}{\textbf{Theorem}}
\newtheorem{lemma}{\textbf{Lemma}}
\newtheorem{corollary}{\textbf{Corollary}}
\newtheorem{definition}{\textbf{Definition}}
\newtheorem{remark}{\textbf{Remark}}
\newtheorem{proposition}{\textbf{Proposition}}
\title{{\huge Maximally Correlated Principal Component Analysis}}
\author{Soheil~Feizi and David Tse\\\\
Stanford University}
\date{}
\begin{document}
\maketitle

\begin{abstract}
In the era of big data, reducing data dimensionality is critical in many areas of science. Widely used Principal Component Analysis (PCA) addresses this problem by computing a low dimensional data embedding that maximally explain variance of the data. However, PCA has two major weaknesses. Firstly, it only considers linear correlations among variables (features), and secondly it is not suitable for categorical data. We resolve these issues by proposing Maximally Correlated Principal Component Analysis (MCPCA). MCPCA computes transformations of variables whose covariance matrix has the largest Ky Fan norm. Variable transformations are unknown, can be nonlinear and are computed in an optimization. MCPCA can also be viewed as a multivariate extension of Maximal Correlation. For jointly Gaussian variables we show that the covariance matrix corresponding to the identity (or the negative of the identity) transformations majorizes covariance matrices of non-identity functions. Using this result we characterize global MCPCA optimizers for nonlinear functions of jointly Gaussian variables for every rank constraint. For categorical variables we characterize global MCPCA optimizers for the rank one constraint based on the leading eigenvector of a matrix computed using pairwise joint distributions. For a general rank constraint we propose a block coordinate descend algorithm and show its convergence to stationary points of the MCPCA optimization. We compare MCPCA with PCA and other state-of-the-art dimensionality reduction methods including Isomap, LLE, multilayer autoencoders (neural networks), kernel PCA, probabilistic PCA and diffusion maps on several synthetic and real datasets. We show that MCPCA consistently provides improved performance compared to other methods.
\end{abstract}

\section{Introduction}\label{sec:intro}
Let $X_1$ and $X_2$ be two mean zero and unit variance random variables. Pearson's correlation \cite{pearson1895note} defined as
\begin{align}\label{eq:corr}
\rho_{Pearson}(X_1,X_2)=\EE[X_1 X_2]
\end{align}
is a basic statistical parameter and plays a central role in many statistical and machine learning methods such as linear regression \cite{neter1996applied}, principal component analysis \cite{jolliffe2002principal}, and support vector machines \cite{steinwart2008support}, partially owing to its simplicity and computational efficiency. Pearson's correlation however has two main weaknesses: firstly it only captures linear dependency between variables, and secondly for discrete (categorical) variables the value of Pearson's correlation depends somewhat arbitrarily on the labels. To overcome these weaknesses, {\it Maximal Correlation} (MC) has been proposed and studied by Hirschfeld \cite{hirschfeld1935connection}, Gebelein \cite{gebelein1941statistische}, Sarmanov \cite{sarmanov1962maximum} and R\'enyi \cite{renyi1959measures}, and is defined as
\begin{align}\label{eq:mc}
\rho_{MC}(X_1, X_2)=
	\sup_{\phi_1(.), \phi_{2}(.)}\ &\EE[\phi_1(X_1)\ \phi_{2}(X_{2})],\\
& \EE[\phi_i(X_i)]=0,\quad i=1,2,\nonumber\\
& \EE[\phi_i(X_i)^2]=1,\quad i=1,2.\nonumber
\end{align}
Transformation functions $\{\phi_i(.)\}_{i=1}^{2}$ are assumed to be Borel measurable whose ranges are in $\mathbb{R}$. MC has also been studied by Witsenhausen \cite{witsenhausen1975sequences}, Ahlswede and G\'acs \cite{ahlswede1976spreading}, and Lancaster \cite{lancaster1957some}. MC tackles the two main drawbacks of the Pearson's correlation: it models a family of nonlinear relationships between the two variables. For discrete variables, the MC value only depends on the joint distribution and does not rely on labels. Moreover the MC value between $X_1$ and $X_2$ is zero iff they are independent \cite{renyi1959measures}.

For the multivariate case with variables $X=(X_1$,...,$X_p)^T$ where $p\geq 2$, Pearson's correlation can be extended naturally to the covariance matrix $\bK_X \in \mathbb{R}^{p\times p}$ where $\bK_X(i,i')=\EE[X_i X_{i'}]$ (assuming $X_i$ has zero mean and unit variance). Similarly to the bivariate case, the covariance matrix analysis suffers from two weaknesses of only capturing linear dependencies among variables and being label dependent when variables are discrete (categorical). One way to extend the idea of MC to the multivariate case is to consider the set of covariance matrices of transformed variables. Let $\phi(X)=(\phi_1(X_1),...,\phi_p(X_p))^T$ be the vector of transformed variables with zero means and unit variances. I.e., $\EE[\phi_i(X_i)]=0$ and $\EE[\phi_i(X_i)^2]=1$ for $1\leq i\leq p$. Let $\bK_{\phi(X)}\in \mathbb{R}^{p\times p}$ be the covariance matrix of transformed variables $\phi(X)$ where $\bK_{\phi(X)}(i,i')=\EE[\phi_i(X_i) \phi_{i'}(X_{i'})]$. The set of covariance matrices of transformed variables is defined as follows:

\begin{align}\label{eq:MC-covariance-set}
\cK_{X}\triangleq \Big\{\bK_{\phi(X)}\in \mathbb{R}^{p\times p}:~\EE[\phi_i(X_i)]=0,~\EE[\phi_i(X_i)^2]=1,~ 1\leq i,i'\leq p \Big\}.
\end{align}

Similarly to the bivariate case, functions $\{\phi_i(.)\}_{i=1}^p$ are assumed to be Borel measurable whose ranges are in $\mathbb{R}$. If variables $\{X_i\}_{i=1}^{p}$ are continuous, functions $\{\phi_i(.)\}_{i=1}^{p}$ are assumed to be continuous. The set $\cK_{X}$ includes infinitely many covariance matrices corresponding to different transformations of variables. In order to have an operational extension of MC to the multivariate case, we need to select one (or finitely many) members of $\cK_{X}$ through an optimization.

\begin{figure}
\centering
  \includegraphics[width=0.8\linewidth]{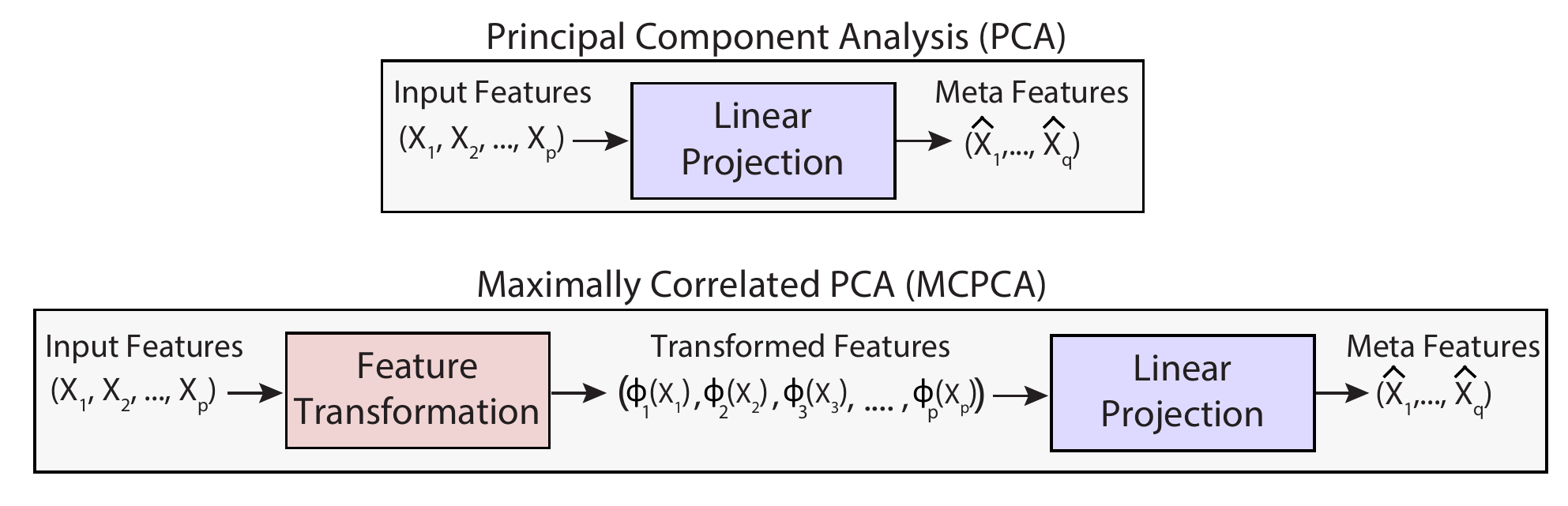}
\caption{An illustration of the Maximally Correlated Principal Component Analysis (MCPCA) Framework. MCPCA computes transformations of features $\phi_i(X_i)$ so that variance of the transformed data can be explained maximally by a few meta features. Feature transformations are unknown, can be nonlinear and are computed in an optimization.}
\label{fig:MCPCA}
\end{figure}

Here we propose the following optimization over $\cK_{X}$ that aims to select a covariance matrix $\bK^*\in \cK_{X}$ with the maximum $q$-Ky Fan norm (i.e., with the maximum sum of top $q$ eigenvalues):
\begin{align}\label{opt:MCPCA-main}
\max_{\bK}\quad & \sum_{r=1}^{q} \lambda_r(\bK)\\
& \bK\in \cK_{X}.\nonumber
\end{align}
Since the trace of all matrices in $\cK_{X}$ is equal to $p$, maximizing the Ky Fan norm over $\cK_{X}$ results in a low rank or an approximately low rank covariance matrix. We refer to this optimization as {\it Maximally Correlated Principal Component Analysis} with parameter $q$ or for simplicity, the MCPCA optimization. The optimal MCPCA value is denoted by $\rho_q^*(X)$. When no confusion arises we use $\rho_q^*$ to refer to it.

Principal Component Analysis (PCA) \cite{jolliffe2002principal} aims to find $q$ eigenvectors corresponding to the top eigenvalues of the covariance matrix. These are called Principal Components (PCs). On the other hand, we show that the MCPCA optimization aims to find possibly nonlinear transformations of variables that can be approximated optimally by $q$ orthonormal vectors. Thus, MCPCA can be viewed as a generalization of PCA over possibly nonlinear transformations of variables with zero means and unit variances.

We summarize our main contributions below:
\begin{itemize}
  \item We introduce MCPCA as a multivariate extension of MC and a generalization of PCA.
  \item   For jointly Gaussian variables we show that the covariance matrix corresponding to the identity (or the negative of the identity) transformations majorizes covariance matrices of non-identity functions. Using this result we characterize global MCPCA optimizers for nonlinear functions of jointly Gaussian variables for every $q$.
  \item For finite discrete variables,
  \begin{itemize}
    \item [-] we compute a globally optimal MCPCA solution when $q=1$ based on the leading eigenvector of a matrix computed using pairwise joint distributions.
    \item [-] for an arbitrary $q$ we propose a block coordinate descend algorithm and show its convergence to stationary points of the MCPCA optimization.
  \end{itemize}
\item We study the consistency of sample MCPCA (an MCPCA optimization computed using empirical distributions) for both finite discrete and continuous variables.
\end{itemize}

We compare MCPCA with PCA and other state-of-the-art nonlinear dimensionality reduction methods including Isomap \cite{isomap}, LLE \cite{LLE}, multilayer autoencoders (neural networks) \cite{lee2007nonlinear,hinton2006reducing}, kernel PCA \cite{scholkopf1997kernel,scholkopf1998nonlinear,hoffmann2007kernel,mika1998kernel}, probabilistic PCA \cite{roweis1998algorithms} and diffusion maps \cite{lafon2006diffusion} on several synthetic and real datasets. Our real dataset experiments include breast cancer, Parkinson’s disease, diabetic retinopathy, dermatology, gene splicing and adult income datasets. We show that MCPCA consistently provides improved performance compared to other methods.

\subsection{Prior Work}\label{subsec:prior}
MCPCA can be viewed as a dimensionality reduction method whose goal is to find possibly nonlinear transformations of variables with a low rank covariance matrix. Other nonlinear dimensionality reduction methods include manifold learning methods such as Isomap \cite{isomap}, Locally Linear Embedding (LLE) \cite{LLE}, kernel PCA  \cite{scholkopf1997kernel,scholkopf1998nonlinear,hoffmann2007kernel,mika1998kernel}, maximum variance unfolding \cite{weinberger2004learning}, diffusion maps \cite{lafon2006diffusion}, Laplacian eigenmaps \cite{belkin2001laplacian}, Hessian LLE \cite{donoho2003hessian}, Local tangent space analysis \cite{zhang2004principal}, Sammon mapping \cite{sammon1969nonlinear}, multilayer autoencoders \cite{lee2007nonlinear,hinton2006reducing}, among others. For a comprehensive review of these methods, see reference \cite{van2009dimensionality}. Although these techniques show an advantage compared to PCA in artificial datasets, their successful applications to real datasets have been less convincing \cite{van2009dimensionality}. The key challenge is to have an appropriate balance among generality of the model, computational complexity of the method and statistical significance of inferences.

MCPCA is more general than PCA since it considers both linear and nonlinear feature transformations. In kernel PCA methods, transformations of variables are {\it fixed} in advance. This is in contrast to MCPCA that optimizes over transformations resulting in an optimal low rank approximation of the data. Manifold learning methods such as Isomap and LLE aim to find a low dimensional representation of the data such that sample distances in the low dimensional space are the same, up to a scaling, to sample geodistances (i.e., distances over the manifold), assuming there exists such a manifold that the data lies on. These methods can be viewed as extensions of PCA fitting a nonlinear model to the data. Performance of these methods has been shown to be sensitive to noise and model parameters \cite{van2009dimensionality}. Through experiments on several synthetic and real datasets we show that the performance of MCPCA is robust against these factors. Note that MCPCA allows features to be transformed only individually, thus avoiding a combinatorial optimization and resulting in statistically significant inferences. However because of this MCPCA cannot capture low dimensional structures such as the swiss roll example since underlying transformation depend on pairs of variables.

Unlike existing dimensionality reduction methods that are only suitable for data with continuous features, MCPCA is suitable for both categorical and continuous data. The reason is that even if the data is categorical, transformed values computed by MCPCA are real. Moreover we compare computational and memory complexity of MCPCA and manifold learning methods (Isomap and LLE) in Remark \ref{remark:complexity}. Unlike Isomap and LLE methods whose computational and memory complexity scales in a quadratic or cubic manner with the number of samples, computational and memory complexity of the MCPCA algorithm scales linearly with the number of samples, making it more suitable for data sets with large number of samples.

MCPCA can be viewed as a multivariate extension of MC. Other extensions of MC to the multivariate case have been studied in the literature. For example, reference \cite{feizi2015network} introduces an optimization over $\cK_{X}$ that aims to maximize sum of arbitrary chosen elements of the matrix $\bK\in \cK_{X}$. \cite{feizi2015network} shows that this optimization can be useful in nonlinear regression and graphical model inference. Moreover, \cite{feizi2015network} provides an algorithm to find local optima of the proposed optimization. Reference \cite{beigi2015duality} introduces another optimization that aims to select a covariance matrix whose minimum eigenvalue is maximized. \cite{beigi2015duality} briefly discuses computational and operational aspects of the proposed optimization.

\subsection{Notation}\label{sec:notation}
For matrices we use bold-faced upper case letters, for vectors we use bold-faced lower case letters, and for scalars we use regular lower case letters. For random variables we use regular upper case letters. For example, $\bX$ represents a matrix, $\bx$ represents a vector, $x$ represents a scalar number, and $X$ represents a random variable. $I_{n}$ and $1_{n}$ are the identity and all one matrices of size $n\times n$, respectively. When no confusion arises, we drop the subscripts. $\mathbf{1}\{x=y\}$ is the indicator function which is equal to one if $x=y$, otherwise it is zero. $Tr(\bX)$ and $\bX^T$ represent the trace and the transpose of the matrix $\bX$, respectively. $diag(\bx)$ is a diagonal matrix whose diagonal elements are equal to $\bx$, while $diag(\bX)$ is a vector of the diagonal elements of the matrix $\bX$. $\|\bx\|_{2}=\bx^T \bx$ is the second norm of the vector $\bx$. When no confusion arises, we drop the subscript. $||\bX||$ is the operator norm of the matrix $\bX$. $<\bx,\by>$ is the inner product between vectors $\bx$ and $\by$. $\bx \perp \by$ indicates that vectors $\bx$ and $\by$ are orthogonal. The matrix inner product is defined as $<\bX,\bY>=Tr(\bX \bY^T)$.

The eigen decomposition of the matrix $\bX\in\mathbb{R}^{n\times n}$ is denoted by $\bX=\sum_{i=1}^{n} \lambda_i(\bX) \bu_i(\bX) \bu_i(\bX)^T$, where $\lambda_i(\bX)$ is the $i$-th largest eigenvalue of the matrix $\bX$ corresponding to the eigenvector $\bu_i(\bX)$. We have $\lambda_1(\bX)\geq \lambda_2(\bX)\geq \cdots$. $\lambda(\bX)=(\lambda_1(\bX),\lambda_2(\bX),\cdots)^T$. $\bu_i(\bX)$ has a unit norm. Similarly the singular value decomposition of the matrix $\bY\in \mathbb{R}^{n\times m}$ is denoted by $\bY=\sum_{i=1}^{min(n,m)} \sigma_{i}(\bY) \bu_i(\bY) \bv_i(\bY)^T$ where $\sigma_i(\bY)$ is the $i$-th largest singular value of the matrix $\bY$ corresponding to the left and right singular eigenvectors $\bu_i(\bY)$ and $\bv_i(\bY)$, respectively. We have $\sigma_1(\bY)\geq \sigma_2(\bY)\geq \cdots$. $\sigma(\bY)=(\sigma_1(\bY),\sigma_2(\bY),\cdots)^T$. $\bu_i(\bY)$ and $\bv_i(\bY)$ are unit norm vectors.

\section{MCPCA: Basic Properties and Relationship with Matrix Majorization}\label{sec:properties}
\subsection{Basic Properties of MCPCA}\label{sec:properties}
In reference \cite{renyi1959measures}, R\'enyi shows that MC between the two variables $X_1$ and $X_2$ is zero iff they are independent, while MC is one iff the two variables are strictly dependent (i.e., there exist mean zero, unit variance transformations of variables that are equal.). Here we study some of these properties for the multivariate case of MCPCA:

\begin{theorem}\label{thm:prop-MCPCA}
 Let $\rho_q^*$ be the optimal MCPCA value for random variables $X_1$,...,$X_p$.
\begin{itemize}
  \item [(i)] $1\leq \rho_q^*\leq p$, for $1\leq q\leq p$.
  \item [(ii)] $\rho_1^*=1$ iff $X_i$ and $X_{i'}$ are independent, for $1\leq i\neq i'\leq p$.
  \item [(iii)] $\rho_1^*=p$ iff $X_1$,...,$X_p$ are strictly dependent. I.e., there exist zero mean, unit variance transformation functions $\{\phi_i(.)\}_{i=1}^{p}$ such that for all $1\leq i,i'\leq p$, $\phi_i(X_i)=\phi_{i'}(X_{i'})$.
  \item [(iv)] If $\{\phi_i(.)\}_{i=1}^{p}$ are one-to-one transformation functions, $\rho_q^*(X)=\rho_q^*(\phi(X))$.
\end{itemize}
 \end{theorem}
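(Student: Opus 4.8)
The plan is to treat the four parts separately, each reducing to a short fact about correlation matrices combined with a property of (maximal) correlation. Throughout, note that every $\bK\in\cK_X$ is a correlation matrix: it is positive semidefinite with unit diagonal, so $Tr(\bK)=p$ and $\lambda_r(\bK)\ge 0$ for all $r$ (and we assume each $X_i$ is non-degenerate, so that at least one feasible $\phi$ exists). For (i), the upper bound is then immediate: $\sum_{r=1}^q\lambda_r(\bK)\le\sum_{r=1}^p\lambda_r(\bK)=Tr(\bK)=p$. For the lower bound, pick any feasible $\phi$; its covariance $\bK$ satisfies $\lambda_1(\bK)\ge Tr(\bK)/p=1$, so $\rho_q^*\ge\sum_{r=1}^q\lambda_r(\bK)\ge\lambda_1(\bK)\ge 1$.

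For the ``if'' direction of (ii), pairwise independence together with $\EE[\phi_i(X_i)]=0$ gives $\EE[\phi_i(X_i)\phi_{i'}(X_{i'})]=\EE[\phi_i(X_i)]\EE[\phi_{i'}(X_{i'})]=0$ for every feasible $\phi$ and every $i\ne i'$, hence $\cK_X=\{I_p\}$ and $\rho_1^*=\lambda_1(I_p)=1$. For the ``only if'' direction I would argue contrapositively: if $X_i,X_{i'}$ are dependent, then by the classical fact that $\rho_{MC}=0$ iff independent \cite{renyi1959measures} there are feasible transformations $\phi_i,\phi_{i'}$ with $\EE[\phi_i(X_i)\phi_{i'}(X_{i'})]=:\rho>0$; completing $\phi$ with arbitrary feasible transformations on the remaining coordinates, the Rayleigh quotient of the resulting $\bK$ at the unit vector $\bv$ with $\bv_i=\bv_{i'}=1/\sqrt{2}$ and zeros elsewhere equals $1+\rho$, so $\rho_1^*\ge\lambda_1(\bK)\ge 1+\rho>1$.

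For the ``if'' direction of (iii), strict dependence provides feasible $\phi$ with every $\phi_i(X_i)$ almost surely equal to a single zero-mean unit-variance variable, so $1_p\in\cK_X$ and $\rho_1^*\ge\lambda_1(1_p)=p$, which with (i) forces equality. For the ``only if'' direction, assume $\rho_1^*=p$ is attained at some $\bK\in\cK_X$ (true in the finite discrete case, where $\cK_X$ is compact). Then $\lambda_1(\bK)=p=Tr(\bK)$ with non-negative eigenvalues forces $\lambda_r(\bK)=0$ for $r\ge 2$, so $\bK=p\,\bu\bu^T$; the unit-diagonal constraint forces $\bu_i=\pm1/\sqrt{p}$, hence $\bK=\bs\bs^T$ with $\bs_i\in\{\pm1\}$. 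Thus $\EE[\phi_i(X_i)\phi_{i'}(X_{i'})]=\bs_i\bs_{i'}$, which for zero-mean unit-variance variables gives $\EE[(\bs_i\phi_i(X_i)-\bs_{i'}\phi_{i'}(X_{i'}))^2]=0$, i.e.\ $\bs_i\phi_i(X_i)=\bs_{i'}\phi_{i'}(X_{i'})$ a.s.; the feasible transformations $\bs_i\phi_i$ then witness strict dependence.

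For (iv), the point is that if each $\phi_i$ is one-to-one with measurable inverse (automatic for a continuous injection on an interval in the continuous case), then $g_i\mapsto g_i\circ\phi_i$ is a bijection from the feasible transformations of $\phi_i(X_i)$ onto those of $X_i$ that leaves all pairwise second moments unchanged, so $\cK_{\phi(X)}=\cK_X$ and hence $\rho_q^*(\phi(X))=\rho_q^*(X)$. The only step I expect to require genuine care is the ``only if'' direction of (iii) when the MCPCA value is a supremum that is not attained (e.g.\ continuous variables with continuous transformations): there one must pass to a maximizing sequence, extract a limiting correlation matrix by compactness, and argue that this limit still lies in the closure of $\cK_X$ and corresponds to strictly dependent variables; everything else is routine linear algebra plus the stated properties of maximal correlation.
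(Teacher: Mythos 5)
Your proof is correct and follows essentially the same route as the paper's: part (i) from $Tr(\bK)=p$ and nonnegativity of eigenvalues, part (ii) via R\'enyi's characterization of zero maximal correlation, part (iii) via the rank-one structure forced by $\lambda_1(\bK)=p$, and part (iv) from $\cK_{X}=\cK_{\phi(X)}$ under invertible relabelings. If anything you are more careful than the paper on part (iii), where you handle the sign ambiguity ($\bK=\bs\bs^T$ rather than jumping directly to $1_p\in\cK_X$) and explicitly flag the attainment-of-the-supremum issue that the paper's proof silently assumes.
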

\begin{proof}
To prove part (i), for any $\bK\in \cK_{X}$, we have $Tr(\bK)=\sum_{r=1}^{p} \lambda_r(\bK)=p$ because $\phi_i(X_i)$ has zero mean and unit variance for $1\leq i\leq p$. Moreover, since $\lambda_1(\bK)\geq \lambda_2(\bK)\geq \dots \lambda_p(\bK)\geq 0$, we have $\lambda_1(\bK)\geq 1$. Thus, $1\leq \rho_q^*\leq p$, for $1\leq q\leq p$. This completes the proof of part (i).

To prove part (ii), suppose $\lambda_1(\bK^*)=1$. Thus, for every $\bK\in \cK_{X}$, we have $1\geq \lambda_1(\bK)\geq \lambda_2(\bK)\geq \dots \lambda_p(\bK)\geq 0$. However since the sum of all eigenvalues are equal to $p$, we have $\lambda_i(\bK)=1$ for every $\bK\in \cK_{X}$ and $1\leq i\leq p$. Therefore, $\bK=I_p$ for every $\bK\in \cK_{X}$. This means $\rho_{MC}(X_i,X_{i'})=0$, for $1\leq i\neq i'\leq p$, which indicates that $X_i$ and $X_{i'}$ are independent \cite{renyi1959measures}. To prove the other direction of part (ii), if $X_i$ and $X_{i'}$ are independent, for every zero mean and unit variance functions $\phi_i(.)$ and $\phi_{i'}(.)$, we have $\EE[\phi_i(X_i) \phi_i(X_i)]=0$ \cite{renyi1959measures}. Thus, for every $\bK\in \cK_{X}$, we have $\bK=I_p$. This completes the proof of part (ii).

To prove part (iii), let $\rho_1^*=p$. Thus, $1_{p} \in \cK_{X}$. It means that there exist transformation functions $\{\phi_i^*\}_{i=1}^{p}$ with zero means and unit variances such that for all $1\leq i,i'\leq p$, $\EE[\phi_i^*(X_i)\phi_{i'}^*(X_{i'})]=1$. It means that for $1\leq i\leq p$, $\phi_i^*(X_i)=Y$ where $Y$ has zero mean and unit variance. The proof of the inverse direction is straightforward. This completes the proof of part (iii).

To prove part (iv), we note that if $\{\phi_i(.)\}_{i=1}^{p}$ are one-to-one transformations, $\cK_{X}=\cK_{\phi(X)}$. Thus, $\rho_q^*(X)=\rho_q^*(\phi(X))$. This completes the proof of part (iv).
\end{proof}

In the following proposition, we show that the increase ratio of the optimal MCPCA value (i.e., $\rho_{q+1}^*/\rho_q^*-1$) is bounded above by $1/q$ which decreases as $q$ increases.

\begin{proposition}\label{prop:submodular-increase}
Let $\rho_q^*$ be the optimal MCPCA value for random variables $X_1$,...,$X_p$. We have
\begin{align}\label{eq:submodular-increase}
\rho_q^* \leq \rho_{q+1}^* \leq (1+\frac{1}{q})\rho_q^*
\end{align}
\end{proposition}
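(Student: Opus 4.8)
The plan is to establish the two inequalities separately. The left inequality $\rho_q^* \leq \rho_{q+1}^*$ is immediate: for any $\bK \in \cK_X$ we have $\sum_{r=1}^q \lambda_r(\bK) \leq \sum_{r=1}^{q+1} \lambda_r(\bK)$ since eigenvalues of a covariance matrix are nonnegative, and taking suprema over $\cK_X$ on both sides preserves the inequality. So the real content is the right inequality $\rho_{q+1}^* \leq (1 + 1/q)\rho_q^*$.

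For the right inequality, first I would fix a covariance matrix $\bK \in \cK_X$ achieving (or approaching) the optimum for parameter $q+1$, so that $\sum_{r=1}^{q+1}\lambda_r(\bK) = \rho_{q+1}^*$. The key observation is a purely linear-algebraic fact about the ordered eigenvalues $\lambda_1(\bK) \geq \cdots \geq \lambda_{q+1}(\bK) \geq 0$: since these are the $q+1$ largest among $p$ nonnegative numbers, the smallest of them satisfies $\lambda_{q+1}(\bK) \leq \frac{1}{q}\sum_{r=1}^{q}\lambda_r(\bK)$, because $\lambda_{q+1}(\bK) \leq \lambda_r(\bK)$ for each $r \in \{1,\dots,q\}$ and averaging those $q$ inequalities gives the bound. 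Therefore
\begin{align}
\rho_{q+1}^* = \sum_{r=1}^{q+1}\lambda_r(\bK) = \sum_{r=1}^{q}\lambda_r(\bK) + \lambda_{q+1}(\bK) \leq \sum_{r=1}^{q}\lambda_r(\bK) + \frac{1}{q}\sum_{r=1}^{q}\lambda_r(\bK) = \left(1+\frac{1}{q}\right)\sum_{r=1}^{q}\lambda_r(\bK).
\end{align}
Finally, since $\sum_{r=1}^{q}\lambda_r(\bK) \leq \rho_q^*$ by definition of $\rho_q^*$ as the supremum of the $q$-Ky Fan norm over $\cK_X$, we conclude $\rho_{q+1}^* \leq (1+1/q)\rho_q^*$, which completes the proof.

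I do not anticipate a genuine obstacle here; the argument is short and elementary. The only point requiring a small amount of care is the handling of whether the supremum in the MCPCA optimization is attained — if $\cK_X$ is not closed (e.g. for continuous variables) one should phrase the argument with an $\eps$-optimal $\bK$ for $\rho_{q+1}^*$ and let $\eps \to 0$, but this is routine and changes nothing substantive. One could also remark that the bound is tight when the top $q+1$ eigenvalues are all equal, which explains why the ratio cannot be improved in general.
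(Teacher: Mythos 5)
Your proof is correct and takes essentially the same elementary route as the paper: the paper sums the $q+1$ leave-one-out inequalities $\sum_{r\in\{1,\dots,q+1\}\setminus\{j\}}\lambda_r(\bK^*)\leq \rho_q^*$ to obtain $q\,\rho_{q+1}^*\leq (q+1)\rho_q^*$, which is algebraically the same as your step of bounding $\lambda_{q+1}(\bK)\leq \frac{1}{q}\sum_{r=1}^{q}\lambda_r(\bK)$ and then using $\sum_{r=1}^{q}\lambda_r(\bK)\leq\rho_q^*$. Your explicit handling of the easy left inequality and of possible non-attainment of the supremum are harmless additions not spelled out in the paper.
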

\begin{proof}
Let $\bK^*$ be an optimal MCPCA solution for $q+1$. Since $\rho_q^*$ is an optimal MCPCA value with parameter $q$, we have
\begin{align}\label{eq:inequality-submodular}
\sum_{r\in\{1,2,...,q+1\}-\{j\}} \lambda_r(\bK^*) \leq \rho_q^*,\quad \forall j\in\{1,...,q+1\}.
\end{align}
By summing \eqref{eq:inequality-submodular} over all $j\in\{1,...,q+1\}$, we have $q \rho_{q+1}^*\leq (q+1) \rho_q^*$. This completes the proof.
\end{proof}
\subsection{Relationship between MCPCA and Matrix Majorization}\label{subsec:majorization}
A vector $\bx=(x_1,x_2,\cdots,x_p)^T\in\mathbb{R}^p$ {\it weakly majorizes} vector $\by=(y_1,y_2,\cdots,y_p)^T$ (in symbols, $\bx\succ_w \by$) if $\sum_{r=1}^{q}x_{[r]}\geq \sum_{r=1}^{q}y_{[r]}$, for all $1\leq q\leq p$. The symbols $x_{[1]}\geq x_{[2]}\geq \cdots \geq x_{[p]}$ stand for the elements of the vector $\bx$ sorted in a decreasing order. If $\bx\succ_w \by$ and $\sum_{r=1}^{p} x_{r}=\sum_{r=1}^{p} y_{r}$, then we say vector $\bx$ {\it majorizes} vector $\by$ and denote it by $\bx\succ \by$.

Let $\bA$ and $\bB$ be two Hermitian matrices in $\mathbb{R}^{p\times p}$. We say $\bA$ majorizes $\bB$ is $\lambda(\bA)\succ \lambda(\bB)$. We have the following equivalent formulation for matrix majorization that we will use in later parts of the paper.
\begin{lemma}\label{lem:unitary-eq}
The following conditions for Hermitian matrices $\bA$ and $\bB$ are equivalent:
\begin{itemize}
\item $\bA\prec \bB$
\item There exist unitary matrices $\bU_j$ and positive numbers $t_j$ such that
\begin{align}
\bA=\sum_{j=1}^{N} t_j \bU_j \bB \bU_j^*,
\end{align}
where $\sum_{j=1}^{N} t_j=1$.
\end{itemize}
\end{lemma}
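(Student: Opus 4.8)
The plan is to prove the equivalence of matrix majorization $\bA \prec \bB$ with the existence of a convex combination of unitary conjugates of $\bB$ equalling $\bA$. This is a classical result (essentially due to a combination of work by Horn, Thompson, and others on doubly stochastic matrices and eigenvalue majorization), so the proof strategy is to reduce to the vector-majorization case and invoke the Birkhoff--von Neumann theorem together with the spectral theorem.

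\textbf{The easy direction.} First I would show that if $\bA = \sum_{j=1}^N t_j \bU_j \bB \bU_j^*$ with $t_j > 0$ and $\sum_j t_j = 1$, then $\bA \prec \bB$. The trace condition is immediate: $Tr(\bA) = \sum_j t_j\, Tr(\bU_j \bB \bU_j^*) = \sum_j t_j\, Tr(\bB) = Tr(\bB)$, so $\sum_r \lambda_r(\bA) = \sum_r \lambda_r(\bB)$. For the weak-majorization inequalities, I would use Ky Fan's maximum principle: for each $1 \le q \le p$, $\sum_{r=1}^q \lambda_r(\bM) = \max_{\bV^*\bV = I_q} Tr(\bV^* \bM \bV)$ over $p \times q$ matrices $\bV$ with orthonormal columns. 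Since each term $\bU_j \bB \bU_j^*$ has the same eigenvalues as $\bB$, we get $\sum_{r=1}^q \lambda_r(\bU_j \bB \bU_j^*) = \sum_{r=1}^q \lambda_r(\bB)$, and then by linearity of trace and the fact that the Ky Fan norm is the maximum over a common feasible set, $\sum_{r=1}^q \lambda_r(\bA) \le \sum_j t_j \sum_{r=1}^q \lambda_r(\bU_j \bB \bU_j^*) = \sum_{r=1}^q \lambda_r(\bB)$. Combined with the trace equality this gives $\lambda(\bA) \prec \lambda(\bB)$.

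\textbf{The harder direction.} For the converse, suppose $\bA \prec \bB$, i.e.\ $\lambda(\bA) \prec \lambda(\bB)$ as vectors. Writing spectral decompositions $\bA = \bP\, diag(\lambda(\bA))\, \bP^*$ and $\bB = \bQ\, diag(\lambda(\bB))\, \bQ^*$ with $\bP, \bQ$ unitary, and absorbing $\bQ^* \bP$ into the sought unitaries, it suffices to realize $diag(\lambda(\bA))$ as a convex combination of unitary conjugates of $diag(\lambda(\bB))$. By the Hardy--Littlewood--P\'olya theorem, vector majorization $\lambda(\bA) \prec \lambda(\bB)$ is equivalent to $\lambda(\bA) = \bS\, \lambda(\bB)$ for some doubly stochastic matrix $\bS$. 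By the Birkhoff--von Neumann theorem, $\bS = \sum_{j=1}^N t_j \bP_{\sigma_j}$ is a convex combination of permutation matrices $\bP_{\sigma_j}$. Now the key observation: for any permutation $\sigma$, $diag(\bP_\sigma\, \by)$ is \emph{not} in general equal to $\bP_\sigma\, diag(\by)\, \bP_\sigma^T$ in a way that recovers $\bA$ — rather, I need that the diagonal entries of $\bP_{\sigma_j} diag(\lambda(\bB)) \bP_{\sigma_j}^T$ are the permuted entries, and then use that a Hermitian matrix with prescribed diagonal and prescribed (majorizing) spectrum relates via an orthostochastic matrix. The cleanest route is: the map $\bU \mapsto diag(\bU\, diag(\lambda(\bB))\, \bU^*)$ sends unitaries to the set of vectors majorized by $\lambda(\bB)$ (Schur--Horn), this set is the convex hull of permutations of $\lambda(\bB)$, each permutation is attained by a permutation matrix (a special unitary), so $\lambda(\bA)$, lying in this convex hull, is a convex combination $\sum_j t_j \bP_{\sigma_j} \lambda(\bB)$; setting $\bU_j := \bP\, \bP_{\sigma_j}\, \bQ^*$ and checking that $\sum_j t_j \bU_j \bB \bU_j^*$ has the correct diagonal in the $\bP$-eigenbasis and is diagonal there (since each $\bU_j \bB \bU_j^* = \bP\, \bP_{\sigma_j} diag(\lambda(\bB)) \bP_{\sigma_j}^T \bP^*$ is diagonal in that basis) yields exactly $\bA$.

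\textbf{Main obstacle.} The subtle point — and where I expect to spend the most care — is the last bookkeeping step: a permutation matrix conjugation $\bP_{\sigma_j} diag(\lambda(\bB)) \bP_{\sigma_j}^T$ is again diagonal, with entries a permutation of $\lambda(\bB)$, so the convex combination is a diagonal matrix whose diagonal is $\sum_j t_j (\text{permuted }\lambda(\bB)) = \bS\, \lambda(\bB) = \lambda(\bA)$; hence in the $\bP$-eigenbasis the combination equals $diag(\lambda(\bA))$, which is precisely $\bA$ conjugated back. One must be careful that this uses only \emph{diagonal} target spectra, which is why reducing both $\bA$ and $\bB$ to their diagonalizations at the outset (and folding the change-of-basis unitaries $\bP, \bQ^*$ into the $\bU_j$) is essential; without that reduction one would need the full Schur--Horn machinery for non-diagonal matrices. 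I would also remark that $N$ can be taken finite (at most $p^2 - p + 1$ by Carath\'eodory applied to the Birkhoff polytope), and that ``positive numbers $t_j$'' is consistent with dropping zero-weight permutations. No new macros are needed beyond those already defined.
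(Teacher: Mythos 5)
Your proof is correct. Note that the paper does not actually prove this lemma; it simply cites Theorem 7.1 of Ando's survey on majorization and doubly stochastic matrices, and your argument is essentially the standard proof of that cited result. The easy direction via the Ky Fan maximum principle (each $\bU_j\bB\bU_j^*$ has the spectrum of $\bB$, the partial-sum functional is a maximum of a linear functional over a common feasible set, and traces are preserved) is exactly right, and the converse via diagonalizing $\bA$ and $\bB$, applying Hardy--Littlewood--P\'olya to get $\lambda(\bA)=\bS\,\lambda(\bB)$ with $\bS$ doubly stochastic, decomposing $\bS=\sum_j t_j \bP_{\sigma_j}$ by Birkhoff--von Neumann, and setting $\bU_j=\bP\,\bP_{\sigma_j}\,\bQ^*$ is the classical route; the bookkeeping you flag as the main obstacle works exactly as you describe, since conjugating $diag(\lambda(\bB))$ by a permutation matrix yields a diagonal matrix whose diagonal is the permuted spectrum, so the convex combination is diagonal with diagonal $\bS\,\lambda(\bB)=\lambda(\bA)$. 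One small remark: the detour through Schur--Horn in your middle paragraph is unnecessary --- your final argument never uses it, only the elementary permutation-conjugation identity --- so you could delete that digression without loss; the Carath\'eodory remark on the size of $N$ is a harmless aside (the sharp bound from the dimension of the Birkhoff polytope is $(p-1)^2+1$, which your stated bound dominates).
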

\begin{proof}
See Theorem 7.1 in \cite{ando1989majorization}.
\end{proof}
The following proposition makes a connection between an optimal MCPCA solution and the majorization of covariance matrices in $\cK_{X}$.
\begin{lemma}\label{prop:connection-major}
If $\bK^*\in \cK_{X}$ majorizes all $\bK\in \cK_{X}$, then $\bK^*$ is an optimal solution of the MCPCA optimization \eqref{opt:MCPCA-main}, for $1\leq q\leq p$.
\end{lemma}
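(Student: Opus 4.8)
The plan is to simply unwind the definitions. The objective of the MCPCA optimization \eqref{opt:MCPCA-main} for parameter $q$ is the $q$-Ky Fan norm $\sum_{r=1}^{q}\lambda_r(\bK)$, and by the convention fixed in Section~\ref{sec:notation} the eigenvalues of any Hermitian $\bK$ are already listed in decreasing order, so $\lambda_r(\bK)=\lambda_{[r]}(\bK)$ with the decreasing-rearrangement notation used in the definition of majorization. Thus the assertion ``$\bK^*$ is optimal for parameter $q$'' is precisely the statement that $\bK^*$ is feasible and $\sum_{r=1}^{q}\lambda_{[r]}(\bK^*)\ge \sum_{r=1}^{q}\lambda_{[r]}(\bK)$ for every $\bK\in\cK_{X}$.

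First I would note feasibility: by hypothesis $\bK^*\in\cK_{X}$, so it is an admissible point of the optimization for every $q$. Next I would recall the definition of matrix majorization stated just above the lemma: $\bK^*\succ\bK$ means $\lambda(\bK^*)\succ\lambda(\bK)$ as vectors, and vector majorization $\succ$ by definition entails weak majorization $\succ_w$, i.e. the partial-sum inequalities $\sum_{r=1}^{q}x_{[r]}\ge\sum_{r=1}^{q}y_{[r]}$ for all $1\le q\le p$. Applying this with $\bx=\lambda(\bK^*)$ and $\by=\lambda(\bK)$ for an arbitrary $\bK\in\cK_{X}$ yields exactly $\sum_{r=1}^{q}\lambda_r(\bK^*)\ge\sum_{r=1}^{q}\lambda_r(\bK)$ for every $q$. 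Combined with the feasibility of $\bK^*$, this shows that $\bK^*$ attains the maximum of the objective for each $1\le q\le p$, which is the claim.

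There is essentially no real obstacle here: the content of the lemma is a direct consequence of the definition of majorization. The only points to be careful about in the write-up are the bookkeeping that the paper's eigenvalue indexing already agrees with the decreasing-rearrangement $[\cdot]$ appearing in the majorization definition, and the observation that, since $\mathrm{Tr}(\bK)=p$ is constant on $\cK_{X}$ (as used in the proof of Theorem~\ref{thm:prop-MCPCA}(i)), majorization and weak majorization coincide on $\cK_{X}$, so the equal-total-sum clause in the definition of $\succ$ is automatically met and only the weak-majorization inequalities are needed.
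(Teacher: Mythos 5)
Your proof is correct and follows essentially the same route as the paper: majorization of $\lambda(\bK^*)$ over $\lambda(\bK)$ gives the partial-sum inequalities $\sum_{r=1}^{q}\lambda_r(\bK^*)\geq \sum_{r=1}^{q}\lambda_r(\bK)$ for every $q$, and feasibility of $\bK^*$ then yields optimality. The extra remarks about eigenvalue ordering and the constant trace on $\cK_{X}$ are fine but not needed beyond what the paper's one-line argument already uses.
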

\begin{proof}
Since $\bK^*\in \cK_{X}$ majorizes all $\bK\in \cK_{X}$, $\sum_{r=1}^{q}\lambda_r(\bK^*)\geq \sum_{r=1}^{q}\lambda_r(\bK)$, for all $1\leq q\leq p$. Thus $\bK^*$ is an optimal solution of optimization \eqref{opt:MCPCA-main}, for $1\leq q\leq p$.
\end{proof}
\subsection{MCPCA as an Optimization over Unit Variance Functions}\label{subsec:alternative-formulations}
The feasible set of optimization \eqref{opt:MCPCA-main} includes functions of variables with zero means and unit variances. In the following we consider an alternative optimization whose feasible set includes functions of variables with unit variances and show the relationship between its optimal solutions with the ones of the MCPCA optimization. This formulation becomes useful in simplifying the MCPCA optimization for finite discrete variables (Section \ref{sec:discrete}).
\begin{lemma}\label{lem:MC-PCA-variance}
Consider the following optimization:
\begin{align}\label{opt:MC-PCA-variance}
\max_{\{\phi_i\}_{i=1}^{p}}\quad &\sum_{r=1}^{q} \lambda_{r}(\bK)\\
&\bK(i,i')=\EE\left[\left(\phi_i(X_i)-\bar{\phi_i}(X_i)\right)\left(\phi_{i'}(X_{i'})-\bar{\phi_{i'}}(X_{i'})\right)\right],\quad 1\leq i,i'\leq p\nonumber\\
&var(\phi_i(X_i))=1,\quad 1\leq i \leq p\nonumber,
\end{align}
where $var(.)$ denotes the variance of a random variables and $\bar{\phi_i}(X_i)=\EE[\phi_i(X_i)]$. Let $\varrho_1$ and $\varrho_2$ be optimal values of objective functions of optimizations \eqref{opt:MCPCA-main} and \eqref{opt:MC-PCA-variance}, respectively. We have $\varrho_1=\varrho_2$. Moreover if $\{\phi_i^{**}\}_{i=1}^{p}$ is an optimal solution of optimization \eqref{opt:MC-PCA-variance}, then $\{\phi_i^{*}\}_{i=1}^{p}$ is an optimal solution of optimization \eqref{opt:MCPCA-main}, where $\phi_i^{*}(X_i)=\phi_i^{**}(X_i)-\bar{\phi_i}^{**}(X_i)$, and vice versa.
\end{lemma}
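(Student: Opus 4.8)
The plan is to show the two optimizations have the same value and that their optimizers are related by a simple centering operation. The key observation is that subtracting the mean from a function does not change the covariance matrix it induces: for any function $\phi_i$ with $\text{var}(\phi_i(X_i)) = 1$, if we set $\psi_i(X_i) = \phi_i(X_i) - \bar{\phi_i}(X_i)$ then $\psi_i$ has zero mean, unit variance, and $\EE[(\phi_i(X_i) - \bar{\phi_i}(X_i))(\phi_{i'}(X_{i'}) - \bar{\phi_{i'}}(X_{i'}))] = \EE[\psi_i(X_i)\psi_{i'}(X_{i'})]$. So the matrix $\bK$ appearing in \eqref{opt:MC-PCA-variance} is exactly the matrix $\bK_{\psi(X)} \in \cK_X$ associated to the centered functions.

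First I would prove $\varrho_2 \leq \varrho_1$: given any feasible point $\{\phi_i\}_{i=1}^p$ of \eqref{opt:MC-PCA-variance}, form the centered functions $\psi_i(X_i) = \phi_i(X_i) - \bar{\phi_i}(X_i)$. These satisfy $\EE[\psi_i(X_i)] = 0$ and $\EE[\psi_i(X_i)^2] = \text{var}(\phi_i(X_i)) = 1$, so $\bK_{\psi(X)} \in \cK_X$, and by the computation above $\bK_{\psi(X)}$ equals the matrix $\bK$ from \eqref{opt:MC-PCA-variance} for this point, hence has the same top-$q$ eigenvalue sum. Thus every achievable objective value of \eqref{opt:MC-PCA-variance} is an achievable value of \eqref{opt:MCPCA-main}, giving $\varrho_2 \leq \varrho_1$. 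Conversely, $\varrho_1 \leq \varrho_2$ because the feasible set of \eqref{opt:MCPCA-main} (zero mean, unit variance functions) is a subset of the feasible set of \eqref{opt:MC-PCA-variance} (unit variance functions): a zero-mean function is trivially centered, so for it the matrix $\bK$ in \eqref{opt:MC-PCA-variance} coincides with $\bK_{\phi(X)}$. Combining the two inequalities gives $\varrho_1 = \varrho_2$.

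For the optimizer correspondence, I would run the same argument at the level of optimal points. If $\{\phi_i^{**}\}_{i=1}^p$ is optimal for \eqref{opt:MC-PCA-variance} achieving $\varrho_2$, then the centered $\phi_i^*(X_i) = \phi_i^{**}(X_i) - \bar{\phi_i}^{**}(X_i)$ are feasible for \eqref{opt:MCPCA-main} and induce the same covariance matrix, hence achieve $\varrho_2 = \varrho_1$, so they are optimal for \eqref{opt:MCPCA-main}. For the reverse direction, an optimal $\{\phi_i^*\}_{i=1}^p$ of \eqref{opt:MCPCA-main} is already zero-mean and unit-variance, so it is feasible for \eqref{opt:MC-PCA-variance}, where it induces the same matrix $\bK_{\phi^*(X)}$ and achieves $\varrho_1 = \varrho_2$, hence is optimal there; note in this direction the centering is a no-op since $\bar{\phi_i}^*(X_i) = 0$.

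The proof is essentially a bookkeeping argument and I do not anticipate a serious obstacle; the only point requiring a moment's care is verifying that centering genuinely leaves the covariance matrix unchanged and that the unit-variance constraint is preserved under centering, which is immediate from $\EE[(\phi_i - \bar{\phi_i})^2] = \EE[\phi_i^2] - \bar{\phi_i}^2 = \text{var}(\phi_i)$. One should also remark that the two feasible sets are genuinely different — \eqref{opt:MC-PCA-variance} allows nonzero-mean functions — but the map $\phi_i \mapsto \phi_i - \bar{\phi_i}$ is a surjection from the larger feasible set onto the smaller one that preserves the objective, which is exactly what makes the values coincide.
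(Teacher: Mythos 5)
Your proof is correct, and it reaches the result by a more direct route than the paper. The paper first passes through a trace reformulation of the Ky Fan norm (its Lemma on the $\max_{\bW} Tr(\bW\bK)$ characterization with $Tr(\bW)=q$, $0\preceq\bW\preceq I$), rewrites both optimizations in that trace form, and then argues feasibility correspondences between the two trace problems; the conclusion rests on the same observation you isolate, namely that centering $\phi_i\mapsto\phi_i-\bar{\phi_i}$ preserves the induced covariance matrix and turns a unit-variance function into a zero-mean, unit-second-moment one, while zero-mean feasible points of \eqref{opt:MCPCA-main} are automatically feasible for \eqref{opt:MC-PCA-variance} with the same matrix. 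You work directly with $\sum_{r=1}^q\lambda_r(\bK)$, noting that identical covariance matrices have identical Ky Fan norms, which makes the trace machinery unnecessary for this particular lemma (the paper needs it anyway for later algorithmic developments, so routing through it costs little there). Your version also states the two inequality directions cleanly, whereas the paper's write-up concludes ``$\varrho_1\leq\varrho_2$'' for both directions (the second is evidently a typo for $\varrho_2\leq\varrho_1$), and you handle the optimizer correspondence in both directions explicitly, including the observation that centering is a no-op on an optimizer of \eqref{opt:MCPCA-main}. No gaps.
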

\begin{proof}
First we have the following lemma:
\begin{lemma}\label{thm:MC-PCA-trace}
Let $(\bK^*,\bW^*)$ be an optimal solution of the following optimization:
\begin{align}\label{opt:MC-PCA-trace}
\max_{\bW, \{\phi_i\}_{i=1}^{p}}\quad &Tr(\bW \bK)\\
& Tr(\bW)=q,\nonumber\\
& 0 \preceq \bW \preceq I, \nonumber\\
&\bK(i,i')=\EE[\phi_i(X_i)\phi_{i'}(X_{i'})],\quad 1\leq i,i'\leq p\nonumber\\
&\EE[\phi_i(X_i)^2]=1,\quad 1\leq i \leq p\nonumber\\
&\EE[\phi_i(X_i)]=0,\quad 1\leq i \leq p.\nonumber
\end{align}
Then $\bK^*$ is an optimal solution of optimization \eqref{opt:MCPCA-main} and $\sum_{r=1}^{q}\lambda_r(\bK^*)=Tr(\bW^* \bK^*)$.
\end{lemma}
\begin{proof}
The proof follows from the fact that the $q$ Ky Fan norm of a matrix $\bK$ is the solution of the following optimization \cite{boyd2004convex}:
\begin{align}
\max_{\bW}\quad &Tr(\bW \bK)\\
& Tr(\bW)=q,\nonumber\\
& 0 \preceq \bW \preceq I. \nonumber
\end{align}
\end{proof}
Consider the trace formulation of optimizations \eqref{opt:MCPCA-main} and \eqref{opt:MC-PCA-variance} according to Lemma \ref{thm:MC-PCA-trace}:

\begin{subequations}
\begin{align}
\max_{\bW, \{\phi_i\}_{i=1}^{p}}\quad &\sum_{i,i'} w_{i,i'} \EE[\phi_i(X_i)\phi_{i'}(X_{i'})]\label{opt:trace1}\\
& Tr(\bW)=q,\nonumber\\
& 0 \preceq \bW \preceq I\nonumber\\
&\EE[\phi_i(X_i)^2]=1,\quad 1\leq i \leq p\nonumber\\
&\EE[\phi_i(X_i)]=0,\quad 1\leq i \leq p,\nonumber\\\nonumber\\
\max_{\bW, \{\phi_i\}_{i=1}^{p}}\quad &\sum_{i,i'} w_{i,i'} \EE\left[\left(\phi_i(X_i)-\bar{\phi_i}(X_i)\right)\left(\phi_{i'}(X_{i'}-\bar{\phi_{i'}}(X_{i'})\right)\right]\label{opt:trace2}\\
& Tr(\bW)=q,\nonumber\\
& 0 \preceq \bW \preceq I\nonumber\\
&var(\phi_i(X_i))=1,\quad 1\leq i \leq p.\nonumber
\end{align}
\end{subequations}
Let $\phi_i^*$ and $\bW^*$ be an optimal solution of \eqref{opt:trace1}. The set of functions $\{\phi_i^*\}_{i=1}^{p}$ and $\bW^*$ is feasible for optimization \eqref{opt:trace2}. Thus, $\varrho_1\leq \varrho_2$. Moreover, let $\phi_i^{**}$ and $\bW^*$ be an optimal solution of optimization \eqref{opt:trace2}. Let $\tilde{\phi}_i= \phi_i^{**}- \bar{\phi_i}^{**}$. The set of functions $\{\tilde{\phi}_i\}_{i=1}^{p}$ and $\bW^*$ is feasible for optimization \eqref{opt:trace1}. Thus, we have $\varrho_1\leq \varrho_2$. Therefore, we have that $\varrho_1= \varrho_2$. This completes the proof.
\end{proof}

\section{MCPCA for Jointly Gaussian Random Variables}\label{sec:gaussian}
\subsection{Problem Formulation}\label{subsec:formulation-gauss}
Let $(X_1,\cdots, X_p)$ be zero mean unit variance jointly Gaussian random variables with the covariance matrix $\bK_X$. Thus $\bK_X(i,i')=\rho_{i,i'}$ where $\rho_{i,i'}$ is the correlation coefficient between variables $X_i$ and $X_{i'}$. Let $|\rho_{i,i'}|< 1$ for $i\neq i'$. A sign vector $\bs=(s_1,s_2,\cdots,s_p)^T$ is a vector in $\mathbb{R}^{p}$ where $s_i\in\{-1,1\}$ for $1\leq i\leq p$.

Let $h_{j}(.)$ be the $j$-th Hermite-Chebyshev polynomial for $j\geq 0$. These polynomials form an orthonormal basis with respect to the Gaussian distribution \cite{lancaster1957some}:
\begin{align}
\mathbb{E}[h_{j}(X_i)\ h_{j'}(X_{i'})]= (\rho_{i,i'})^{j} \mathbf{1}\{j=j'\}.\nonumber
\end{align}
Moreover, because Hermite-Chebyshev polynomials have zero means over a Gaussian distribution we have
\begin{align}
\mathbb{E}[h_{j}(X_i)]= \mathbf{1}\{j=0\}, ~~ 1\leq i\leq p.
\end{align}
Using a basis expansion approach similar to \cite{feizi2015network} we have
\begin{align}\label{eq:phi-decompos-gauss}
\phi_i(X_i) = \sum_{j=1}^{\infty} a_{i,j}\ h_{j} (X_i),
\end{align}
where $\ba_{i}=(a_{i,1},a_{i,2},\dots)^T$ is the vector of projection coefficients. The constraint $\EE[\phi_i(X_i)^2]=1$ translates to $||\ba_{i}|| =1$ while the constraint $\EE[\phi_i(X_i)]=0$ is simplified to $a_{i,0}=0$ for $1\leq i \leq p$. We also have
\begin{align}\label{eq:gauss-hilbert-covariance}
\bK_{(\phi_1(X_1),...,\phi_p(X_p))}(i,i')= \sum_{j=1}^{\infty} a_{i,j} a_{i',j} (\rho_{i,i'})^j.
\end{align}
Thus the MCPCA optimization \eqref{opt:MCPCA-main} can be re-written as follows:
\begin{align}\label{opt:MCPCA-gauss}
\max_{\bK}\quad & \sum_{r=1}^{q} \lambda_r(\bK)\\
& \bK(i,i')=\sum_{j=1}^{\infty} a_{i,j} a_{i',j} (\rho_{i,i'})^j, ~~ 1\leq i,i'\leq p,\nonumber\\
& \|\ba_i\|_{2}=1, ~~ 1\leq i \leq p.\nonumber
\end{align}
Since $|\rho_{i,i'}|<1$ for $i\neq i'$, $(\rho_{i,i'})^j\to 0$ as $j\to\infty$. Thus we can approximate optimization \eqref{opt:MCPCA-gauss} with the following optimization
\begin{align}\label{opt:MCPCA-gauss-apx}
\max_{\bK}\quad & \sum_{r=1}^{q} \lambda_r(\bK)\\
& \bK(i,i')=\sum_{j=1}^{N} a_{i,j} a_{i',j} (\rho_{i,i'})^j, ~~ 1\leq i,i'\leq p,\nonumber\\
& \|\ba_i\|_{2}=1, ~~ 1\leq i \leq p,\nonumber
\end{align}
for sufficiently large $N$.
\begin{lemma}
Let $\rho_q^*$ and $\tilde{\rho}_q^*$ be optimal values of optimizations \eqref{opt:MCPCA-gauss} and \eqref{opt:MCPCA-gauss-apx}, respectively. For a given $\eps>0$, there exists $N_0$ such that if $N>N_0$ we have $|\rho_q^*-\tilde{\rho}_q^*|<\eps$.
\end{lemma}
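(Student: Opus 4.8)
The plan is to sandwich $\tilde{\rho}_q^*$ between $\rho_q^*-\eps$ and $\rho_q^*$, using (a) a trivial zero-padding embedding and (b) the fact that the objective $g(\bK):=\sum_{r=1}^{q}\lambda_r(\bK)$ is Lipschitz in $\bK$, together with a geometric tail estimate on the Hermite expansions. For the Lipschitz property I would invoke the variational characterization of the $q$-Ky Fan norm used in Lemma~\ref{thm:MC-PCA-trace}, $g(\bK)=\max\{Tr(\bW\bK):Tr(\bW)=q,\ 0\preceq\bW\preceq I\}$: writing any feasible $\bW$ as $\sum_k w_k\bv_k\bv_k^T$ with $0\le w_k\le 1$ and $\sum_k w_k=q$ gives $|Tr(\bW\bC)|\le\sum_k w_k|\bv_k^T\bC\bv_k|\le q\|\bC\|$, and hence, by evaluating the maximum defining $g(\bA)$ at the optimal $\bW$ for $\bA$ (and symmetrically), $|g(\bA)-g(\bB)|\le q\|\bA-\bB\|\le qp\,\max_{i,i'}|\bA(i,i')-\bB(i,i')|$ for Hermitian $\bA,\bB\in\RR^{p\times p}$.

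The inequality $\tilde{\rho}_q^*\le\rho_q^*$ is immediate: given any feasible point $\{\ba_i\}_{i=1}^{p}$ of \eqref{opt:MCPCA-gauss-apx} with $\ba_i\in\RR^N$, extend each $\ba_i$ to an infinite sequence by appending zeros; the $\ell_2$ norms and the resulting matrix $\bK$ are unchanged, so this is feasible for \eqref{opt:MCPCA-gauss} with the same objective value. The same padding shows that $\tilde{\rho}_q^*$ is non-decreasing in $N$, so it suffices to exhibit a single $N_0$ with $\tilde{\rho}_q^*\ge\rho_q^*-\eps$ at $N=N_0$.

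For that direction, fix $\eps>0$ and pick feasible coefficients $\{\ba_i^*\}_{i=1}^{p}$ of \eqref{opt:MCPCA-gauss} whose objective exceeds $\rho_q^*-\eps/2$ (possible since $\rho_q^*$ is a supremum and, by Theorem~\ref{thm:prop-MCPCA}(i), finite); let $\bK^*$ be the corresponding matrix. Set $c_i(N):=\big(\sum_{j=1}^{N}(a_{i,j}^*)^2\big)^{1/2}$ and $\tilde{\ba}_i:=(a_{i,1}^*,\dots,a_{i,N}^*)^T/c_i(N)$. Since $c_i(N)^2\uparrow\|\ba_i^*\|_2^2=1$ and there are only finitely many indices $i$, for all $N$ beyond some $N_1$ every $c_i(N)$ is well-defined and bounded away from $0$, so $\{\tilde{\ba}_i\}$ is feasible for \eqref{opt:MCPCA-gauss-apx}; call its matrix $\tilde{\bK}$. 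All diagonal entries of $\bK^*$ and $\tilde{\bK}$ equal $1$ (using $\rho_{i,i}=1$ and unit norms), so they agree; for $i\neq i'$, writing $\rho_{\max}:=\max_{i\neq i'}|\rho_{i,i'}|<1$,
\[
\bK^*(i,i')-\tilde{\bK}(i,i')=\Big(1-\frac{1}{c_i(N)c_{i'}(N)}\Big)\sum_{j=1}^{N} a_{i,j}^*a_{i',j}^*\rho_{i,i'}^j+\sum_{j=N+1}^{\infty} a_{i,j}^*a_{i',j}^*\rho_{i,i'}^j .
\]
By Cauchy--Schwarz the first sum has absolute value at most $\|\ba_i^*\|_2\|\ba_{i'}^*\|_2=1$ while $1-1/(c_i(N)c_{i'}(N))\to 0$, and the tail is at most $\rho_{\max}^{N+1}\sum_{j>N}|a_{i,j}^*||a_{i',j}^*|\le\rho_{\max}^{N+1}\to 0$ (Cauchy--Schwarz again). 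Hence $\max_{i,i'}|\bK^*(i,i')-\tilde{\bK}(i,i')|\to 0$ as $N\to\infty$, so by the Lipschitz bound above we may pick $N_0\ge N_1$ so that for $N\ge N_0$, $g(\tilde{\bK})\ge g(\bK^*)-\eps/2>\rho_q^*-\eps$; since $\tilde{\bK}$ is feasible for \eqref{opt:MCPCA-gauss-apx}, $\tilde{\rho}_q^*>\rho_q^*-\eps$, and combined with $\tilde{\rho}_q^*\le\rho_q^*$ this yields $|\rho_q^*-\tilde{\rho}_q^*|<\eps$ for all $N>N_0$. The only step requiring genuine care is the bookkeeping around the renormalization constants $c_i(N)$ (and the induced change in the off-diagonal entries); the remainder is a geometric-tail estimate plus continuity of eigenvalues, so I do not expect a real obstacle.
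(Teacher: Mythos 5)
Your argument is correct and is essentially the paper's own (one-line) proof spelled out in full: the paper simply invokes continuity of the Ky Fan norm in the matrix entries together with $(\rho_{i,i'})^j\to 0$, and your Lipschitz bound via the variational characterization plus the geometric tail estimate and the truncation/renormalization bookkeeping is exactly the missing detail behind that remark. The zero-padding direction and the monotonicity of $\tilde{\rho}_q^*$ in $N$ are also handled correctly, so no gaps remain.
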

\begin{proof}
The proof follows from the fact that the Ky Fan norm of a matrix is a continuous function of its elements and also $(\rho_{i,i'})^j\to 0$ as $j\to\infty$.
\end{proof}
For the bivariate case ($p=2$), the MCPCA optimization simplifies to the maximum correlation optimization \eqref{eq:mc}. For jointly Gaussian variables the maximum correlation optimization \eqref{eq:mc} results in global optimizers $\phi_i^*(X_i)=s_i X_i$ for $i=1,2$ \cite{lancaster1957some}. Sign variables $s_i$'s are chosen so that the correlation between $s_1 X_1$ and $s_2 X_2$ is positive. This can be immediately seen from the formulation \eqref{opt:MCPCA-gauss-apx} as well: maximizing the off-diagonal entry of a $2\times 2$ covariance matrix maximizes its top eigenvalue. For the bivariate case the global optimizer of optimization \eqref{opt:MCPCA-gauss} is $a_i^*=(\pm 1,0,0,...)$ for $i=1,2$ since $|\rho_{1,2}|> (\rho_{1,2})^j$ for $j\geq 2$. Using \eqref{eq:phi-decompos-gauss} and since $h_1(.)$ is the identity function, we obtain $\phi_i^*(X_i)=s_i X_i$ for $i=1,2$.

Let $\cK_{ext}$ be the set of covariance matrices of variables $s_i X_i$ where $s_i=\pm 1$ for $1\leq i\leq p$. In the bivariate case we have
\begin{align}
\cK_{ext}=\left\{\left( \begin{array}{cc}
1 & \rho_{1,2} \\
\rho_{1,2} & 1\end{array} \right),\left( \begin{array}{cc}
1 & -\rho_{1,2} \\
-\rho_{1,2} & 1\end{array} \right)\right\}.
\end{align}
Note that covariance matrices in $\cK_{ext}$ have similar eigenvalues. Moreover in the bivariate case every covariance matrix $\bK_{(\phi_1(X_1),\phi_2(X_2))}$ can be written as a convex combination of covariance matrices in $\cK_{ext}$. Thus, it is majorized by covariance matrices in $\cK_{ext}$ (Lemma \ref{lem:unitary-eq}). However in the multivariate case we may have covariance matrices that are not in the convex hull of $\cK_{ext}$. To illustrate this, let $p=3$ and consider
\begin{align}
\cK_{ext}=\left\{\left( \begin{array}{ccc}
1 & 0.9&0.9 \\
0.9 & 1&0.7  \\
0.9 & 0.7&1\end{array} \right),\left( \begin{array}{ccc}
1 & -0.9&-0.9 \\
-0.9 & 1&0.7  \\
-0.9 & 0.7&1\end{array} \right),\left( \begin{array}{ccc}
1 & -0.9&0.9 \\
-0.9 & 1&-0.7  \\
0.9 & -0.7&1\end{array} \right),\left( \begin{array}{ccc}
1 & 0.9&-0.9 \\
0.9 & 1&-0.7  \\
-0.9 & -0.7&1\end{array} \right)\right\}.
\end{align}
One can show that the covariance matrix
\begin{align}
\left( \begin{array}{ccc}
1 & 0.9^2&0.9^2 \\
0.9^2 & 1&0.7^2  \\
0.9^2 & 0.7^2&1\end{array} \right)
\end{align}
is not included in the convex hull of covariance matrices in $\cK_{ext}$. This covariance matrix results from having $a_{i,2}=1$ for $1\leq i\leq 3$. Thus techniques used to characterize global optimizers of the bivariate case may not extend to the multivariate case.
\subsection{Global MCPCA Optimizers}
Here we characterize global optimizers of optimization \eqref{opt:MCPCA-gauss}. Our main result is as follows:

\begin{theorem}\label{thm:gauss-major}
$\bK_X$ majorizes every $\bK\in\cK_X$.
\end{theorem}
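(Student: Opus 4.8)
The plan is to exhibit the covariance matrix $\bK=\bK_{\phi(X)}$ of any admissible transformations as the image of $\bK_X$ under a \emph{doubly stochastic} positive linear map on $p\times p$ Hermitian matrices, i.e.\ a positive, unital and trace preserving map $\Psi$ with $\Psi(\bK_X)=\bK$. Granting this, the theorem follows from the classical fact that such a map can only move a matrix down in the majorization order, $\lambda(\Psi(\bK_X))\prec\lambda(\bK_X)$, which is exactly the assertion that $\bK_X$ majorizes $\bK$. Combined with Lemma~\ref{prop:connection-major} this would simultaneously identify $\bK_X$ (equivalently the identity transformations $\phi_i(X_i)=X_i$ and their sign flips) as a global MCPCA optimizer for every $q$.

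To construct $\Psi$, I would start from the Hermite--Chebyshev expansion \eqref{eq:phi-decompos-gauss}--\eqref{eq:gauss-hilbert-covariance}. Write $\bD_j:=\operatorname{diag}(a_{1,j},\dots,a_{p,j})$ for $j\ge 1$, let $\circ$ denote the Hadamard product, and let $\bK_X^{\circ j}$ be the $j$-th Hadamard power of $\bK_X$, with the convention $\bK_X^{\circ 0}=1_p$. By the Schur product theorem every $\bK_X^{\circ(j-1)}$ ($j\ge 1$) is positive semidefinite, and since $\bK_X$ has unit diagonal so does each $\bK_X^{\circ(j-1)}$; thus each is a correlation matrix. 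Equation \eqref{eq:gauss-hilbert-covariance} reads $\bK=\sum_{j\ge1}\bD_j\,\bK_X^{\circ j}\,\bD_j=\sum_{j\ge1}\bD_j\big(\bK_X\circ\bK_X^{\circ(j-1)}\big)\bD_j$, which suggests defining
\begin{align}
\Psi(\bA):=\sum_{j\ge1}\bD_j\big(\bA\circ\bK_X^{\circ(j-1)}\big)\bD_j ,
\end{align}
so that $\Psi(\bK_X)=\bK$ by construction. I would then check three properties: (i) $\Psi$ is completely positive, hence positive, since $\bA\mapsto\bA\circ\bB$ is completely positive for $\bB\succeq0$ and $\bA\mapsto\bD_j\bA\bD_j$ is completely positive, while the series converges in norm because $\|\bD_j(\bA\circ\bK_X^{\circ(j-1)})\bD_j\|\le\|\bA\|\sum_i a_{i,j}^2$ and $\sum_j\sum_i a_{i,j}^2=p$; (ii) $\Psi$ is unital, because $I\circ\bK_X^{\circ(j-1)}=I$ (unit diagonal) gives $\Psi(I)=\sum_j\bD_j^2$, which equals $I$ exactly by the unit-variance constraints $\|\ba_i\|_2=1$; (iii) $\Psi$ is trace preserving, because $Tr\big(\bD_j(\bA\circ\bK_X^{\circ(j-1)})\bD_j\big)=\sum_i A_{ii}\,a_{i,j}^2$ (again using the unit diagonal of $\bK_X^{\circ(j-1)}$), and summing over $j$ yields $\sum_i A_{ii}=Tr(\bA)$.

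The remaining ingredient is that a positive, unital, trace preserving map satisfies $\lambda(\Psi(\bA))\prec\lambda(\bA)$ for every Hermitian $\bA$; applied to $\bA=\bK_X$ this finishes the proof. This is classical (it belongs to the same circle of ideas as Lemma~\ref{lem:unitary-eq}), and if a self-contained argument is wanted I would use the Ky Fan variational formula underlying Lemma~\ref{thm:MC-PCA-trace}: for $1\le k\le p$,
\begin{align}
\sum_{r=1}^{k}\lambda_r(\Psi(\bA))=\max\big\{Tr(\bW\Psi(\bA)):\ 0\preceq\bW\preceq I,\ Tr(\bW)=k\big\},
\end{align}
and for a maximizing $\bW$ one writes $Tr(\bW\Psi(\bA))=Tr(\Psi^{*}(\bW)\bA)$, where $\Psi^{*}$ is the trace-adjoint of $\Psi$; since $\Psi$ is positive, unital and trace preserving, $\Psi^{*}$ is positive, trace preserving and unital, so $\bV:=\Psi^{*}(\bW)$ again satisfies $0\preceq\bV\preceq I$ and $Tr(\bV)=k$, whence $Tr(\bV\bA)\le\sum_{r=1}^{k}\lambda_r(\bA)$; equality at $k=p$ comes from trace preservation, giving the majorization. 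The main obstacle is the structural step, not the verifications: one must notice that, although $\bK$ need not lie in the convex hull of the sign-flipped matrices $\cK_{ext}$ (as the $p=3$ example above shows), the combined operation of taking Hadamard powers of $\bK_X$ and re-weighting by the coordinate blocks $\bD_j$ is \emph{exactly} one doubly stochastic map, with the unit-variance normalization $\sum_{j}\bD_j^2=I$ being precisely what makes it unital and trace preserving. A minor technical point is convergence of the infinite sum defining $\Psi$; alternatively one can first prove the statement for the truncated optimization \eqref{opt:MCPCA-gauss-apx}, where $\sum_{j=1}^{N}\bD_j^2=I$ holds exactly, and then pass to the limit using continuity of the Ky Fan norms.
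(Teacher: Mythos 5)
Your proposal is correct, and for the key step it takes a genuinely different route from the paper. Both arguments start from the same structural identity: the Hermite--Chebyshev expansion gives $\bK=\sum_j \bA_j\,\bK_X^{\odot j}\,\bA_j$ with diagonal $\bA_j$ and $\sum_j\bA_j^2=\bI$ coming from the unit-variance constraints. The paper then proves majorization by expanding each Hadamard power as $\bK_X^{\odot j}=\sum_k\bE_k\bK_X\bE_k$ with $\sum_k\bE_k^2=\bI$ (Lemma \ref{lem:A-decompos-diag}, via a Gram factorization of $\bK_X$), so that $\bK=\sum_j\bF_j\bK_X\bF_j$ with $\sum_j\bF_j^2=\bI$, and finally invokes Theorem 1 of \cite{bapat1985majorization} (Lemma \ref{lem:major-A-B}). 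You instead keep the Hadamard powers intact, use the Schur product theorem to see that each summand $\bA\mapsto\bD_j\bigl(\bA\circ\bK_X^{\circ(j-1)}\bigr)\bD_j$ is positive, and check that the normalization $\sum_j\bD_j^2=\bI$ together with the unit diagonals of the correlation matrices makes the total map $\Psi$ unital and trace preserving with $\Psi(\bK_X)=\bK$; the majorization then follows from the doubly-stochastic-map principle, which you prove self-containedly via the Ky Fan variational formula and the adjoint $\Psi^*$ (your claim that $\Psi^*$ inherits positivity, unitality and trace preservation, hence maps the feasible set $\{0\preceq\bW\preceq I,\ Tr(\bW)=k\}$ into itself, is correct). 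The two proofs belong to the same circle of ideas --- the Bapat--Sunder theorem the paper cites is itself a statement about such maps, in the spirit of Lemma \ref{lem:unitary-eq} --- but yours buys self-containedness and avoids the expansion of $\bK_X^{\odot j}$ into $p^{j-1}$ diagonal congruences, while the paper's buys brevity by outsourcing the majorization step to the cited theorem. Your handling of the infinite series (operator-norm convergence from $\|\bA\circ\bK_X^{\circ(j-1)}\|\le\|\bA\|$ and $\sum_j\sum_i a_{i,j}^2=p$, or alternatively truncation as in \eqref{opt:MCPCA-gauss-apx} followed by continuity of Ky Fan norms) is if anything more careful than the paper's, which simply works with a ``sufficiently large $N$.''
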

This Theorem along with Lemma \ref{prop:connection-major} results in the following corollary.

\begin{corollary}
$\phi_i(X_i)=s_i X_i$ where $s_i\pm 1$ for $1\leq i\leq p$ provides a globally optimal solution for the MCPCA optimization \eqref{opt:MCPCA-gauss} for $1\leq q\leq p$.
\end{corollary}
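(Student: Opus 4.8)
The plan is to prove the stronger fact that \emph{every} $\bK\in\cK_X$ is a Hadamard (entrywise) product $\bC\circ\bK_X$ with $\bC$ a correlation matrix, and that Hadamard multiplication by a correlation matrix can only move the spectrum down in the majorization order — a mild generalization of the Schur--Horn majorization $diag(\bA)\prec\lambda(\bA)$. Granting this, $\bK_{\phi(X)}=\bC\circ\bK_X\prec\bK_X$ for every admissible $\phi$, which is exactly the assertion; and since the identity maps $\phi_i(X_i)=X_i$ realize $\bK_X$ itself we also get $\bK_X\in\cK_X$, so that the subsequent corollary follows via Lemma~\ref{prop:connection-major}.

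First I would set up the factorization. Given Borel $\phi_i$ with zero mean and unit variance, expand $\phi_i(X_i)=\sum_{j\ge1}a_{i,j}h_j(X_i)$ with $\|\ba_i\|=1$ as in \eqref{eq:phi-decompos-gauss}, so by \eqref{eq:gauss-hilbert-covariance} $\bK_{\phi(X)}(i,i')=\sum_{j\ge1}a_{i,j}a_{i',j}\rho_{i,i'}^j$. Pulling out one factor $\rho_{i,i'}$, set $c_{i,i'}:=\sum_{j\ge1}a_{i,j}a_{i',j}\rho_{i,i'}^{j-1}$ (absolutely convergent since $|\rho_{i,i'}|<1$ and $\|\ba_i\|=1$, and $c_{i,i}=\|\ba_i\|^2=1$); then $\bK_{\phi(X)}=\bC\circ\bK_X$ with $\bC=(c_{i,i'})$. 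The key observation is that the leftover $\bC$ is itself a correlation matrix: using $\EE[h_k(X_i)h_k(X_{i'})]=\rho_{i,i'}^k$ for all $k\ge0$, the shifted function $\psi_i(X_i):=\sum_{j\ge1}a_{i,j}h_{j-1}(X_i)$ satisfies $\EE[\psi_i(X_i)^2]=\|\ba_i\|^2=1$ and $\EE[\psi_i(X_i)\psi_{i'}(X_{i'})]=c_{i,i'}$, so $\bC$ is the Gram matrix of the unit vectors $\psi_i(X_i)$ and hence positive semidefinite with unit diagonal.

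Next I would prove the Schur--Horn--type lemma: for any correlation matrix $\bC$ and any symmetric $\bB$, $\bC\circ\bB\prec\bB$. Write $\bC=\sum_r\mu_r\bv_r\bv_r^T$ with $\mu_r\ge0$; the unit-diagonal condition reads $\sum_r\mu_r\bD_r^2=I$ for $\bD_r:=diag(\bv_r)$, and $\bC\circ\bB=\sum_r\mu_r\bD_r\bB\bD_r$. For $1\le q\le p$, by the variational formula for the sum of the $q$ largest eigenvalues (the one used in the proof of Lemma~\ref{thm:MC-PCA-trace}),
\[ \sum_{r=1}^q\lambda_r(\bC\circ\bB)=\max_{0\preceq\bW\preceq I,\ Tr(\bW)=q}Tr\!\Big(\big(\textstyle\sum_r\mu_r\bD_r\bW\bD_r\big)\bB\Big), \]
and $\bW':=\sum_r\mu_r\bD_r\bW\bD_r$ is feasible for the analogous problem for $\bB$: it is positive semidefinite, $\bW'\preceq\sum_r\mu_r\bD_r^2=I$, and $Tr(\bW')=Tr(\bW)=q$, so the right-hand side is at most $\sum_{r=1}^q\lambda_r(\bB)$. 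Since moreover $Tr(\bC\circ\bB)=\sum_iC_{ii}B_{ii}=Tr(\bB)$, we obtain $\lambda(\bC\circ\bB)\prec\lambda(\bB)$. Combining this with the previous paragraph finishes the proof.

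I expect the main obstacle to be locating this factorization: it is not obvious a priori that removing one power of $\rho_{i,i'}$ from $\bK_{\phi(X)}$ leaves behind a positive semidefinite matrix, and the shifted-Hermite identity $\sum_j a_{i,j}a_{i',j}\rho_{i,i'}^{j-1}=\EE[\psi_i(X_i)\psi_{i'}(X_{i'})]$ is exactly what makes it so. The other delicate point is inside the lemma: one must use the full operator inequality $\bW'\preceq I$ and not merely $Tr(\bW')=q$, since splitting the Ky Fan budget $q$ separately across the rank-one pieces $\mu_r\bv_r\bv_r^T$ of $\bC$ would be too lossy (the per-piece bounds are superadditive).
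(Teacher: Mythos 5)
Your argument is correct, and it reaches the key fact (Theorem \ref{thm:gauss-major}, that $\bK_X$ majorizes every $\bK\in\cK_X$, hence the corollary via Lemma \ref{prop:connection-major}) by a genuinely different route than the paper. The paper works at the level of Hadamard powers: it writes each $\bK_X^{\odot j}$ as $\sum_k \bE_k\bK_X\bE_k$ with $\sum_k\bE_k^2=\bI$ (Lemma \ref{lem:A-decompos-diag}), assembles $\bK_{\phi(X)}=\sum_j\bA_j\bK_X^{\odot j}\bA_j$ from the Hermite expansion, and then invokes Theorem 1 of \cite{bapat1985majorization} as a black box to conclude $\lambda(\bK_X)\succ\lambda(\bK_{\phi(X)})$ (Lemma \ref{lem:major-A-B}); this step is carried out after truncating the Hermite series at a finite $N$. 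You instead factor a single power of $\rho_{i,i'}$ out of $\bK_{\phi(X)}(i,i')=\sum_j a_{i,j}a_{i',j}\rho_{i,i'}^j$ and recognize the cofactor $c_{i,i'}=\sum_j a_{i,j}a_{i',j}\rho_{i,i'}^{j-1}$ as the Gram matrix of the index-shifted expansions $\psi_i=\sum_j a_{i,j}h_{j-1}$, so that $\bK_{\phi(X)}=\bC\circ\bK_X$ with $\bC$ a bona fide correlation matrix; you then prove the Schur-product majorization $\bC\circ\bB\prec\bB$ from scratch via the Ky Fan variational characterization, with the correct observation that one must carry the full operator constraint $\bW'\preceq I$ (using $\sum_r\mu_r\bD_r^2=I$) rather than only the trace constraint. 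What your route buys: it is self-contained (your lemma is essentially an inline proof of the Bapat--Sunder result the paper cites), it avoids the Hadamard-power bookkeeping, and it handles the infinite Hermite series exactly, sidestepping the paper's slightly loose ``sufficiently large $N$'' truncation where $\sum_{j=1}^N\bA_j^2=\bI$ only holds in the limit. What the paper's route buys: a modular lemma about sums $\sum_j\bF_j\bK\bF_j$ with $\sum_j\bF_j^2=\bI$ that reuses a known theorem and applies verbatim to any expansion over an orthonormal basis whose pairwise correlations are powers of $\rho_{i,i'}$, without needing your (clever, but problem-specific) shifted-Hermite identification of the cofactor as a correlation matrix.
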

Below we present the proof of Theorem \ref{thm:gauss-major}.

\begin{proof}
First we prove the following lemma:
\begin{lemma}\label{lem:A-decompos-diag}
Let $\bK$ be a $p\times p$ positive semidefinite matrix with unit diagonal elements. Let $\bK^{\odot j}$ be the $j$-th Hadamard power of $\bK$. Then there exist diagonal matrices $\bE_k$ for $1\leq k\leq p^{j-1}$ such that
\begin{align}
\bK^{\odot j}=\sum_{k=1}^{p^{j-1}} \bE_k \bK \bE_k,
\end{align}
where $\sum_{k} \bE_k^2=\bI$.
\end{lemma}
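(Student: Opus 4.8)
The plan is to produce the $\bE_k$ explicitly by viewing $\bK$ as a Gram matrix. Factor $\bK=\bV\bV^T$ with $\bV\in\RR^{p\times p}$ (for instance $\bV=\bK^{1/2}$), and let $\bv_1,\dots,\bv_p\in\RR^p$ be its rows, so that $\bK_{i,i'}=\langle \bv_i,\bv_{i'}\rangle$ for all $i,i'$; since $\bK$ has unit diagonal, each $\bv_i$ is a unit vector. The only idea needed is the classical observation (the quantitative form of Schur's theorem that Hadamard products of positive semidefinite matrices are positive semidefinite): a power of an inner product is itself an inner product of tensor powers, $\langle \bv_i,\bv_{i'}\rangle^{\,m}=\langle \bv_i^{\otimes m},\bv_{i'}^{\otimes m}\rangle$, where $\bv_i^{\otimes m}\in\RR^{p^m}$ has coordinates $(\bv_i^{\otimes m})_{\bk}=\prod_{\ell=1}^m \bv_i(k_\ell)$ indexed by multi-indices $\bk=(k_1,\dots,k_m)\in\{1,\dots,p\}^m$; this is immediate by expanding the product of sums.

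Given this, I would take $m=j-1$ and, for each of the $p^{j-1}$ multi-indices $\bk\in\{1,\dots,p\}^{j-1}$, set $\be_{\bk}\in\RR^p$ to be the vector with $i$-th entry $\prod_{\ell=1}^{j-1}\bv_i(k_\ell)$ (equivalently, the $\bk$-th column of the $p\times p^{j-1}$ matrix whose rows are the $\bv_i^{\otimes(j-1)}$), and $\bE_{\bk}=\mathrm{diag}(\be_{\bk})$. Relabelling the multi-indices $1,\dots,p^{j-1}$ gives the required family. Two routine checks then finish the proof. First, since each $\bE_{\bk}$ is diagonal, the $(i,i')$ entry of $\sum_{\bk}\bE_{\bk}\bK\bE_{\bk}$ equals $\bK_{i,i'}\sum_{\bk}\be_{\bk}(i)\be_{\bk}(i')=\bK_{i,i'}\langle \bv_i^{\otimes(j-1)},\bv_{i'}^{\otimes(j-1)}\rangle=\bK_{i,i'}\langle \bv_i,\bv_{i'}\rangle^{j-1}=\bK_{i,i'}^{\,j}$, which is exactly $(\bK^{\odot j})_{i,i'}$. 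Second, $\sum_{\bk}\bE_{\bk}^2$ is diagonal with $i$-th entry $\sum_{\bk}\be_{\bk}(i)^2=\|\bv_i^{\otimes(j-1)}\|_2^2=\|\bv_i\|_2^{2(j-1)}=1$; this is the one place the unit-diagonal hypothesis is used, and it yields $\sum_{\bk}\bE_{\bk}^2=\bI$.

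I do not expect a genuine obstacle: the content is just the ``tensor power'' identity together with the bookkeeping of the $p^{j-1}$ multi-indices. If one prefers to avoid tensor notation, the same result follows by induction on $j$: the case $j=1$ is $\bK=\bI\bK\bI$; for $j=2$ one takes $\bF_k=\mathrm{diag}$ of the $k$-th column of $\bV$, giving $\bK^{\odot 2}=\sum_{k=1}^{p}\bF_k\bK\bF_k$ with $\sum_k\bF_k^2=\bI$; and the inductive step uses the identity $(\bE\bA\bE)\odot\bB=\bE(\bA\odot\bB)\bE$ for diagonal $\bE$ and the commutativity of diagonal matrices to write $\bK^{\odot(j+1)}=\bK^{\odot j}\odot\bK=\sum_{l}\bE_l\,\bK^{\odot 2}\,\bE_l=\sum_{l,k}(\bE_l\bF_k)\bK(\bE_l\bF_k)$, a sum of $p^{j-1}\cdot p=p^{j}$ diagonal conjugates of $\bK$ with $\sum_{l,k}(\bE_l\bF_k)^2=(\sum_l\bE_l^2)(\sum_k\bF_k^2)=\bI$.
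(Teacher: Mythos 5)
Your proposal is correct. The paper's own proof factors $\bK=\bU^T\bU$ with unit-norm columns, writes $\bK^{\odot 2}=\sum_{k=1}^{p}\bE_k\bK\bE_k$ with $\bE_k=\mathrm{diag}(u_{k,1},\dots,u_{k,p})$ so that $\sum_k\bE_k^2=\bI$, and disposes of $j>2$ by ``successive application of the proof technique'' --- which is precisely the induction you sketch in your closing paragraph (including the identity $(\bE\bA\bE)\odot\bB=\bE(\bA\odot\bB)\bE$ and the count $p^{j-1}\cdot p=p^{j}$ that the paper leaves implicit). Your primary argument, the explicit construction of the $p^{j-1}$ diagonal matrices indexed by multi-indices via tensor powers of the Gram vectors, is simply the closed-form unrolling of that same iteration: both rest on the Gram factorization of $\bK$ and on the unit-diagonal hypothesis to get $\sum_k\bE_k^2=\bI$, so the two routes are essentially the same, with your version having the minor merit of giving the general-$j$ formula in one step rather than by repetition.
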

\begin{proof}
We prove this lemma for $j=2$. The case of $j>2$ can be shown by a successive application of the proof technique. Since $\bK$ is a positive semidefinite matrix we can write $\bK=\bU^T \bU$. Since diagonal elements of $\bK$ are one we have $\|\bu_i\|=1$ where $\bu_i=(u_{1,i},...,u_{p,i})^T$ is the $i$-th column of $\bU$. Then we have
\begin{align}
\bK^{\odot 2}=\sum_{k=1}^{p} \bE_k \bK \bE_k,
\end{align}
where
\begin{align}
\bE_k=\left( \begin{array}{cccc}
u_{k,1} & 0&\cdots&0  \\
0 & u_{k,2}&\cdots&0  \\
\vdots & \vdots&\ddots&\vdots  \\
0 & 0&\cdots&u_{k,n} \end{array} \right).
\end{align}
Moreover we have
\begin{align}
\sum_{k=1}^{p} \bE_k^2=\left( \begin{array}{cccc}
\|\bu_1\|^2 & 0&\cdots&0  \\
0 & \|\bu_2\|^2&\cdots&0  \\
\vdots & \vdots&\ddots&\vdots  \\
0 & 0&\cdots&\|\bu_p\|^2 \end{array} \right)=\bI.
\end{align}
\end{proof}
Next we prove the following result on matrix majorization:
\begin{lemma}\label{lem:major-A-B}
Let $\bK$ be a $p\times p$ positive semidefinite matrix with unit diagonal elements. Let
\begin{align}
\bX=\sum_{j=1}^{N} \bD_j \bK^{\odot j} \bD_j,
\end{align}
where $\bD_j$'s are diagonal matrices such that $\sum_{j=1}^{N}\bD_j^2=\bI$. Then $\lambda(\bK)\succ \lambda(\bX)$.
\end{lemma}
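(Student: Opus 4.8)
The plan is to reduce the claim to the unitary-dilation characterization of matrix majorization in Lemma \ref{lem:unitary-eq}, using Lemma \ref{lem:A-decompos-diag} to rewrite each Hadamard power $\bK^{\odot j}$ as a sum of diagonal conjugations of $\bK$. Combining the two, $\bX$ becomes a sum $\sum_{j,k} (\bD_j \bE_{j,k}) \bK (\bE_{j,k} \bD_j)$, where for each $j$ the matrices $\bE_{j,k}$ (with $1 \le k \le p^{j-1}$) satisfy $\sum_k \bE_{j,k}^2 = \bI$, and $\sum_j \bD_j^2 = \bI$. Setting $\bG_{j,k} := \bD_j \bE_{j,k}$, these are diagonal matrices with $\sum_{j,k} \bG_{j,k}^2 = \sum_j \bD_j^2 \big(\sum_k \bE_{j,k}^2\big) = \sum_j \bD_j^2 = \bI$ (the diagonal factors commute). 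So it suffices to prove the following cleaner statement: if $\{\bG_\ell\}$ is a finite family of diagonal matrices with $\sum_\ell \bG_\ell^2 = \bI$, then $\lambda(\bK) \succ \lambda\big(\sum_\ell \bG_\ell \bK \bG_\ell\big)$.

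For this reduced statement, first note the trace is preserved: $Tr\big(\sum_\ell \bG_\ell \bK \bG_\ell\big) = Tr\big(\big(\sum_\ell \bG_\ell^2\big)\bK\big) = Tr(\bK)$, using that the $\bG_\ell$ are diagonal (hence symmetric and mutually commuting) so $Tr(\bG_\ell \bK \bG_\ell) = Tr(\bG_\ell^2 \bK)$. Thus only the weak-majorization inequalities $\sum_{r=1}^q \lambda_r(\bX) \le \sum_{r=1}^q \lambda_r(\bK)$ remain. The natural route is via the Ky Fan variational formula (Lemma \ref{thm:MC-PCA-trace}): $\sum_{r=1}^q \lambda_r(\bX) = \max_{0 \preceq \bW \preceq \bI,\ Tr(\bW)=q} Tr(\bW \bX)$. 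Fix the optimal $\bW$ for $\bX$; then
\begin{align}
\sum_{r=1}^q \lambda_r(\bX) = Tr(\bW \bX) = \sum_\ell Tr(\bW \bG_\ell \bK \bG_\ell) = \sum_\ell Tr\big((\bG_\ell \bW \bG_\ell)\, \bK\big).\nonumber
\end{align}
The key observation is that $\bW_\ell := \bG_\ell \bW \bG_\ell$ satisfies $0 \preceq \bW_\ell$, $\bW_\ell \preceq \bG_\ell^2 \preceq \bI$ (since $\bW \preceq \bI$ and conjugation by the diagonal $\bG_\ell$ is monotone), and $\sum_\ell \bW_\ell = \sum_\ell \bG_\ell \bW \bG_\ell$. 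I then want to control $\sum_\ell Tr(\bW_\ell \bK) = Tr\big(\big(\sum_\ell \bW_\ell\big)\bK\big)$ by $\sum_{r=1}^q \lambda_r(\bK)$, which by the Ky Fan formula again would follow if $\widetilde{\bW} := \sum_\ell \bW_\ell$ obeys $0 \preceq \widetilde{\bW} \preceq \bI$ and $Tr(\widetilde{\bW}) = q$. Nonnegativity is immediate; the trace is $\sum_\ell Tr(\bG_\ell^2 \bW) = Tr(\bW) = q$; and the upper bound $\widetilde{\bW} \preceq \bI$ is exactly $\sum_\ell \bG_\ell \bW \bG_\ell \preceq \sum_\ell \bG_\ell^2 = \bI$, which holds because $\bW \preceq \bI$ implies $\bG_\ell(\bI - \bW)\bG_\ell \succeq 0$ for each $\ell$, and summing gives $\sum_\ell \bG_\ell^2 - \sum_\ell \bG_\ell \bW \bG_\ell \succeq 0$.

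The main obstacle — and the step to be careful about — is precisely this last positive-semidefiniteness manipulation: it works cleanly only because all the $\bG_\ell$ are \emph{diagonal}, so conjugation is an operator-monotone map and $\bG_\ell \bW \bG_\ell$ is again symmetric, letting the inequalities $0\preceq\bW\preceq\bI$ pass through and then sum. Once $\widetilde\bW$ is shown to be a feasible point of the Ky Fan program for $\bK$, chaining the two applications of Lemma \ref{thm:MC-PCA-trace} gives $\sum_{r=1}^q \lambda_r(\bX) \le \sum_{r=1}^q \lambda_r(\bK)$ for every $q$, and together with the trace equality this is exactly $\lambda(\bK) \succ \lambda(\bX)$. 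I should also double-check the bookkeeping in passing from the double-indexed family $\{\bG_{j,k}\}$ back to a single index $\ell$ and confirm $\sum \bG_{j,k}^2 = \bI$, which is the one place the hypothesis $\sum_j \bD_j^2 = \bI$ of the lemma is actually consumed.
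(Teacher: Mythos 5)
Your proposal is correct, and its first half coincides exactly with the paper's: the paper also combines Lemma \ref{lem:A-decompos-diag} with the hypothesis $\sum_{j}\bD_j^2=\bI$ to rewrite $\bX=\sum_{j=1}^{M}\bF_j\bK\bF_j$ with diagonal $\bF_j$ satisfying $\sum_{j=1}^{M}\bF_j^2=\bI$ (your $\mathbf{G}_{j,k}=\bD_j\bE_{j,k}$ are precisely these $\bF_j$, and your bookkeeping $\sum_{j,k}\mathbf{G}_{j,k}^2=\bI$ is right). Where you genuinely diverge is the conclusion: at this point the paper simply cites Theorem 1 of \cite{bapat1985majorization}, which asserts that such a decomposition forces $\lambda(\bX)\prec\lambda(\bK)$, whereas you prove that fact from scratch. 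Your argument---trace preservation by cyclicity, then for each $q$ taking the optimal $\bW$ in the Ky Fan variational formula (the paper's Lemma \ref{thm:MC-PCA-trace}) for $\bX$, pushing it through the map $\bW\mapsto\sum_{\ell}\mathbf{G}_{\ell}\bW\mathbf{G}_{\ell}$, and checking the image is again feasible ($\succeq 0$, $\preceq\bI$ via $\mathbf{G}_{\ell}(\bI-\bW)\mathbf{G}_{\ell}\succeq 0$ summed against $\sum_{\ell}\mathbf{G}_{\ell}^2=\bI$, and trace $q$)---is a correct and complete proof of the cited result in the case needed here, so your route is self-contained where the paper's is a black-box citation; the price is a slightly longer write-up. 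One minor remark: the step $\mathbf{G}_{\ell}\bW\mathbf{G}_{\ell}\preceq\mathbf{G}_{\ell}^2$ does not actually require diagonality, only that congruence $\bA\mapsto \mathbf{G}^T\bA\mathbf{G}$ preserves the Loewner order (with $\mathbf{G}$ symmetric here); diagonality is what you need elsewhere, namely to merge $\bD_j\bE_{j,k}$ into a single diagonal factor and to get $\sum_{j,k}\mathbf{G}_{j,k}^2=\bI$ by commutativity. This does not affect correctness.
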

\begin{proof}
Using Lemma \ref{lem:A-decompos-diag} we can write
\begin{align}
\bX=\sum_{j=1}^{M} \bF_j \bK \bF_j,
\end{align}
where $M=(p^N-1)/(p-1)$ and $\sum_{j=1}^{M}\bF_j^2=\bI$. Then using Theorem 1 of \cite{bapat1985majorization} completes the proof.
\end{proof}
Let $\bK_{(\phi_1(X_1),...,\phi_p(X_p))}$ be the covariance matrix of transformed variables $\{\phi_i(X_i)\}_{i=1}^{p}$. Using \eqref{eq:phi-decompos-gauss} and for sufficiently large $N$ we have
\begin{align}
\bK_{(\phi_1(X_1),...,\phi_p(X_p))}=\sum_{j=1}^{N} \bA_j \bK_X^{\odot j} \bA_j,
\end{align}
where
\begin{align}
\bA_j=\left( \begin{array}{cccc}
a_{1,j} & 0&\cdots&0  \\
0 & a_{2,j}&\cdots&0  \\
\vdots & \vdots&\ddots&\vdots  \\
0 & 0&\cdots&a_{p,j} \end{array} \right).
\end{align}
Since $\|\ba_i\|^2=1$ we have $\sum_{j=1}^N \bA_j^2=\bI$. Using Lemma \ref{lem:major-A-B} completes the proof.
\end{proof}

\section{MCPCA for Finite Discrete Random Variables}\label{sec:discrete}
\subsection{Problem Formulation}\label{subsec:disc-problem-formulation}
Let $X_i$ be a discrete random variable with distribution $P_{X_i}$ over the alphabet $\cX_i=\{1, \dots , |\cX_i|\}$. Without loss of generality, we assume all alphabets have positive probabilities as otherwise they can be neglected, i.e., $P_{X_i}(x)>0$ for $x\in \{1, \dots , |\cX_i|\}$. Let $\phi_i(X_i):\cX_i\to \mathbb{R}$ be a function of random variable $X_i$ with zero mean and unit variance. Using a basis expansion approach similar to \cite{feizi2015network}, we have
\begin{align}\label{eq:dis-basis-expansion}
\phi_i(X_i)=\sum_{i=1}^{|\cX_i|} a_{i,j} \psi_{i,j}(X_i),
\end{align}
where
\begin{align}\label{eq:basis-discrete}
\psi_{i,j}(x)\triangleq\mathbf{1}\{x=j\} \frac{1}{\sqrt{P_{X_i}(x)}}.
\end{align}
Note that $\{\psi_{i,j}\}_{j=1}^{|\cX_i|}$ form an orthonormal basis with respect to the distribution of $X_i$ because
\begin{align}\label{eq:orth-normal-psi}
&\EE[\psi_{i,j}(X_i)^2]=1,\quad 1\leq j\leq |\cX_i|\\
&\EE[\psi_{i,j}(X_i)\psi_{i,j'}(X_i)]=0,\quad 1\leq j\neq j'\leq |\cX_i|.\nonumber
\end{align}
Moreover we have
\begin{align}\label{eq:mean-psi}
\EE[\psi_{i,j}(X_i)]=\sqrt{P_{X_i}(j)},\quad 1\leq j\leq |\cX_i|.
\end{align}
Let $P_{X_i,X_{i'}}$ be the joint distribution of discrete variables $X_i$ and $X_{i'}$. Define a matrix $\bQ_{i,i'}\in\mathbb{R}^{|\cX_i|\times |\cX_{i'}|}$ whose $(j,j')$ element is
\begin{align}\label{def:Q}
\bQ_{i,i'}(j,j')\triangleq \frac{P_{X_i,X_{i'}}(j,j')}{\sqrt{P_{X_i}(j)P_{X_{i'}}(j')}}.
\end{align}
This matrix is called the $Q$-matrix of the distribution $P_{X_i,X_{i'}}$. Note that
\begin{align}
\EE[\psi_{i,j}(X_i)\psi_{i',j'}(X_{i'})]=\bQ_{i,i'}(j,j').
\end{align}
For $i=1, \dots, p$, let
\begin{align}\label{def:ai-sqrtpi}
&\ba_i\triangleq \left( a_{i,1},a_{i,2},\ldots,a_{i,|\mathcal{X}_{i}| } \right) ^T \\
&\sqrt{\bp_i}\triangleq \left( \sqrt{P_{X_i}(1)},\sqrt{P_{X_i}(2)},\ldots,\sqrt{P_{X_i}(|\mathcal{X}_{i}| )}\right) ^T.\nonumber
\end{align}
\begin{theorem}\label{thm:mcpca-opt-ai}
Let $\{\ba_i^*\}_{i=1}^{p}$ be an optimal solution of the following optimization:
\begin{align}\label{opt:MC-PCA-ai}
\max_{\{\ba_i\}_{i=1}^{p}}\quad &\sum_{r=1}^{q} \lambda_{r}(\bK)\\
& \bK(i,i')=\ba_i^T \bQ_{i,i'} \ba_{i'}, ~~ 1\leq i,i'\leq p,\nonumber\\
& \|\ba_i\|_{2}=1, ~~ 1\leq i \leq p, \nonumber \\
& \ba_i \perp \sqrt{\bp_i}, ~~ 1\leq i \leq p. \nonumber
\end{align}
Then, $\bK^*$ is an optimal solution of MCPCA optimization \eqref{opt:MCPCA-main}.
\end{theorem}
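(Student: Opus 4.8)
The plan is to exhibit an explicit bijection between the feasible set of the MCPCA optimization \eqref{opt:MCPCA-main} and the feasible set of optimization \eqref{opt:MC-PCA-ai} under which the covariance matrix $\bK_{\phi(X)}$ --- and hence the objective $\sum_{r=1}^{q}\lambda_r(\bK)$ --- is preserved, so that optimality transfers directly. Given an optimal $\{\ba_i^*\}_{i=1}^p$ for \eqref{opt:MC-PCA-ai}, the claimed optimal point of \eqref{opt:MCPCA-main} is the matrix $\bK^*$ with $\bK^*(i,i')=\ba_i^{*T}\bQ_{i,i'}\ba_{i'}^*$, which I will identify with the covariance matrix of the transformed variables obtained by reading off $\{\ba_i^*\}$ through \eqref{eq:dis-basis-expansion}.

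First I would record that for each $i$ the family $\{\psi_{i,j}\}_{j=1}^{|\cX_i|}$ in \eqref{eq:basis-discrete} is a \emph{complete} orthonormal system for the $|\cX_i|$-dimensional space of real functions on $\cX_i$ endowed with the inner product $\langle f,g\rangle=\EE[f(X_i)g(X_i)]$: orthonormality is \eqref{eq:orth-normal-psi}, and completeness is a dimension count since there are exactly $|\cX_i|$ of these orthonormal (hence linearly independent) vectors. Consequently every $\phi_i:\cX_i\to\RR$ admits the unique expansion \eqref{eq:dis-basis-expansion} with $a_{i,j}=\EE[\phi_i(X_i)\psi_{i,j}(X_i)]=\phi_i(j)\sqrt{P_{X_i}(j)}$, so $\phi_i\leftrightarrow\ba_i$ is a linear bijection. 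Next I would translate the two moment constraints: by orthonormality $\EE[\phi_i(X_i)^2]=\|\ba_i\|_2^2$, so $\EE[\phi_i(X_i)^2]=1 \iff \|\ba_i\|_2=1$; and by \eqref{eq:mean-psi}, $\EE[\phi_i(X_i)]=\sum_j a_{i,j}\sqrt{P_{X_i}(j)}=\langle\ba_i,\sqrt{\bp_i}\rangle$, so $\EE[\phi_i(X_i)]=0 \iff \ba_i\perp\sqrt{\bp_i}$. Hence the feasible set of \eqref{opt:MCPCA-main} maps exactly onto the feasible set of \eqref{opt:MC-PCA-ai}.

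Then, using $\EE[\psi_{i,j}(X_i)\psi_{i',j'}(X_{i'})]=\bQ_{i,i'}(j,j')$, a direct expansion gives
\begin{align}
\bK_{\phi(X)}(i,i')=\EE[\phi_i(X_i)\phi_{i'}(X_{i'})]=\sum_{j,j'}a_{i,j}a_{i',j'}\,\EE[\psi_{i,j}(X_i)\psi_{i',j'}(X_{i'})]=\ba_i^T\bQ_{i,i'}\ba_{i'},\nonumber
\end{align}
which is precisely the matrix $\bK$ built in \eqref{opt:MC-PCA-ai}. Thus corresponding feasible points yield the same matrix, hence the same spectrum and the same Ky Fan norm; the two optimizations have equal optimal values and corresponding optimizers, so $\{\ba_i^*\}$ optimal for \eqref{opt:MC-PCA-ai} yields $\bK^*=[\ba_i^{*T}\bQ_{i,i'}\ba_{i'}^*]\in\cK_X$ optimal for \eqref{opt:MCPCA-main}.

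The only points needing a little care --- the closest thing to an obstacle --- are verifying that the expansion \eqref{eq:dis-basis-expansion} really covers \emph{all} admissible $\phi_i$ (this is exactly why completeness of $\{\psi_{i,j}\}$ matters and why the index $j$ may run up to $|\cX_i|$), and checking consistency of the diagonal blocks: since $P_{X_i,X_i}(j,j')=P_{X_i}(j)\mathbf{1}\{j=j'\}$ we get $\bQ_{i,i}=\bI$, so $\ba_i^T\bQ_{i,i}\ba_i=\|\ba_i\|_2^2=1$, matching the unit-variance normalization of $\bK_{\phi(X)}$ and confirming $\bK^*$ has unit diagonal as required for membership in $\cK_X$.
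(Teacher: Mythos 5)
Your proposal is correct and follows essentially the same route as the paper's proof: expand each $\phi_i$ in the orthonormal basis $\{\psi_{i,j}\}$, translate the zero-mean and unit-variance constraints into $\ba_i\perp\sqrt{\bp_i}$ and $\|\ba_i\|_2=1$, and identify $\EE[\phi_i(X_i)\phi_{i'}(X_{i'})]=\ba_i^T\bQ_{i,i'}\ba_{i'}$, giving a feasibility-preserving correspondence in both directions. The extra remarks on completeness of the basis and on $\bQ_{i,i}=\bI$ are fine additions but do not change the argument.
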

\begin{proof}
Consider $\bK\in\cK_{X}$ in the feasible region of MCPCA optimization \eqref{opt:MCPCA-main}. We have
\begin{align}
\bK(i,i')=\EE[\phi_i(X_i)\phi_{i'}(X_{i'})],~~ 1\leq i,i'\leq p
\end{align}
where $\EE[\phi_i(X_i)]=0$, and $\EE[\phi_i(X_i)^2]=1$ for all $1\leq
i\leq p$. Using \eqref{eq:dis-basis-expansion}, we can represent these functions in terms of the basis functions:
\begin{align}
\phi_i(X_i)&=\sum_{j=1}^{|\cX_i|} a_{i,j} \psi_{i,j}(X_i),\\
\phi_{i'}(X_{i'})&=\sum_{j=1}^{|\cX_{i'}|} a_{i',j'} \psi_{i',j'}(X_{i'}).\nonumber
\end{align}
Using \eqref{eq:orth-normal-psi}, the constraint $\EE[\phi_i(X_i)^2]=1$ would be translated into $\|\ba_i\|_{2}$ for $1\leq i\leq p$. Moreover using \eqref{eq:mean-psi}, the constraint $\mathbb{E}[\phi_i(X_i)]=0$ is simplified to $\sum_{j=1}^{|\cX_i|} a_{i,j} \sqrt{P_{X_i}(j)}=0$ for $1\leq i\leq p$. We also have
\begin{align}
\mathbb{E}[\phi_i(X_i)\phi_{i'}(X_{i'})]= \sum_{j=1}^{|\cX_{i}|} \sum_{j'=1}^{|\cX_{i'}|} a_{i,j} a_{i',j'}\ \mathbb{E}[\psi_{i,j} (X_i) \psi_{i',j'} (X_{i'})]=\ba_i^T \bQ_{i,i'} \ba_{i'}.
\end{align}
This shows every feasible point of optimization \eqref{opt:MCPCA-main} corresponds to a feasible point of optimization \eqref{opt:MC-PCA-ai}. The inverse argument is similar. This completes the proof.
\end{proof}
Recall that $\sigma_j(\bQ_{i,i'})$ is the $j$-th largest singular value of the matrix $\bQ_{i,i'}$ corresponding to left and right singular vectors $\bu_j(\bQ_{i,i'})$ and $\bv_j(\bQ_{i,i'})$, respectively.
\begin{lemma}\label{lem:sing-value-Q}
$\sigma_{1}(\bQ_{i,i'})=1$, $\bu_{1}(\bQ_{i,i'})=\sqrt{\bp_i}$ and $\bv_{1}(\bQ_{i,i'})=\sqrt{\bp_{i'}}$.
\end{lemma}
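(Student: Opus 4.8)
The plan is to verify directly that the vectors $\sqrt{\bp_i}$ and $\sqrt{\bp_{i'}}$ are singular vectors of $\bQ_{i,i'}$ with singular value $1$, and then to show that $1$ is in fact the \emph{largest} singular value. For the first part, I would compute the $(j)$-th entry of $\bQ_{i,i'}\sqrt{\bp_{i'}}$: by the definition \eqref{def:Q} this equals $\sum_{j'} \frac{P_{X_i,X_{i'}}(j,j')}{\sqrt{P_{X_i}(j)P_{X_{i'}}(j')}}\sqrt{P_{X_{i'}}(j')} = \frac{1}{\sqrt{P_{X_i}(j)}}\sum_{j'} P_{X_i,X_{i'}}(j,j') = \frac{P_{X_i}(j)}{\sqrt{P_{X_i}(j)}} = \sqrt{P_{X_i}(j)}$, using that the joint distribution marginalizes to $P_{X_i}$. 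Hence $\bQ_{i,i'}\sqrt{\bp_{i'}} = \sqrt{\bp_i}$, and by the symmetric computation on $\bQ_{i,i'}^T$ (marginalizing over the first coordinate) we get $\bQ_{i,i'}^T\sqrt{\bp_i} = \sqrt{\bp_{i'}}$. Since $\sqrt{\bp_i}$ and $\sqrt{\bp_{i'}}$ are unit vectors (as $\sum_j P_{X_i}(j)=1$), this exhibits $1$ as a singular value with the claimed left/right singular vectors.

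The remaining point is that $\sigma_1(\bQ_{i,i'}) = 1$, i.e. no singular value exceeds $1$. The cleanest route is to connect $\sigma_1(\bQ_{i,i'})$ to maximal correlation. Note that for any unit vectors $\ba_i, \ba_{i'}$, setting $\phi_i(X_i)=\sum_j a_{i,j}\psi_{i,j}(X_i)$ gives $\EE[\phi_i(X_i)^2]=\|\ba_i\|^2$ and $\EE[\phi_i(X_i)\phi_{i'}(X_{i'})] = \ba_i^T\bQ_{i,i'}\ba_{i'}$, exactly as established in the proof of Theorem \ref{thm:mcpca-opt-ai}. Therefore $\sigma_1(\bQ_{i,i'}) = \sup \ba_i^T\bQ_{i,i'}\ba_{i'} = \sup \EE[\phi_i(X_i)\phi_{i'}(X_{i'})]$ over all unit-variance transformations, which is bounded by $1$ by the Cauchy--Schwarz inequality $\EE[\phi_i(X_i)\phi_{i'}(X_{i'})] \le \sqrt{\EE[\phi_i(X_i)^2]\EE[\phi_{i'}(X_{i'})^2]} = 1$. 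Alternatively, one can argue directly: $\|\bQ_{i,i'}\ba_{i'}\|^2 = \sum_j \frac{1}{P_{X_i}(j)}\big(\sum_{j'} P_{X_i,X_{i'}}(j,j') \frac{a_{i',j'}}{\sqrt{P_{X_{i'}}(j')}}\big)^2$; writing the inner sum as $\EE[\,\cdot\mid X_i=j\,]$ times $P_{X_i}(j)$ and applying Jensen/Cauchy--Schwarz conditionally, this is at most $\sum_j P_{X_i}(j)\,\EE[\big(\sum_{j'}\mathbf{1}\{X_{i'}=j'\}a_{i',j'}/\sqrt{P_{X_{i'}}(j')}\big)^2\mid X_i=j] = \|\ba_{i'}\|^2$, so $\|\bQ_{i,i'}\|\le 1$.

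Combining the two parts: $1$ is a singular value (so $\sigma_1\ge 1$) and all singular values are $\le 1$ (so $\sigma_1\le 1$), hence $\sigma_1(\bQ_{i,i'})=1$ with left singular vector $\sqrt{\bp_i}$ and right singular vector $\sqrt{\bp_{i'}}$. I expect the main obstacle to be purely expository rather than mathematical: making sure the direction of the Cauchy--Schwarz/Jensen step in the operator-norm bound is set up correctly (it is a conditional Jensen inequality on squares, applied coordinatewise in $j$ and then averaged against $P_{X_i}$), and being careful that the ``$\sigma_1 = \sup \EE[\phi_i\phi_{i'}]$'' identification uses the full span of the $\psi$ basis including the constant direction, so that it genuinely equals $\rho_{MC}$ only after restricting to the orthogonal complement of $\sqrt{\bp_i}$ — but for the $\sigma_1$ claim we do not need that restriction, since Cauchy--Schwarz bounds the unrestricted supremum by $1$ as well.
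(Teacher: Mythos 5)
Your proposal is correct and takes essentially the same route as the paper: the paper's auxiliary random variables $\Upsilon_1,\Upsilon_2$ are exactly your $\phi_i(X_i),\phi_{i'}(X_{i'})$, and both arguments bound $\ba_1^T\bQ_{i,i'}\ba_2=\EE[\phi_i(X_i)\phi_{i'}(X_{i'})]\le 1$ by Cauchy--Schwarz and then exhibit the singular value $1$ via the marginalization identities $\bQ_{i,i'}\sqrt{\bp_{i'}}=\sqrt{\bp_i}$ and $\sqrt{\bp_i}^T\bQ_{i,i'}=\sqrt{\bp_{i'}}^T$. Your extra care about the constant direction (noting the unrestricted supremum is still bounded by $1$) is a valid clarification but not needed beyond what the paper does.
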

\begin{proof}
First we show that the maximum singular value of the matrix $\bQ_{i,i'}$ is less than or equal to one. To show that, it is sufficient to show that for every vectors $\ba_1$ and $\ba_2$ such that $\|\ba_1\|=1$ and $\|\ba_2\|=1$, we have $\ba_1^T \bQ_{i,i'} \ba_2 \leq 1$. To show this, we define random variables $\Upsilon_1$ and $\Upsilon_2$ such that
\begin{align*}
\PP\big(\Upsilon_1=\frac{a_{1,j}}{\sqrt{P_{X_i}(j)}},\Upsilon_2=\frac{a_{2,j'}}{\sqrt{P_{X_{i'}}(j')}}\big)=P_{X_{i},X_{i'}}(j,j').
\end{align*}
Using Cauchy-Schwartz inequality, we have
\begin{align*}
\mathbf{a}_1^T \bQ_{i,i'} \mathbf{a}_2= \EE [\Upsilon_1 \Upsilon_2] \leq \sqrt{\EE[\Upsilon_1^2] \EE[\Upsilon_2^2]}=||\mathbf{a}_1|| \  ||\mathbf{a}_2||= 1.
\end{align*}
Therefore, the maximum singular value of $\bQ_{i,i'}$ is at most one.

Moreover $\sqrt{\bp_i}$ and $\sqrt{\bp_{i'}}$ are right and left singular vectors of the matrix $\bQ_{i,i'}$ corresponding to the singular value one because $\bQ_{i,i'} \sqrt{\bp_{i'}}=\sqrt{\bp_i}$ and $\sqrt{\bp_i}^T \bQ_{i,i'}=\sqrt{\bp_{i'}}^T$.
\end{proof}

In the following we use similar techniques to the ones employed in \cite{feizi2015network} to formulate an alternative and equivalent optimization to \eqref{opt:MC-PCA-ai} without orthogonality constraints which proves to be useful in characterizing a globally optimal MCPCA solution when $q=1$.

Consider the matrix $\tI_i\triangleq I_{|\mathcal{X}_i|} - \sqrt{\mathbf{p}_i} \sqrt{\mathbf{p}_i}^T $. This matrix is positive semidefinite and the only vectors in its null space are $\mathbf{0}$ and $\sqrt{\mathbf{p}_i}$. This is because for any vector $\bx$ we have
\begin{align}
\mathbf{x}^T \left( I_{|\mathcal{X}_i|} - \sqrt{\mathbf{p}_i} \sqrt{\mathbf{p}_i}^T \right) \mathbf{x} = ||\bx||_2^2- (\mathbf{x} \sqrt{\mathbf{p}_i})^2 \ge 0,
\end{align}
where the Cauchy-Schwartz inequality and $||\sqrt{\mathbf{p}_i}||_2^2=1$ are used. The inequality becomes an equality if and only if $\mathbf{x}=0$ or $\mathbf{x}= \sqrt{\mathbf{p}_i}$. Moreover we have $\lambda_j(\tI_i)=1$ for $1\leq j<|\cX_i|$ because
\begin{align}
\left(I_{|\cX_i|} - \sqrt{\bp_i}\sqrt{\bp_i}^T \right) \bu_j(\tI_i)= \bu_j(\tI_i)- \sqrt{\bp_i}\sqrt{\bp_i}^T \bu_j(\tI_i)=\bu_j(\tI_i),
\end{align}
where the last equality follows from the fact that $\bu_j(\tI_i)$ is orthogonal to $\bu_{|\cX_i|}(\tI_i)=\sqrt{\bp_i}$.

Define $\bA_i\in \mathbb{R}^{|\cX_i|\times |\cX_i|}$ as follows:
\begin{align}\label{eq:Ai-inverseBi}
\bA_i\triangleq \Bigg( \left[\bu_1(\tI_i), \dots, \bu_{|\cX_i|-1}(\tI_i) \right] \left[ \bu_1(\tI_i), \dots, \bu_{|\cX_i|-1}(\tI_i) \right]^T\Bigg).
\end{align}

\begin{theorem}\label{thm:mcpca-opt-bi}
Let $\{\bb_i^*\}_{i=1}^{p}$ be an optimal solution of the following optimization:
\begin{align}\label{opt:MC-PCA-bi}
\max_{\{\bb_i\}_{i=1}^{p}}\quad &\sum_{r=1}^{q} \lambda_{r}(\bK)\\
& \bK(i,i')=\bb_i^T \left(\bQ_{i,i'}-\sqrt{\bp_i}\sqrt{\bp_{i'}}^T\right) \bb_{i'}, ~~ 1\leq i,i'\leq p,\nonumber\\
& \|\bb_i\|_{2}=1, ~~ 1\leq i \leq p. \nonumber
\end{align}
Then, $\{\ba_i^*\}_{i=1}^{p}$ is an optimal solution of optimization \eqref{opt:MC-PCA-ai} where $\ba_i^*=\bA_i \bb_i^*$.
\end{theorem}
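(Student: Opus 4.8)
The plan is to read off both optimizations from the two structural facts that $\bA_i$ is the orthogonal projector onto $\{\sqrt{\bp_i}\}^{\perp}$ and that $\bQ_{i,i'}-\sqrt{\bp_i}\sqrt{\bp_{i'}}^T$ is $\bQ_{i,i'}$ with its leading singular mode deflated. For the first: $\bu_1(\tI_i),\dots,\bu_{|\cX_i|-1}(\tI_i)$ form an orthonormal basis of the orthogonal complement of $\bu_{|\cX_i|}(\tI_i)=\sqrt{\bp_i}$, so the matrix $\bA_i$ in \eqref{eq:Ai-inverseBi} equals $I_{|\cX_i|}-\sqrt{\bp_i}\sqrt{\bp_i}^T=\tI_i$. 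For the second, write $\tilde{\bQ}_{i,i'}\triangleq\bQ_{i,i'}-\sqrt{\bp_i}\sqrt{\bp_{i'}}^T$; by Lemma~\ref{lem:sing-value-Q} we have $\tilde{\bQ}_{i,i'}\sqrt{\bp_{i'}}=0$ and $\sqrt{\bp_i}^T\tilde{\bQ}_{i,i'}=0$, while $\tilde{\bQ}_{i,i}=\bQ_{i,i}-\sqrt{\bp_i}\sqrt{\bp_i}^T=\tI_i$. Hence $\bb_i^T\tilde{\bQ}_{i,i'}\bb_{i'}$ depends only on the components of $\bb_i,\bb_{i'}$ orthogonal to $\sqrt{\bp_i},\sqrt{\bp_{i'}}$, and it coincides with $\bb_i^T\bQ_{i,i'}\bb_{i'}$ whenever $\bb_i\perp\sqrt{\bp_i}$.

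The easy inclusion is immediate: if $\{\ba_i\}$ is feasible for \eqref{opt:MC-PCA-ai}, then $\bb_i:=\ba_i$ is feasible for \eqref{opt:MC-PCA-bi} and, since $\ba_i^T\sqrt{\bp_i}=0$, yields the same matrix $\bK$ and hence the same objective; so the optimal value of \eqref{opt:MC-PCA-bi} is at least that of \eqref{opt:MC-PCA-ai}, and every optimizer of \eqref{opt:MC-PCA-ai} is an optimizer of \eqref{opt:MC-PCA-bi}. For the converse I would show that \eqref{opt:MC-PCA-bi} always admits an optimizer with $\bb_i^*\perp\sqrt{\bp_i}$ for all $i$. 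For such an optimizer $\bA_i\bb_i^*=\bb_i^*$ has unit norm and is orthogonal to $\sqrt{\bp_i}$, and $\bK(i,i')=(\bb_i^*)^T\tilde{\bQ}_{i,i'}\bb_{i'}^*=(\bb_i^*)^T\bQ_{i,i'}\bb_{i'}^*$, so $\{\bA_i\bb_i^*\}$ is feasible for \eqref{opt:MC-PCA-ai} with the same objective; combined with the easy inclusion this forces equality of optimal values and shows $\{\bA_i\bb_i^*\}$ is optimal for \eqref{opt:MC-PCA-ai}.

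To produce such an optimizer I would decompose $\bb_i=c_i\sqrt{\bp_i}+\bb_i^{\perp}$ with $\bb_i^{\perp}\perp\sqrt{\bp_i}$ and $c_i^2+\|\bb_i^{\perp}\|_2^2=1$. By the annihilation property above, the objective $\sum_{r=1}^{q}\lambda_r(\bK)$ depends on $\{\bb_i\}$ only through $\{\bb_i^{\perp}\}$, and in these variables the feasible set is just the product of balls $\|\bb_i^{\perp}\|_2\le 1$ (any slack absorbed by $c_i$). Using the Ky Fan variational identity recalled in the proof of Lemma~\ref{thm:MC-PCA-trace}, namely $\sum_{r=1}^{q}\lambda_r(\bK)=\max\{\mathrm{Tr}(\bW\bK):0\preceq\bW\preceq I,\ \mathrm{Tr}(\bW)=q\}$, observe that for fixed $\bW$ and fixed $\{\bb_{i'}^{\perp}\}_{i'\neq i}$ the map $\bb_i^{\perp}\mapsto\mathrm{Tr}(\bW\bK)$ is a convex quadratic: its only second-order term is $w_{i,i}\|\bb_i^{\perp}\|_2^2$ with $w_{i,i}\ge 0$ (since $\bW\succeq 0$), the rest being linear. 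Thus $\sum_{r=1}^{q}\lambda_r(\bK)$, a pointwise maximum of such functions, is convex in each $\bb_i^{\perp}$, and a convex function on a Euclidean ball attains its maximum on the boundary sphere. Starting from any optimizer and sweeping $i=1,\dots,p$, at each step replacing $\bb_i^{\perp}$ by a unit-norm maximizer of the one-variable restriction, keeps the objective at its optimal value and terminates with $\|\bb_i^{\perp}\|_2=1$, i.e. $\bb_i\perp\sqrt{\bp_i}$, for all $i$.

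The main obstacle is precisely this normalization point: $\bA_i$ is a contraction, not an isometry, so $\bA_i\bb_i^*$ need not have unit norm for an arbitrary optimizer of \eqref{opt:MC-PCA-bi}, and the statement genuinely relies on the reduction to $\bb_i^*\perp\sqrt{\bp_i}$ (a degenerate optimizer with some $c_i\neq 0$ can be replaced by an equally optimal one with $c_i=0$, to which the construction $\ba_i^*=\bA_i\bb_i^*$ applies cleanly). Everything else — the identification $\bA_i=\tI_i$, the deflation identities for $\tilde{\bQ}_{i,i'}$, and the easy inclusion — is routine once Lemma~\ref{lem:sing-value-Q} is in hand.
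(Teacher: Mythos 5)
Your proposal is correct, but it reaches the conclusion by a different mechanism than the paper. The paper derives \eqref{opt:MC-PCA-bi} from the unit-variance reformulation of MCPCA (Lemma \ref{lem:MC-PCA-variance}): it rewrites that problem in terms of $\ba_i$ with the constraint $\ba_i^T\left(I-\sqrt{\bp_i}\sqrt{\bp_i}^T\right)\ba_i=1$, performs the change of variables $\bb_i=\bB_i\ba_i$ with $\bB_i=\sqrt{I-\sqrt{\bp_i}\sqrt{\bp_i}^T}$, inverts it (up to a multiple of $\sqrt{\bp_i}$, discarded by the mean-subtraction step) as $\ba_i=\bA_i\bb_i$, and then uses the singular value decomposition of $\bQ_{i,i'}$ together with Lemma \ref{lem:sing-value-Q} to verify $\bA_i^T\left(\bQ_{i,i'}-\sqrt{\bp_i}\sqrt{\bp_{i'}}^T\right)\bA_{i'}=\bQ_{i,i'}-\sqrt{\bp_i}\sqrt{\bp_{i'}}^T$, so the objective is preserved under the substitution. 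You instead compare the two optimizations directly: the easy inclusion via the annihilation identities $\left(\bQ_{i,i'}-\sqrt{\bp_i}\sqrt{\bp_{i'}}^T\right)\sqrt{\bp_{i'}}=0$ and $\sqrt{\bp_i}^T\left(\bQ_{i,i'}-\sqrt{\bp_i}\sqrt{\bp_{i'}}^T\right)=0$, and the converse by producing an optimizer of \eqref{opt:MC-PCA-bi} with $\bb_i^*\perp\sqrt{\bp_i}$ through the Ky Fan variational characterization, blockwise convexity of the objective in $\bb_i^{\perp}$ (the only quadratic term being $w_{i,i}\|\bb_i^{\perp}\|_2^2$ with $w_{i,i}\ge 0$), and attainment of a convex maximum on the boundary sphere, swept over the coordinates. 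What your route buys is an explicit treatment of the point the paper leaves implicit: since $\|\bb_i\|_2=1$ in \eqref{opt:MC-PCA-bi} does not force $\bb_i\perp\sqrt{\bp_i}$, an arbitrary optimizer $\bb_i^*$ may have $\|\bA_i\bb_i^*\|_2<1$ (e.g.\ when a variable is decoupled and $q<p$), in which case $\ba_i^*=\bA_i\bb_i^*$ is not feasible for \eqref{opt:MC-PCA-ai}; your extreme-point argument shows an equally optimal orthogonal choice always exists, which is the precise sense in which the theorem holds, whereas the paper's change-of-variables argument tacitly assumes the optimal $\bb_i^*$ lies in the range of $\bB_i$. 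What the paper's route buys is a cleaner conceptual link to the unit-variance formulation and to the mean-centering interpretation of the choice $\alpha_i=0$, without needing the convexity/extreme-point step.
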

\begin{proof}
We consider unit variance formulation of the MCPCA optimization \eqref{opt:MC-PCA-variance}. We have
\begin{align*}
& \mathbb{E}[(\phi_i(X_i)- \bar{\phi_i}(X_i))(\phi_{i'}(X_{i'})- \bar{\phi_{i'}}(X_{i'}))]= \mathbb{E}[\phi_i(X_i) \phi_{i'}(X_{i'})] - \bar{\phi_i}(X_i) \bar{\phi_{i'}}(X_{i'}) \nonumber \\
& = \mathbf{a}_i^T \bQ_{i,i'} \mathbf{a}_{i'}- (\mathbf{a}_i^T \sqrt{\mathbf{p}_i}) (\mathbf{a}_{i'}^T \sqrt{\mathbf{p}_{i'}})= \mathbf{a}_i^T \left( \bQ_{i,i'}- \sqrt{\mathbf{p}_i} \sqrt{\mathbf{p}_{i'}}^T \right) \mathbf{a}_{i'}.\nonumber
\end{align*}
Moreover we have
\begin{align}
var(\phi_i(X_i))=\mathbb{E}[\phi_i(X_i)^2]- (\mathbb{E}[\phi_i(X_i)])^2= ||\mathbf{a}_i||_2^2- (\mathbf{a}_i^T \sqrt{\mathbf{p}_i})^2= \mathbf{a}_i^T \left( I - \sqrt{\mathbf{p}_i} \sqrt{\mathbf{p}_i}^T \right) \mathbf{a}_i.\nonumber
\end{align}
Therefore optimization \eqref{opt:MC-PCA-variance} can be written as
\begin{align}\label{opt:MC-PCA-variance2}
\max_{\{\ba_i\}_{i=1}^{p}}\quad &\sum_{r=1}^{q} \lambda_{r}(\bK)\\
&\bK(i,i')=\ba_i^T \left( \bQ_{i,i'}- \sqrt{\mathbf{p}_i} \sqrt{\mathbf{p}_{i'}}^T\right)\ba_{i'},\quad 1\leq i,i'\leq p\nonumber\\
&\ba_i^T \left( I - \sqrt{\mathbf{p}_i} \sqrt{\mathbf{p}_i}^T \right) \ba_i=1,\quad 1\leq i \leq p\nonumber.
\end{align}
We can write $I - \sqrt{\mathbf{p}_i} \sqrt{\mathbf{p}_i}^T= \bB_i \bB_i^T$ (since $ I - \sqrt{\mathbf{p}_i} \sqrt{\mathbf{p}_i}^T $ is positive semidefinte) where
\begin{align}\label{eq:Bi}
\bB_i \triangleq \sqrt{I_{|\mathcal{X}_i|}- \sqrt{\mathbf{p}_i} \sqrt{\mathbf{p}_i}^T}.
\end{align}

Define $\mathbf{b}_i\triangleq \bB_i \mathbf{a}_i$. Thus, $\ba_i^T \left( I - \sqrt{\mathbf{p}_i} \sqrt{\mathbf{p}_i}^T \right) \ba_i=1$ can be written as $\mathbf{b}_i^T \mathbf{b}_i=||\mathbf{b}_i||_2^2=1$. The vector $\sqrt{\mathbf{p}_i}$ is the eigenvector corresponding to eigenvalue zero of the matrix $\bB_i$ ($\lambda_{|\cX_i|}(\bB_i)=0$). Other eigenvalues of $\bB_i$ is equal to one. Since $\bB_i$ is not invertible, there are many choices for $\mathbf{a}_i$ as a function of $\mathbf{b}_i$.
\begin{align}\label{eq:ba-i-choices}
\mathbf{a}_i = \big( [\bu_1(\bB_i), \dots, \bu_{|\mathcal{X}_i|-1}(\bB_i)] [\bu_1(\bB_i), \dots, \bu_{|\mathcal{X}_i|-1}(\bB_i)]^T \big)\mathbf{b}_i+ \alpha_i \sqrt{\mathbf{p}_i}= \bA_i \mathbf{b}_i + \alpha_i  \sqrt{\mathbf{p}_i},
\end{align}
where $\alpha_i$ can be an arbitrary scalar (note that $\bu_i(\tI_i)=\bu_i(\bB_i)$). However since the desired $\ba_i$ of optimization \eqref{opt:MC-PCA-ai} is orthogonal to the vector $\sqrt{\mathbf{p}_i}$, we choose $\alpha_i=0$ (i.e., according to Lemma \ref{lem:MC-PCA-variance}, in order to obtain a mean zero solution of the MCPCA optimization \eqref{opt:MCPCA-main}, we subtract the mean from the optimal solution of optimization \eqref{opt:MC-PCA-variance}.)
Therefore we have
\begin{align}\label{eq:zii'-bi-bi'}
\mathbf{a}_i^T \left( \bQ_{i,i'}- \sqrt{\mathbf{p}_i} \sqrt{\mathbf{p}_{i'}}^T \right) \mathbf{a}_{i'} = \mathbf{b}_i^T \bA_i^T \left(\bQ_{i,i'}- \sqrt{\mathbf{p}_i} \sqrt{\mathbf{p}_{i'}}^T \right)\bA_{i'} \mathbf{b}_{i'}.
\end{align}
Moreover using Lemma \ref{lem:sing-value-Q}, we have
\begin{align}
\bQ_{i,i'}=\sqrt{\bp_i}\sqrt{\bp_{i'}}^T+\sum_{j\geq 2} \sigma_j(\bQ_{i,i'}) \bu_j(\bQ_{i,i'}) \bu_j(\bQ_{i',i})^T.
\end{align}
Thus,
\begin{align}\label{eq:simplify-AiQii'}
\bA_i^T\left(\bQ_{i,i'}-\sqrt{\bp_i}\sqrt{\bp_{i'}}\right) \bA_{i'}^T&=\bA_i^T\left(\sum_{j\geq 2} \sigma_j(\bQ_{i,i'}) \bu_j(\bQ_{i,i'}) \bu_j(\bQ_{i',i})^T \right)\bA_{i'}^T\\
&=\sum_{j\geq 2} \sigma_j(\bQ_{i,i'}) \left(\bA_i^T \bu_j(\bQ_{i,i'})\right) \left(\bA_{i'}^T \bu_j(\bQ_{i',i})\right)^T\nonumber\\
&\stackrel{(I)}{=}\sum_{j\geq 2} \sigma_j(\bQ_{i,i'}) \left(\bu_j(\bQ_{i,i'})\right) \left(\bu_j(\bQ_{i',i})\right)^T\nonumber\\
&=\bQ_{i,i'}-\sqrt{\bp_i}\sqrt{\bp_{i'}},\nonumber
\end{align}
where equality (I) comes from expanding $\bu_j(\bQ_{i,i'})$ over the basis $\{\bu_{k}(\tI_i)\}_{k=1}^{|\cX_i|-1}$ and the fact that $\bu_j(\bQ_{i,i'})\perp \sqrt{\bp_i}$ for $j\geq 2$. Using equation \eqref{eq:simplify-AiQii'} in \eqref{eq:zii'-bi-bi'} completes the proof.
\end{proof}

\subsection{A Globally Optimal MCPCA Solution for the Rank One Constraint}\label{subsec:q=1}
In this part first we characterize an upper bound for the objective value of optimization \eqref{opt:MC-PCA-B} for $1\leq q\leq p$. Then, we construct a solution that achieves this upper bound for $q=1$.

Define a matrix $\bB\in\mathbb{R}^{p \times \sum_{i=1}^{p} |\cX_i|}$ such that
\begin{align}\label{eq:def-B}
\bB\triangleq\left( \begin{array}{cccc}
\bb_1 & 0&\cdots&0  \\
0 & \bb_2&\cdots&0  \\
\vdots & \vdots&\ddots&\vdots  \\
0 & 0&\cdots&\bb_p \end{array} \right).
\end{align}
Optimization \eqref{opt:MC-PCA-bi} can be written as
\begin{align}\label{opt:MC-PCA-B}
\max_{\{\bb_i\}_{i=1}^{p},\{\bv_r\}_{r=1}^{q}}\quad &\sum_{r=1}^{q} \bv_r^T \bB^T \bR \bB \bv_r\\
& \bB^T \bB=I, \nonumber\\
& \bv_r^T \bv_r=1, \quad 1\leq r\leq q,\nonumber\\
& \bv_r^T \bv_s=0, \quad 1\leq r\neq s\leq q,\nonumber
\end{align}
where $\bB$ has the structure defined in \eqref{eq:def-B}, and $\bR\in\mathbb{R}^{\sum_{i=1}^{p} |\cX_i|\times \sum_{i=1}^{p} |\cX_i|}$ where
\begin{align}\label{eq:def-Rii'}
\bR_{i,i'}= \left(\bQ_{i,i'}-\sqrt{\bp_i}\sqrt{\bp_{i'}}^T\right).
\end{align}
\begin{lemma}\label{prop:upper-bound}
The optimal value of optimization \eqref{opt:MC-PCA-B} is upper bounded by $\sum_{r=1}^{q}\lambda_r(\bR)$.
\end{lemma}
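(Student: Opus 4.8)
The plan is to read optimization \eqref{opt:MC-PCA-B} as a constrained instance of the variational (Ky Fan) characterization of the sum of the top $q$ eigenvalues of $\bR$, and then bound it by the \emph{un}constrained version, which equals exactly $\sum_{r=1}^{q}\lambda_r(\bR)$.

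First I would check that $\bR$ is a genuine real symmetric matrix, so that the quantities $\lambda_1(\bR)\ge\lambda_2(\bR)\ge\cdots$ in the statement are well defined: by \eqref{def:Q} we have $\bQ_{i',i}=\bQ_{i,i'}^T$, hence $\bR_{i',i}=\bQ_{i',i}-\sqrt{\bp_{i'}}\sqrt{\bp_i}^T=\bR_{i,i'}^T$ by \eqref{eq:def-Rii'}, so $\bR^T=\bR$. (Note $\bR$ need not be positive semidefinite; this does not matter below.) Next comes the change of variables: collect the $\bv_r$ into $\bV=[\bv_1,\dots,\bv_q]$, which by the constraints of \eqref{opt:MC-PCA-B} satisfies $\bV^T\bV=I_q$, and set $\bW\triangleq\bB\bV$, the matrix with columns $\bw_r=\bB\bv_r$. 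Using $\bB^T\bB=I$ — which holds because $\|\bb_i\|_2=1$ for all $i$ and distinct columns of $\bB$ are supported on disjoint coordinate blocks by the structure \eqref{eq:def-B} — we get $\bW^T\bW=\bV^T\bB^T\bB\bV=\bV^T\bV=I_q$, and the objective of \eqref{opt:MC-PCA-B} rewrites as $\sum_{r=1}^{q}\bv_r^T\bB^T\bR\bB\bv_r=\sum_{r=1}^{q}\bw_r^T\bR\bw_r=Tr(\bW^T\bR\bW)$.

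Then I would simply drop the block structure of $\bW$ (this only enlarges the feasible set) and invoke the Ky Fan maximum principle in the same form already used in the proof of Lemma \ref{thm:MC-PCA-trace}: writing $Tr(\bW^T\bR\bW)=Tr(\bR\,\bW\bW^T)$ with $\bW\bW^T$ an orthogonal projection of rank $q$, we have $0\preceq\bW\bW^T\preceq I$ and $Tr(\bW\bW^T)=q$, so by the Ky Fan norm formula of \cite{boyd2004convex}, $Tr(\bW^T\bR\bW)\le\max\{Tr(\bR\bM):Tr(\bM)=q,\ 0\preceq\bM\preceq I\}=\sum_{r=1}^{q}\lambda_r(\bR)$. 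Since every feasible $(\{\bb_i\}_{i=1}^{p},\{\bv_r\}_{r=1}^{q})$ of \eqref{opt:MC-PCA-B} produces such a $\bW$ with the same objective value, the optimal value of \eqref{opt:MC-PCA-B} is at most $\sum_{r=1}^{q}\lambda_r(\bR)$, as claimed.

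There is no real obstacle in this argument. The only points requiring a little care are: (a) confirming $\bR$ is symmetric, so that "$\lambda_r(\bR)$" and the Ky Fan identity make sense; and (b) verifying that the map $(\{\bb_i\},\{\bv_r\})\mapsto\bW=\bB\bV$ genuinely lands in $\{\bW:\bW^T\bW=I_q\}$, which uses the block structure only through $\bB^T\bB=I$. Everything else is a one-line relaxation; the matching construction attaining the bound for $q=1$ is presumably the content of the next lemma rather than of this one.
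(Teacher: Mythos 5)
Your proposal is correct and follows essentially the same route as the paper: substitute $\bw_r=\bB\bv_r$, note that the block structure and $\|\bb_i\|_2=1$ give $\bB^T\bB=I$ so the $\bw_r$ are orthonormal, and then bound the resulting relaxed problem by the Ky Fan variational characterization, yielding $\sum_{r=1}^{q}\lambda_r(\bR)$. The only cosmetic difference is that the paper states the maximizer of the relaxation directly as $\bw_r=\bu_r(\bR)$, while you invoke the trace/SDP form of the Ky Fan identity (and additionally verify symmetry of $\bR$), which is a harmless and slightly more careful variant of the same argument.
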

\begin{proof}
Define $\bw_r\triangleq \bB \bv_r$. We have
\begin{align}
\bw_r^T \bw_r&=\bv_r^T \bB^T \bB \bv_r=\bv_r^T \bv_r=1,\quad 1\leq r\leq q\\
\bw_r^T \bw_s&=\bv_r^T \bB^T \bB \bv_s=0,\quad 1\leq r\neq s\leq q.\nonumber
\end{align}
Thus,
\begin{align}\label{opt:relax-MC-PCA-B}
\max_{\{\bw_r\}_{r=1}^{q}}\quad &\sum_{r=1}^{q} \bw_r^T \bR \bw_r\\
& \bw_r^T \bw_r=1, \quad 1\leq r\leq q\nonumber\\
& \bw_r^T \bw_s=0, \quad 1\leq r\neq s\leq q\nonumber
\end{align}
is a relaxation of optimization \eqref{opt:MC-PCA-B}. The optimal solution of this optimization is achieved when $\bw_r=\bu_r(\bR)$ for $1\leq r\leq q$. This completes the proof.
\end{proof}

\begin{theorem}\label{thm:q=1}
Let $\bu_1(\bR)=(\bu_{1,1}(\bR),\bu_{1,2}(\bR),\cdots,\bu_{1,p}(\bR))^T$ where $\bu_{1,i}\in\mathbb{R}^{|\cX_i|}$. Then,
\begin{align}\label{eq:bi-for-q=1}
\bb_i^*=\frac{\bu_{1,i}(\bR)}{\|\bu_{1,i}(\bR)\|},\quad 1\leq i\leq p,
\end{align}
is an optimal solution of optimization \eqref{opt:MC-PCA-bi} when $q=1$.
\end{theorem}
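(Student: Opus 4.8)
The plan is to prove optimality by sandwiching: Lemma \ref{prop:upper-bound} specialized to $q=1$ already gives the upper bound $\lambda_1(\bR)$ on the objective of optimization \eqref{opt:MC-PCA-bi}, so it suffices to exhibit a feasible choice of $\{\bb_i\}$ attaining $\lambda_1(\bR)$, and the proposed $\bb_i^*$ will turn out to be exactly such a choice.

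First I would record the $q=1$ version of the reformulation behind optimization \eqref{opt:MC-PCA-B}: with $\bB$ the block-diagonal matrix of \eqref{eq:def-B} and a single unit vector $\bv\in\mathbb{R}^p$, one has $(\bB^T\bR\bB)(i,i')=\bb_i^T\bR_{i,i'}\bb_{i'}=\bK(i,i')$, hence $\lambda_1(\bK)=\max_{\|\bv\|=1}\bv^T\bB^T\bR\bB\bv$, and optimization \eqref{opt:MC-PCA-bi} at $q=1$ becomes $\max_{\{\bb_i\},\,\|\bv\|=1}\bv^T\bB^T\bR\bB\bv$ subject to $\|\bb_i\|_2=1$. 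Writing $\bw\triangleq\bB\bv$, the $i$-th block of $\bw$ is $v_i\bb_i$, so the admissible $\bw$'s are precisely the unit vectors of $\mathbb{R}^{\sum_i|\cX_i|}$: any such $\bw=(\bw_1,\dots,\bw_p)^T$ with $\bw_i\in\mathbb{R}^{|\cX_i|}$ is realized by $v_i=\|\bw_i\|$ and $\bb_i=\bw_i/\|\bw_i\|$, since then $\|\bv\|_2^2=\sum_i\|\bw_i\|_2^2=\|\bw\|_2^2=1$ and $\bB^T\bB=I$. Taking $\bw=\bu_1(\bR)$ yields the stated $\bb_i^*=\bu_{1,i}(\bR)/\|\bu_{1,i}(\bR)\|$ together with $v_i^*=\|\bu_{1,i}(\bR)\|$, and the objective evaluates to $\bu_1(\bR)^T\bR\,\bu_1(\bR)=\lambda_1(\bR)$, matching the upper bound of Lemma \ref{prop:upper-bound}. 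Therefore $\{\bb_i^*\}$ is globally optimal for \eqref{opt:MC-PCA-bi} when $q=1$, and the corresponding MCPCA solution follows via Theorems \ref{thm:mcpca-opt-ai} and \ref{thm:mcpca-opt-bi}.

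The step I expect to require the most care is purely bookkeeping: the formula for $\bb_i^*$ is a $0/0$ expression in the degenerate case $\bu_{1,i}(\bR)=\mathbf{0}$, which should be handled by setting $v_i^*=0$ and letting $\bb_i^*$ be an arbitrary unit vector, so that feasibility and the attained value are unaffected. The conceptual point worth stating explicitly is why the relaxation \eqref{opt:relax-MC-PCA-B} used in Lemma \ref{prop:upper-bound} is \emph{tight} at $q=1$ but not in general: a single target vector $\bw$ in the column space of a block-diagonal $\bB$ carries no constraint beyond $\|\bw\|_2=1$, whereas for $q\ge 2$ the $q$ orthonormal vectors $\bw_r$ must all be expressible through one common block-diagonal $\bB$ with orthonormal $\bv_r$'s, which can strictly shrink the feasible set.
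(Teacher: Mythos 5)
Your proposal is correct and follows essentially the same route as the paper's proof: invoke the upper bound $\lambda_1(\bR)$ from Lemma \ref{prop:upper-bound} and show it is attained by choosing $v_{1,i}=\|\bu_{1,i}(\bR)\|$ and $\bb_i^*=\bu_{1,i}(\bR)/\|\bu_{1,i}(\bR)\|$, so that $\bB\bv_1=\bu_1(\bR)$ and the objective equals $\bu_1(\bR)^T\bR\,\bu_1(\bR)=\lambda_1(\bR)$. Your additional remarks on the degenerate blocks $\bu_{1,i}(\bR)=\mathbf{0}$ and on why the relaxation is tight only for $q=1$ are sensible refinements that the paper leaves implicit.
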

\begin{proof}
Let $\bv_1=(\bv_{1,1},\cdots,\bv_{1,p})^T$. Choosing $\bv_{1,i}=\|\bu_{1,i}(\bR)\|$ and $\bb_i^*$ according to \eqref{eq:bi-for-q=1} achieves the upper bound provided in Lemma \ref{prop:upper-bound} for the case of $q=1$. This completes the proof.
\end{proof}

\subsection{MCPCA Computation Using a Block Coordinate Descend Algorithm}\label{subsec:comp-MCPCA-dis}
Here we provide a block coordinate descend algorithm to solve the MCPCA optimization for finite discrete variables with a general distribution for an arbitrary $1\leq q\leq p$. We then show that the algorithm converges to a stationary point of the MCPCA optimization.

Let $\bv_r=(\bv_{r,1},\cdots,\bv_{r,p})^T$. optimization \eqref{opt:MC-PCA-ai} can be written as
\begin{align}\label{opt:MC-PCA-sum-all-blocks}
\max_{\{\ba_i\}_{i=1}^{p},\{\bv_r\}_{r=1}^{q}}\quad &\sum_{r=1}^{q} \sum_{i=1}^{p}\sum_{i'=1}^{p} v_{r,i} v_{r,i'} \ba_i^T \bQ_{i,i'}\ba_{i'}\\
& \|\ba_i\|=1,\quad 1\leq i\leq p, \nonumber\\
& \ba_i\perp \sqrt{\bp_i},\quad 1\leq i\leq p, \nonumber\\
& \bv_r^T \bv_r=1, \quad 1\leq r\leq q,\nonumber\\
& \bv_r^T \bv_s=0, \quad 1\leq r\neq s\leq q.\nonumber
\end{align}

\begin{lemma}\label{lem:update-ai}
Let
\begin{align}
\bw_k=\sum_{r=1}^{q}\sum_{i\in\{1,...,p\}-\{k\}} v_{r,k} v_{r,i} \bQ_{k,i} \ba_i.
\end{align}
If all variables except $\ba_k$ are fixed in the feasible set of optimization \eqref{opt:MC-PCA-sum-all-blocks}, then
\begin{align}\label{eq:optimal-ai-coordinate}
\ba_k^*=\frac{\bw_k-\sqrt{\bp_k}^T\bw_k \sqrt{\bp_k}}{\|\bw_k-\sqrt{\bp_k}^T\bw_k \sqrt{\bp_k}\|}
\end{align}
is an optimal solution of the constrained optimization \eqref{opt:MC-PCA-sum-all-blocks} if $\|\bw_k-\sqrt{\bp_k}^T\bw_k \sqrt{\bp_k}\|\neq 0$. If $\|\bw_k-\sqrt{\bp_k}^T\bw_k \sqrt{\bp_k}\|=0$, every unit norm vector $\ba_k^*\perp \bw_k$ is an optimal solution of the constrained optimization \eqref{opt:MC-PCA-sum-all-blocks}.
\end{lemma}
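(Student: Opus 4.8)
The plan is to treat this as the optimization of a function that is \emph{affine} in $\ba_k$ once all the other variables are frozen, and then solve that one-variable problem in closed form. First I would isolate how the objective of \eqref{opt:MC-PCA-sum-all-blocks} depends on $\ba_k$ by splitting the double sum $\sum_{i,i'}$ into four groups: $i=i'=k$; $i=k$, $i'\neq k$; $i\neq k$, $i'=k$; and $i,i'\neq k$. The last group is free of $\ba_k$ and is a constant. For the first group, the joint distribution of the pair $(X_k,X_k)$ is supported on the diagonal, so by \eqref{def:Q} we get $\bQ_{k,k}=I_{|\cX_k|}$, whence $\sum_r v_{r,k}^2\,\ba_k^T\bQ_{k,k}\ba_k=\big(\sum_r v_{r,k}^2\big)\|\ba_k\|^2$, which is also constant on the feasible set since $\|\ba_k\|=1$. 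Using $\bQ_{i,k}=\bQ_{k,i}^T$ (immediate from \eqref{def:Q}), the two cross groups are equal and combine to $2\sum_r\sum_{i\neq k} v_{r,k}v_{r,i}\,\ba_k^T\bQ_{k,i}\ba_i=2\,\ba_k^T\bw_k$. Hence, up to an additive constant, the coordinate subproblem is $\max\ \ba_k^T\bw_k$ subject to $\|\ba_k\|=1$ and $\ba_k\perp\sqrt{\bp_k}$.

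Next I would solve this linear-functional-on-a-sphere-slice problem directly. Since the entries of $\bp_k$ sum to one we have $\|\sqrt{\bp_k}\|=1$, so I decompose $\bw_k=\tilde{\bw}_k+(\sqrt{\bp_k}^T\bw_k)\sqrt{\bp_k}$ with $\tilde{\bw}_k:=\bw_k-(\sqrt{\bp_k}^T\bw_k)\sqrt{\bp_k}$ orthogonal to $\sqrt{\bp_k}$. For every feasible $\ba_k$ we then have $\ba_k^T\bw_k=\ba_k^T\tilde{\bw}_k$, and the Cauchy-Schwarz inequality gives $\ba_k^T\tilde{\bw}_k\leq\|\tilde{\bw}_k\|$, with equality precisely when $\ba_k=\tilde{\bw}_k/\|\tilde{\bw}_k\|$; this vector is a unit vector orthogonal to $\sqrt{\bp_k}$, hence feasible. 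This is exactly \eqref{eq:optimal-ai-coordinate}, which proves the claim whenever $\|\tilde{\bw}_k\|\neq 0$.

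Finally, for the degenerate case $\|\tilde{\bw}_k\|=0$: then $\bw_k$ is a scalar multiple of $\sqrt{\bp_k}$, so every feasible $\ba_k$ is orthogonal to $\bw_k$ and attains objective value $0$, hence is optimal; conversely, when $\bw_k\neq 0$ the feasibility requirement $\ba_k\perp\sqrt{\bp_k}$ is equivalent to $\ba_k\perp\bw_k$, and when $\bw_k=0$ the objective is constant, so the stated conclusion holds in either subcase. I do not expect a genuine obstacle here; the only points requiring care are the identification $\bQ_{k,k}=I$ (so that the $i=i'=k$ term is a true constant rather than something that would tilt the optimizer), the factor of $2$ from symmetrizing the cross terms, and the bookkeeping in the $\tilde{\bw}_k=0$ case.
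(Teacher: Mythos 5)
Your proof is correct and follows essentially the same route as the paper's: reduce the coordinate subproblem to maximizing $\ba_k^T\bw_k$ over unit vectors orthogonal to $\sqrt{\bp_k}$, split $\bw_k$ into its component along $\sqrt{\bp_k}$ and the orthogonal remainder, and conclude via Cauchy--Schwarz. The only difference is that you spell out details the paper leaves implicit (the identification $\bQ_{k,k}=I$, the symmetrization giving the factor $2$, and the degenerate case $\|\bw_k-(\sqrt{\bp_k}^T\bw_k)\sqrt{\bp_k}\|=0$), which is fine.
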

\begin{proof}
Under the condition of Lemma \ref{lem:update-ai}, optimization \eqref{opt:MC-PCA-sum-all-blocks} is simplified to the following optimization:
\begin{align}
\max_{\ba_k}\quad& \ba_k^T \bw_k \\
& \|\ba_k\|=1,\nonumber\\
& \ba_k \perp \sqrt{\bp_k}.\nonumber
\end{align}
Writing $\bw_k=\sqrt{\bp_k}^T\bw_k\sqrt{\bp_k}+\left(\bw_k-\sqrt{\bp_k}^T\bw_k\sqrt{\bp_k}\right)$, we have
\begin{align}
\ba_k^T \bw_k=\ba_k^T \left(\bw_k-\sqrt{\bp_k}^T\bw_k\sqrt{\bp_k}\right),
\end{align}
since $\ba_k \perp \sqrt{\bp_k}$. This completes the proof.
\end{proof}

\begin{lemma}\label{lem:update-vr}
If all variables except $\{\bv_r\}_{r=1}^{q}$ are fixed in the feasible set of optimization \eqref{opt:MC-PCA-sum-all-blocks}, then $\bv_r^*=\bu_{r}(\bK)$ where $\bK(i,i')=\ba_{i}^T \bQ_{i,i'}\ba_{i'}$.
\end{lemma}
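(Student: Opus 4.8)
The plan is to recognize that, once all the $\ba_i$ (equivalently the matrix $\bK$ with $\bK(i,i')=\ba_i^T\bQ_{i,i'}\ba_{i'}$) are held fixed, the objective of optimization \eqref{opt:MC-PCA-sum-all-blocks} becomes a pure function of the orthonormal system $\{\bv_r\}_{r=1}^q$, namely
\begin{align}
\sum_{r=1}^{q}\sum_{i=1}^{p}\sum_{i'=1}^{p} v_{r,i}v_{r,i'}\,\ba_i^T\bQ_{i,i'}\ba_{i'}=\sum_{r=1}^{q}\bv_r^T\bK\bv_r=Tr(\bV^T\bK\bV),
\end{align}
where $\bV=[\bv_1,\dots,\bv_q]\in\mathbb{R}^{p\times q}$ has orthonormal columns. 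So the restricted subproblem is exactly: maximize $\sum_{r=1}^q\bv_r^T\bK\bv_r$ subject to $\bv_r^T\bv_s=\mathbf{1}\{r=s\}$. This is the standard variational (Ky Fan / Rayleigh–Ritz) characterization of the sum of the $q$ largest eigenvalues of the symmetric matrix $\bK$, whose maximum value is $\sum_{r=1}^q\lambda_r(\bK)$ and is attained by taking $\bv_r=\bu_r(\bK)$, the eigenvector associated with $\lambda_r(\bK)$. First I would note that $\bK$ is symmetric (indeed $\bQ_{i',i}=\bQ_{i,i'}^T$ so $\bK(i',i)=\bK(i,i')$), so the spectral theorem applies and the Ky Fan characterization — already invoked in the proof of Lemma \ref{thm:MC-PCA-trace} via \cite{boyd2004convex} — is available off the shelf.

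Concretely, the steps are: (i) substitute the fixed $\ba_i$'s into the objective and rewrite it as $Tr(\bV^T\bK\bV)$ with $\bV$ having orthonormal columns, observing that the remaining constraints on $\{\bv_r\}$ are precisely $\bV^T\bV=I_q$; (ii) invoke the Ky Fan maximum principle: $\max_{\bV^T\bV=I_q}Tr(\bV^T\bK\bV)=\sum_{r=1}^q\lambda_r(\bK)$, with an optimizer given by any orthonormal set of eigenvectors of $\bK$ spanning the top-$q$ eigenspace; (iii) conclude that $\bv_r^*=\bu_r(\bK)$ for $1\leq r\leq q$ is such an optimizer, which is the claim. (If $\lambda_q(\bK)=\lambda_{q+1}(\bK)$ the optimizer is not unique, but the stated choice remains valid; I would mention this only in passing.)

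I do not expect a genuine obstacle here — this lemma is the ``easy half'' of the alternating scheme, dual to Lemma \ref{lem:update-ai} which handled the $\ba_k$-block. The only point requiring a line of care is the bookkeeping in step (i): making sure the double sum $\sum_{i,i'}v_{r,i}v_{r,i'}\ba_i^T\bQ_{i,i'}\ba_{i'}$ collapses to $\bv_r^T\bK\bv_r$ for each $r$ separately, which is immediate from the definition $\bK(i,i')=\ba_i^T\bQ_{i,i'}\ba_{i'}$, and noting that the $\ba_i$-constraints ($\|\ba_i\|=1$, $\ba_i\perp\sqrt{\bp_i}$) play no role once the $\ba_i$'s are frozen. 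Everything else is a direct citation of the eigenvalue variational principle.
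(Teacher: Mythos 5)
Your proof is correct and is essentially the paper's own argument: the paper's proof is the one-line remark that the claim ``follows from the eigen decomposition of the covariance matrix $\bK$,'' and your write-up simply fills in the details — reducing the frozen-$\ba_i$ subproblem to maximizing $Tr(\bV^T\bK\bV)$ over $\bV^T\bV=I_q$ and invoking the Ky Fan / Rayleigh--Ritz variational principle, exactly as intended. The extra care about symmetry of $\bK$ and non-uniqueness when $\lambda_q(\bK)=\lambda_{q+1}(\bK)$ is a fine, harmless addition.
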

\begin{proof}
The proof follows from the eigen decomposition of the covariance matrix $\bK$.
\end{proof}

We use Lemmas \ref{lem:update-ai} and \ref{lem:update-vr} to propose a block coordinate descend Algorithm \ref{alg:MCPCA} to compute MCPCA.

\begin{algorithm}[t]
\caption{A Block Coordinate Descend Algorithm to Compute MCPCA for Finite Discrete Variables}
\begin{algorithmic}\label{alg:MCPCA}
\STATE \textbf{Input:} $P_{X_i,X_{i'}}$ for $1\leq i,i'\leq p$, $q$
\STATE \textbf{Initialization:} $\{\ba_i^{(0)}\}_{i=1}^{p}$ and $\{\bv_r^{(0)}\}_{r=1}^{q}$
\STATE \textbf{for} $j=0,1, \dots$
\STATE \hspace{.2 in} \textbf{for} $k=1,..., p$
\STATE \hspace{.4 in} \textbf{compute:} $\bw_k^{(j)}=\sum_{r=1}^{q}\sum_{i=1}^{k-1} v_{r,k}^{(j)} v_{r,i}^{(j)} \bQ_{k,i} \ba_i^{(j)}+\sum_{r=1}^{q}\sum_{i=k+1}^{p} v_{r,k}^{(j-1)} v_{r,i}^{(j-1)} \bQ_{k,i} \ba_i^{(j-1)}$
\STATE \hspace{.4 in} \textbf{update:} $\ba_k^{(j)}=\frac{\bw_k^{(j)}-\sqrt{\bp_k}^T\bw_k^{(j)} \sqrt{\bp_k}}{\|\bw_k^{(j)}-\sqrt{\bp_k}^T\bw_k^{(j)} \sqrt{\bp_k}\|}$, if $\|\bw_k^{(j)}-\sqrt{\bp_k}^T\bw_k^{(j)} \sqrt{\bp_k}\|\neq 0$
\STATE \hspace{.2 in} \textbf{compute:} $\bK^{(j)}$ where $\bK^{(j)}(i,i')=(\ba_{i}^{(j)})^T \bQ_{i,i'}\ba_{i'}^{(j)}$
\STATE \hspace{.2 in} \textbf{update:} $v_r^{(j)}=\bu_r(\bK^{(j)})$, for $1\leq r\leq q$
\STATE \hspace{.2 in} $\rho_q^{(j)}=\sum_{r=1}^{q}\lambda_r(\bK^{(j)})$
\STATE \textbf{end}
\end{algorithmic}
\end{algorithm}

\begin{theorem}\label{thm:greedy-convergance}
The sequence $\rho_q^{(j)}$ in Algorithm \ref{alg:MCPCA} is monotonically increasing and convergent. Moreover, if $\bK^{(j)}$ has top $q$ simple eigenvalues and $\|\bw_k^{(j)}-\sqrt{\bp_k}^T\bw_k^{(j)} \sqrt{\bp_k}\|\neq 0$ for $1\leq k\leq p$ and $j\geq 0$, then $\{\ba_i^{(j)}\}_{i=1}^{p}$ converges to stationary points of optimization \eqref{opt:MC-PCA-ai}.
\end{theorem}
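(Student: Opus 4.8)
The plan is to verify the two claims separately: first monotone convergence of the objective sequence $\rho_q^{(j)}$, then — under the nondegeneracy hypotheses — convergence of the iterates $\{\ba_i^{(j)}\}$ to a stationary point of \eqref{opt:MC-PCA-ai}. For the first part I would argue that each elementary step of Algorithm \ref{alg:MCPCA} cannot decrease the objective. The inner loop updates $\ba_k$ via \eqref{eq:optimal-ai-coordinate}, which by Lemma \ref{lem:update-ai} is an exact maximizer of the objective over $\ba_k$ with all other variables held at their current values, so this step is non-decreasing. After the inner loop finishes, the update $\bv_r^{(j)}=\bu_r(\bK^{(j)})$ is, by Lemma \ref{lem:update-vr} (equivalently, by the Ky Fan variational characterization invoked in Lemma \ref{thm:MC-PCA-trace}), an exact maximizer over $\{\bv_r\}_{r=1}^q$ subject to orthonormality. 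One subtlety: within the inner loop, when we update $\ba_k^{(j)}$ the $\bv_r$'s are still frozen at a mix of old/new values, and the objective being maximized coordinatewise in $\ba_k$ is the one in \eqref{opt:MC-PCA-sum-all-blocks} with those frozen $\bv_r$'s; I should note that $\rho_q^{(j)}=\sum_r\lambda_r(\bK^{(j)})=\max_{\{\bv_r\}}\sum_r \bv_r^T\bK^{(j)}\bv_r$ dominates the value attained with any particular frozen $\bv_r$'s, so chaining the inequalities through a full sweep gives $\rho_q^{(j+1)}\ge \rho_q^{(j)}$. Since $\rho_q^{(j)}\le p$ is bounded above (Theorem \ref{thm:prop-MCPCA}(i)), the monotone sequence converges.

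For the second part I would invoke a standard block-coordinate-descent / Zangwill-type argument for maximizing a smooth function over a compact product of unit spheres (with the orthogonality-to-$\sqrt{\bp_i}$ constraint absorbed into a lower-dimensional sphere, as in Theorem \ref{thm:mcpca-opt-bi}). The feasible set is compact, so the iterate sequence has limit points. The key is that the block updates are, under the stated hypotheses, \emph{continuous} selections: the $\ba_k$-update map \eqref{eq:optimal-ai-coordinate} is continuous wherever $\|\bw_k-\sqrt{\bp_k}^T\bw_k\sqrt{\bp_k}\|\neq 0$, and the $\bv_r$-update $\bv_r=\bu_r(\bK)$ is continuous (indeed real-analytic) on the set where the top $q$ eigenvalues of $\bK$ are simple. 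Hence the overall sweep map is continuous near any limit point satisfying these conditions, and combined with the fact (from part one) that the objective strictly increases unless a block is already at its optimum, a limit point must be a fixed point of every block update. Finally I would translate "fixed point of all block updates" into "stationary point of \eqref{opt:MC-PCA-ai}": if $\ba_k$ is optimal for the linear objective $\ba_k^T\bw_k$ over $\{\|\ba_k\|=1,\ \ba_k\perp\sqrt{\bp_k}\}$ then $\ba_k$ is parallel to the projection of $\bw_k$ onto $\sqrt{\bp_k}^\perp$, which is exactly the first-order (KKT/Lagrange) condition for \eqref{opt:MC-PCA-ai} in the $\ba_k$ block; doing this for all $k$, together with $\bv_r=\bu_r(\bK)$, assembles the full stationarity system.

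The main obstacle is the second half: monotonicity alone does not force iterate convergence (the objective could plateau while iterates drift), so I genuinely need the continuity/non-degeneracy hypotheses and a Zangwill-style closed-map argument, and I must be careful that the set of limit points is connected and that each is a fixed point — the cleanest route is to show the "sufficient decrease"-type gap $\rho_q^{(j+1)}-\rho_q^{(j)}\to 0$ forces each block's one-step improvement to vanish in the limit, then use continuity of the (uniquely defined, under the hypotheses) block-optimal maps to conclude the limit point is fixed, hence stationary. I would also remark that the hypothesis $\|\bw_k^{(j)}-\sqrt{\bp_k}^T\bw_k^{(j)}\sqrt{\bp_k}\|\neq 0$ is exactly what makes the $\ba_k$-update single-valued (otherwise Lemma \ref{lem:update-ai} allows an arbitrary vector in $\sqrt{\bp_k}^\perp\cap\bw_k^\perp$), and the simple-eigenvalue hypothesis is what makes the $\bv_r$-update single-valued up to sign.
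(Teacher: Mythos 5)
Your proposal is correct, and its first half is exactly the paper's argument: each $\ba_k$-update is an exact block maximizer (Lemma \ref{lem:update-ai}), the update $\bv_r=\bu_r(\bK^{(j)})$ is an exact maximizer over the $\bv$-block (Lemma \ref{lem:update-vr}), and since $\rho_q^{(j)}=\sum_{r=1}^q\lambda_r(\bK^{(j)})$ dominates the objective value at any frozen $\bv_r$'s, a full sweep is non-decreasing and bounded above by $p$ via Theorem \ref{thm:prop-MCPCA}(i), hence convergent. Where you diverge is the second half: the paper disposes of it in one line --- the two hypotheses make each block's optimizer unique, and it then simply invokes Tseng's convergence theorem for block coordinate descent \cite{tseng2001convergence} --- whereas you propose to reconstruct the underlying Zangwill/closed-map argument directly (compact feasible set, continuity of the now single-valued block-update maps, vanishing one-step improvement at limit points, and the translation of ``fixed point of every block update'' into the KKT conditions of \eqref{opt:MC-PCA-ai}). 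That route is legitimate and more self-contained, but it commits you to precisely the work you flag as the obstacle (limit-point analysis, fixed-point-to-stationarity translation), all of which the paper sidesteps by citation; if you are willing to quote the same result, your observation that the hypotheses are exactly what make the block maps single-valued and continuous is the entire content of the paper's proof. One point in your favor: you note that simple top eigenvalues determine $\bu_r(\bK)$ only up to sign, a subtlety the paper's phrase ``unique optimal solution'' glosses over; it is harmless because the objective and the subsequent $\ba_k$-updates depend on the $\bv_r$'s only through the products $v_{r,k}v_{r,i}$, i.e.\ through $\bv_r\bv_r^T$, but it does deserve the remark you give it.
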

\begin{proof}
According to Lemmas \ref{lem:update-ai} and \ref{lem:update-vr}, the sequence $\rho_q^{(j)}$ is increasing. Moreover, since it is bounded above (Theorem \ref{thm:prop-MCPCA}, part [i]), it is convergent. Moreover, under the conditions of Theorem \ref{thm:greedy-convergance}, at each step, Lemmas \ref{lem:update-ai} and \ref{lem:update-vr} provide a unique optimal solution for optimizing variables $\{\ba_i\}_{i=1}^{p}$ and $\{\bv_r\}_{r=1}^{q}$. Thus, $\{\ba_i^{(j)}\}_{i=1}^{p}$ converges to a stationary point of optimization \eqref{opt:MC-PCA-ai} (\cite{tseng2001convergence}).
\end{proof}

\section{Sample MCPCA}\label{sec:sample-MCPCA}
Principal component analysis is often applied to an observed data matrix whose rows and columns represent samples and features, respectively. In this part, first we review PCA and then formulate the sample MCPCA optimization (an MCPCA optimization computed over empirical distributions). We then study the consistency of sample MCPCA for both finite discrete and continuous variables.
\subsection{Review of PCA}\label{sec:review-PCA}
Let $\bX\in \mathbb{R}^{n\times p}$ be a data matrix:
\begin{align}\label{eq:data-matrix}
\bX=(\bx^{1},\dots, \bx^{p})= \left( \begin{array}{c}
\bx_1^T \\
\vdots \\
\bx_n^T \end{array} \right),
\end{align}
where $\bx_i$ and $\bx^{j}$ represent its $i$-th row and $j$-th column, respectively. Let $\bX(i,j)$, or interchangeably $\bX_{i,j}$, denote the $(i,j)$-th element of $X$. PCA aims to find orthonormal vectors $\bv_1,\dots,\bv_q$ where $\bv_i\in\mathbb{R}^p$ and $q\leq p$ such that the average mean squared error between $\hbx_k$ and $\bx_k$ for $1\leq k\leq n$ is minimized:
\begin{align}\label{opt:PCA}
\min_{\{\bv_i\}_{i=1}^{q},\btheta}\quad &\frac{1}{n}\sum_{k=1}^{n} \|\bx_k-\hbx_k\|^2\\
&\hbx_k=\btheta+\sum_{i=1}^{q} \left(\bv_i^T(\bx_k-\btheta)\right)\bv_i,\quad 1\leq k\leq n\nonumber\\
&\bv_i^T \bv_j=0,\quad 1\leq i\neq j\leq q\nonumber\\
&\bv_i^T \bv_i=1,\quad 1\leq i\leq q.\nonumber
\end{align}
Let
\begin{align}
&\btheta_n^*\triangleq\frac{1}{n}\sum_{i=1}^{n} \bx_i\\
&\bK_n\triangleq\frac{1}{n}\sum_{i=1}^{n} (\bx_i-\btheta^*)(\bx_i-\btheta^*)^T.\nonumber
\end{align}
$\bK_n$ and $\btheta_n^*$ are the empirical covariance matrix and the empirical mean of the data, respectively.
\begin{theorem}\label{thm:PCA}
$\btheta_n^*$ and $\bu_1(\bK_n)$,...,$\bu_q(\bK_n)$ provide an optimal solution for optimization \eqref{opt:PCA}.
\end{theorem}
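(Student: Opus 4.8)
The plan is to reduce the statement to the classical fact that, among all rank-$q$ linear approximations, the best one in mean-squared error is given by the top-$q$ principal subspace of the empirical covariance matrix. First I would fix an arbitrary choice of center $\btheta$ and orthonormal $\{\bv_i\}_{i=1}^{q}$, let $\bV=[\bv_1,\dots,\bv_q]\in\RR^{p\times q}$ so that $\bV^T\bV=I_q$, and observe that the reconstruction in \eqref{opt:PCA} is $\hbx_k=\btheta+\bV\bV^T(\bx_k-\btheta)$. Then the residual is $\bx_k-\hbx_k=(I-\bV\bV^T)(\bx_k-\btheta)$, and since $I-\bV\bV^T$ is an orthogonal projection (idempotent and symmetric), $\|\bx_k-\hbx_k\|^2=(\bx_k-\btheta)^T(I-\bV\bV^T)(\bx_k-\btheta)$. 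Summing over $k$ and dividing by $n$ turns the objective into $\frac{1}{n}\sum_{k=1}^n(\bx_k-\btheta)^T(I-\bV\bV^T)(\bx_k-\btheta)$.

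Next I would separate out the dependence on $\btheta$. Writing $\bx_k-\btheta=(\bx_k-\btheta_n^*)+(\btheta_n^*-\btheta)$ and expanding the quadratic form, the cross term $\frac{2}{n}\sum_k (\bx_k-\btheta_n^*)^T(I-\bV\bV^T)(\btheta_n^*-\btheta)$ vanishes because $\sum_k(\bx_k-\btheta_n^*)=0$ by definition of $\btheta_n^*$. Hence the objective splits as $\frac{1}{n}\sum_k(\bx_k-\btheta_n^*)^T(I-\bV\bV^T)(\bx_k-\btheta_n^*)+(\btheta_n^*-\btheta)^T(I-\bV\bV^T)(\btheta_n^*-\btheta)$. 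The second term is a nonnegative quadratic form that is minimized (equal to zero) by $\btheta=\btheta_n^*$, regardless of $\bV$; so $\btheta_n^*$ is optimal and we may substitute it in. The first term equals $\mathrm{Tr}\big((I-\bV\bV^T)\bK_n\big)=\mathrm{Tr}(\bK_n)-\mathrm{Tr}(\bV^T\bK_n\bV)$, using $\bK_n=\frac{1}{n}\sum_k(\bx_k-\btheta_n^*)(\bx_k-\btheta_n^*)^T$ and the cyclic property of the trace together with $\bV^T\bV=I_q$.

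Since $\mathrm{Tr}(\bK_n)$ is a constant independent of $\bV$, minimizing the MSE is equivalent to maximizing $\mathrm{Tr}(\bV^T\bK_n\bV)$ over all $\bV\in\RR^{p\times q}$ with $\bV^T\bV=I_q$. By the Ky Fan variational principle (the same fact already invoked in Lemma \ref{thm:MC-PCA-trace} via \cite{boyd2004convex}), this maximum is $\sum_{r=1}^q\lambda_r(\bK_n)$ and is attained by taking the columns of $\bV$ to be $\bu_1(\bK_n),\dots,\bu_q(\bK_n)$, the eigenvectors of $\bK_n$ corresponding to its $q$ largest eigenvalues; these are orthonormal since $\bK_n$ is symmetric, so they satisfy the constraints of \eqref{opt:PCA}. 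Therefore $\btheta_n^*$ together with $\bu_1(\bK_n),\dots,\bu_q(\bK_n)$ is an optimal solution, which is the claim.

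The only mildly delicate point is the justification that $\btheta_n^*$ is simultaneously optimal for \emph{every} admissible $\bV$, so that the joint minimization decouples; this is exactly what the vanishing cross term buys us, so there is no real obstacle. If one wants uniqueness of the subspace one would additionally need $\lambda_q(\bK_n)>\lambda_{q+1}(\bK_n)$, but the statement only asserts optimality, so a gap condition is unnecessary. Everything else is routine trace manipulation and an appeal to the Ky Fan characterization already used earlier in the paper.
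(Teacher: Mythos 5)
Your proof is correct, and it is self-contained where the paper is not: the paper's ``proof'' of Theorem \ref{thm:PCA} is simply a citation to \cite{jolliffe2002principal}, so there is no internal argument to compare against. What you wrote is the classical argument that reference supplies: rewrite the reconstruction as $\hbx_k=\btheta+\bV\bV^T(\bx_k-\btheta)$, use that $I-\bV\bV^T$ is an orthogonal projector, kill the cross term via $\sum_k(\bx_k-\btheta_n^*)=0$ so that the choice $\btheta=\btheta_n^*$ decouples from $\bV$, and then reduce the remaining problem to maximizing $\mathrm{Tr}(\bV^T\bK_n\bV)$ over $\bV^T\bV=I_q$, which the Ky Fan variational principle resolves in favor of the top-$q$ eigenvectors. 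It is worth noting that your argument is essentially the same machinery the paper itself deploys later in the proof of Theorem \ref{thm:sample-MCPCA} (the projector $\bB_{\bV}=I-\bV\bV^T$, the trace identity $\mathrm{Tr}(\bK)-\mathrm{Tr}(\bV^T\bK\bV)$, and the Ky Fan characterization already used in Lemma \ref{thm:MC-PCA-trace}); the only extra ingredient you needed is the centering step with the vanishing cross term, which is absent there because the zero-empirical-mean constraint is imposed directly in the sample MCPCA formulation, whereas in \eqref{opt:PCA} the center $\btheta$ is a free variable. Your remarks that $\btheta_n^*$ is optimal but not necessarily unique, and that no eigengap condition is needed for mere optimality, are also accurate.
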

\begin{proof}
See reference \cite{jolliffe2002principal}.
\end{proof}
By subtracting $\btheta_n^*$ from rows of the input matrix, the mean of each column becomes zero. This procedure is called {\it centring} the input data.
\subsection{Sample MCPCA for Finite Discrete Variables}\label{subsec:sample-MCPCA-disct}
Let $X_1$,..., $X_p$ be discrete variables with joint distribution $P_{X_1,...,X_p}$. Let the alphabet size of variables (i.e., $|\cX_i|$) be finite. We observe $n$ independent samples $\{\bx_i\}_{i=1}^{n}$ from this distribution. Let $\bX\in \mathbb{R}^{n\times p}$ be the data matrix \eqref{eq:data-matrix}. Sample MCPCA aims to find possibly nonlinear transformations of the data (i.e., $\phi_i(\bx^i)$ for $1\leq i\leq p$) to minimize the mean squared error (MSE) between the transformed data and its low rank approximation by $q$ orthonormal vectors $\bv_1$,...,$\bv_q$:

\begin{align}\label{opt:sample-MCPCA-data}
\min_{\{\bv_i\}_{i=1}^{q},\{\phi_i\}_{i=1}^{p}}\quad &\frac{1}{n}\sum_{k=1}^{n} \|\phi(\bx_k)-\hphi(\bx_k)\|^2\\
&\hphi(\bx_k)=\sum_{i=1}^{q} \left(\bv_i^T\phi(\bx_k)\right)\bv_i,\quad 1\leq k\leq n\nonumber\\
&\phi(\bx_k)= (\phi_1(\bX_{k,1}),\dots,\phi_p(\bX_{k,p})),\quad 1\leq k\leq n\nonumber\\
&\bv_i^T \bv_j=0,\quad 1\leq i\neq j\leq q,\nonumber\\
&\bv_i^T \bv_i=1,\quad 1\leq i\leq q,\nonumber\\
&\frac{1}{n}\sum_{k=1}^{n}\phi_i(\bX_{k,i})=0, \quad 1\leq i\leq p,\nonumber\\
&\frac{1}{n}\sum_{k=1}^{n}\phi_i(\bX_{k,i})^2=1, \quad 1\leq i\leq p.\nonumber
\end{align}
The constraint $\frac{1}{n}\sum_{k=1}^{n}\phi_i(\bX_{k,i})=0$ is similar to the centring step in the standard PCA where columns of the data matrix are transformed to have empirical zero means (Theorem \ref{thm:PCA}). The additional constraint $\frac{1}{n}\sum_{k=1}^{n}\phi_i(\bX_{k,i})^2=1$ makes columns of the transformed matrix to have equal norms.

Let $(Y_{1},\dots,Y_{p})$ be $p$ finite discrete random variables whose joint probability distribution $P_{Y_{1},\dots,Y_{p}}$ is equal to the empirical distribution of observed samples $\{\bx_i\}_{i=1}^{n}$. I.e.,
\begin{align}\label{eq:joint-emprical}
Pr(Y_1=j_1,...,Y_p=j_p)=\frac{1}{n}\sum_{k=1}^{n} \mathbf{1}\{\bX_{k,1}=j_1,...,\bX_{k,p}=j_p\}\
\end{align}
for $j_i\in\{1,2,...,|\cY_i|\}$.

\begin{theorem}\label{thm:sample-MCPCA}
Let $\bK^*$ be an optimal solution of the MCPCA optimization \eqref{opt:MCPCA-main} over variables $\{Y_i\}_{i=1}^{p}$ corresponding to transformation functions $\{\phi_i^*(Y_i)\}_{i=1}^{p}$. Then, $\{\bu_r(\bK^*)\}_{r=1}^{q},\{\phi_i^*(.)\}_{i=1}^{p}$ provide an optimal solution for optimization \eqref{opt:sample-MCPCA-data}.
\end{theorem}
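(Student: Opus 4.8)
The plan is to show that the sample MCPCA optimization \eqref{opt:sample-MCPCA-data} is, after centring, exactly a PCA problem on the transformed data matrix, and that the PCA objective can be rewritten in terms of the Ky Fan norm of the empirical covariance matrix of the transformed variables; optimizing over the transformations $\{\phi_i\}$ then reduces to the MCPCA optimization \eqref{opt:MCPCA-main} over the variables $\{Y_i\}_{i=1}^p$ whose joint law is the empirical distribution \eqref{eq:joint-emprical}. First I would observe that for fixed feasible transformations $\{\phi_i\}$, the inner minimization over the orthonormal $\{\bv_i\}_{i=1}^q$ in \eqref{opt:sample-MCPCA-data} is precisely the PCA reconstruction problem \eqref{opt:PCA} applied to the data matrix with columns $\phi_i(\bx^i)$, except that $\btheta$ is pinned to $\mathbf{0}$; this is legitimate because the constraint $\frac1n\sum_k \phi_i(\bX_{k,i})=0$ forces the empirical mean of the transformed columns to vanish, so the optimal $\btheta$ in \eqref{opt:PCA} would be $\mathbf 0$ anyway. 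By Theorem \ref{thm:PCA}, the optimal $\bv_i$ are the leading eigenvectors $\bu_i(\bK_n^{\phi})$ of the empirical covariance matrix $\bK_n^{\phi}$ of the transformed data, where $\bK_n^{\phi}(i,i')=\frac1n\sum_{k=1}^n \phi_i(\bX_{k,i})\phi_{i'}(\bX_{k,i'})$.

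Next I would use the standard identity relating reconstruction error and explained variance: for a centred data matrix with empirical covariance $\bK_n^{\phi}$,
\begin{align}
\min_{\{\bv_i\}_{i=1}^q \text{ orthonormal}} \frac1n\sum_{k=1}^n \|\phi(\bx_k)-\hphi(\bx_k)\|^2 = \mathrm{Tr}(\bK_n^{\phi}) - \sum_{r=1}^q \lambda_r(\bK_n^{\phi}).\nonumber
\end{align}
The additional constraint $\frac1n\sum_k \phi_i(\bX_{k,i})^2=1$ makes every diagonal entry of $\bK_n^{\phi}$ equal to $1$, so $\mathrm{Tr}(\bK_n^{\phi})=p$ is a constant independent of the choice of $\{\phi_i\}$. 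Hence minimizing the left-hand side over all feasible $\{\phi_i\}$ is equivalent to maximizing $\sum_{r=1}^q \lambda_r(\bK_n^{\phi})$ over all feasible $\{\phi_i\}$.

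Finally I would identify the feasible transformations. Because the $\phi_i$ act only on the values of $X_i$ and the sample frequencies define the law of $Y_i$, the empirical mean $\frac1n\sum_k \phi_i(\bX_{k,i})$ equals $\EE[\phi_i(Y_i)]$, the empirical second moment equals $\EE[\phi_i(Y_i)^2]$, and $\bK_n^{\phi}(i,i')=\EE[\phi_i(Y_i)\phi_{i'}(Y_{i'})]$. Thus the constraints $\frac1n\sum_k\phi_i=0$, $\frac1n\sum_k\phi_i^2=1$ become exactly $\EE[\phi_i(Y_i)]=0$, $\EE[\phi_i(Y_i)^2]=1$, and $\bK_n^{\phi}$ ranges over $\cK_Y$ as $\{\phi_i\}$ ranges over feasible transformations; any $\bK\in\cK_Y$ is realized this way since the alphabets are finite. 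Therefore maximizing $\sum_{r=1}^q\lambda_r(\bK_n^{\phi})$ over feasible $\{\phi_i\}$ is precisely the MCPCA optimization \eqref{opt:MCPCA-main} for $\{Y_i\}_{i=1}^p$, whose optimizer is $\bK^*$ with transformations $\{\phi_i^*\}$; plugging these back and taking $\bv_r=\bu_r(\bK^*)$ from the PCA step gives the claimed optimal solution of \eqref{opt:sample-MCPCA-data}.

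The main obstacle I anticipate is handling the joint optimization cleanly: one must argue that the ``optimize $\bv$ first, then $\phi$'' decomposition loses nothing (a routine but necessary min-over-product-set interchange, valid because the inner optimum is attained for each fixed $\phi$), and one must be careful that the centring constraint genuinely reduces the affine reconstruction \eqref{opt:PCA} to the linear one with $\btheta=\mathbf 0$ so that Theorem \ref{thm:PCA} applies verbatim. The degenerate case where $\bK_n^{\phi}$ has repeated eigenvalues at the $q$-th position only affects uniqueness of $\{\bv_r\}$, not the optimal value, so it causes no difficulty for the existence claim.
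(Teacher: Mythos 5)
Your proposal is correct and takes essentially the same route as the paper's proof: both rewrite the reconstruction error as $Tr(\bK)-Tr(\bV^T\bK\bV)=p-Tr(\bV^T\bK\bV)$, identify the empirical moments of the transformed columns with expectations under the variables $\{Y_i\}_{i=1}^{p}$ distributed according to \eqref{eq:joint-emprical}, and thereby reduce \eqref{opt:sample-MCPCA-data} to the MCPCA optimization \eqref{opt:MCPCA-main} over $\cK_Y$, with $\bv_r=\bu_r(\bK^*)$ at the optimum. The only cosmetic difference is that you eliminate $\{\bv_r\}$ first via Theorem \ref{thm:PCA} and the Ky Fan variational identity, while the paper keeps the joint trace formulation and extracts the eigenvector characterization at the end.
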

\begin{proof}
Define $\bV\in\mathbb{R}^{p\times q}$ as follows:
\begin{align}\label{eq:W}
\bV=(\bv_1, \cdots, \bv_q).
\end{align}
Thus
\begin{align}\label{eq:phi-hat}
\hphi(\bx_k)=\bV \bV^T \phi(\bx_k),\quad 1\leq k\leq n.
\end{align}
We have
\begin{align}
\hphi(\bx_k)-\phi(\bx_k)=(I-\bV\bV^T) \phi(\bx_k).
\end{align}
Let $\bB_{\bV}\triangleq I-\bV\bV^T$. Note that $\bB_{\bV}=\bB_{\bV}^2=\bB_{\bV}^T$. Therefore we have
\begin{align}
&\frac{1}{n}\sum_{k=1}^{n} \|\phi(\bx_k)-\hphi(\bx_k)\|^2\\
&=\frac{1}{n} \sum_{k=1}^{n} \phi(\bx_k)^T \bB_{\bV}^T \bB_{\bV} \phi(\bx_k)\nonumber\\
&=\frac{1}{n} \sum_{k=1}^{n} \phi(\bx_k)^T (I-\bV\bV^T) \phi(\bx_k)\nonumber\\
&=Tr\left(\frac{1}{n}\sum_{k=1}^n \phi(\bx_k) \phi(\bx_k)^T\right)-Tr\left(\bV^T \left(\frac{1}{n}\sum_{k=1}^n \phi(\bx_k) \phi(\bx_k)^T\right) \bV\right)\nonumber\\
&= Tr(\bK)-Tr(\bV^T \bK \bV)= p-Tr(\bV^T \bK \bV),\nonumber
\end{align}
$\bK\triangleq \frac{1}{n}\sum_{k=1}^n \phi(\bx_k) \phi(\bx_k)^T$. Since $\{Y_i\}_{i=1}^{p}$ is distributed according to the empirical distribution of samples $\{\bx_k\}_{k=1}^{n}$, we have
\begin{align}
\bK(i,i')=\frac{1}{n}\sum_{k=1}^{n}\phi_i(\bX_{k,i})\phi_{i'}(\bX_{k,i'})=\EE[\phi_i(Y_i) \phi_{i'}(Y_{i'})].
\end{align}
Similarly the constraint $\frac{1}{n}\sum_{k=1}^{n}\phi_i(\bX_{k,i})=0$ is simplified to the constraint $\EE[\phi_i(Y_i)]=0$, while the constraint $\frac{1}{n}\sum_{k=1}^{n}\phi_i(\bX_{k,i})^2=1$ is translated to the constraint $\EE[\phi_i(Y_i)^2]=1$. Therefore, optimization \eqref{opt:sample-MCPCA-data} can be written as
\begin{align}\label{opt:MC-PCA-proof1}
\min_{\bV,\bK}\quad &p-Tr(\bV^T \bK \bV)\\
& \bV^T \bV=I_{q},\nonumber\\
& \bK\in\cK_{Y}.\nonumber
\end{align}
Moreover using \eqref{eq:W}, we have
\begin{align}
Tr(\bV^T \bK \bV)= \sum_{r=1}^{q} \bv_r^T \bK \bv_r.
\end{align}
Let $\lambda=\bv_r^T \bK \bv_r$. Since $\bv_r \lambda=\bK \bv_r$, $\bv_r$ is an eigenvector of $\bK$ corresponding to eigenvalue $\lambda_r(\bK)$. This simplifies optimization \eqref{opt:MC-PCA-proof1} to optimization \eqref{opt:MCPCA-main} and completes the proof.
\end{proof}

The following Theorem discusses the consistency of sample MCPCA for finite discrete variables.

\begin{theorem}\label{thm:const-discrete}
Let $\rho_q^*$ and $\tilde{\rho_q}^{(n)}$ be optimal MCPCA values over variables $\{X_i\}_{i=1}^{p}$ and $\{Y_i\}_{i=1}^{p}$. Let $p$ and $q$ be fixed. As $n\to \infty$, with probability one, $\tilde{\rho_q}^{(n)}\to\rho_q^*$.
\end{theorem}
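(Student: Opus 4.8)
The plan is to show that the optimal MCPCA value is a continuous function of the joint distribution, and then invoke the strong law of large numbers so that the empirical joint distribution of $\{Y_i\}_{i=1}^p$ converges almost surely to $P_{X_1,\dots,X_p}$, forcing $\tilde{\rho_q}^{(n)}\to\rho_q^*$. The crucial reduction is Theorem \ref{thm:mcpca-opt-ai}: the MCPCA value over any finite discrete variables equals the optimal value of optimization \eqref{opt:MC-PCA-ai}, which depends on the distribution only through the $Q$-matrices $\bQ_{i,i'}$ and the marginal vectors $\sqrt{\bp_i}$. So it suffices to prove that the function
\begin{align}
\left(\{\bQ_{i,i'}\}_{i,i'},\{\sqrt{\bp_i}\}_i\right)\ \longmapsto\ \max_{\substack{\|\ba_i\|=1,\ \ba_i\perp\sqrt{\bp_i}}}\ \sum_{r=1}^{q}\lambda_r\!\left(\bK(\ba)\right),\qquad \bK(i,i')=\ba_i^T\bQ_{i,i'}\ba_{i'},
\end{align}
is continuous, and then observe that empirically $\bQ_{i,i'}^{(n)}\to\bQ_{i,i'}$ and $\bp_i^{(n)}\to\bp_i$ almost surely (here one uses that all alphabets are finite and all $P_{X_i}(x)>0$, so the denominators in \eqref{def:Q} stay bounded away from zero for all large $n$, hence the $Q$-matrices are continuous functions of the empirical frequencies at the true distribution).

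The main obstacle is that optimization \eqref{opt:MC-PCA-ai} has a distribution-dependent feasible set (the constraints $\ba_i\perp\sqrt{\bp_i^{(n)}}$ move as $n$ grows), so a naive "objective is continuous on a fixed domain" argument does not apply directly. I would handle this by passing to the unconstrained reformulation of Theorem \ref{thm:mcpca-opt-bi}: the MCPCA value equals the optimal value of \eqref{opt:MC-PCA-bi}, whose feasible set $\{\|\bb_i\|=1,\ 1\le i\le p\}$ is a fixed compact product of unit spheres, and whose objective $\sum_{r=1}^q\lambda_r(\bK)$ with $\bK(i,i')=\bb_i^T(\bQ_{i,i'}-\sqrt{\bp_i}\sqrt{\bp_{i'}}^T)\bb_{i'}$ is jointly continuous in $(\bb,\bQ,\bp)$ because the Ky Fan norm is a continuous function of matrix entries (the same fact used in the lemma following \eqref{opt:MCPCA-gauss-apx}) and the entries of $\bK$ are polynomials in the data. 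Then a standard maximum-theorem / uniform-continuity argument on the compact domain gives: for every $\eps>0$ there is $\delta>0$ such that whenever $\|\bQ_{i,i'}^{(n)}-\bQ_{i,i'}\|<\delta$ and $\|\sqrt{\bp_i^{(n)}}-\sqrt{\bp_i}\|<\delta$ for all $i,i'$, the optimal values differ by less than $\eps$.

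Concretely, I would write the two directions of the value inequality explicitly. Given an optimal $\{\bb_i^*\}$ for the limiting problem, it is feasible for the empirical problem (same fixed constraint set), and the empirical objective evaluated at $\{\bb_i^*\}$ differs from $\rho_q^*$ by at most $\eps$ once $n$ is large, giving $\tilde{\rho_q}^{(n)}\ge\rho_q^*-\eps$; symmetrically, taking an optimal $\{\bb_i^{(n)}\}$ for the empirical problem and using it in the limiting problem gives $\rho_q^*\ge\tilde{\rho_q}^{(n)}-\eps$. Hence $|\tilde{\rho_q}^{(n)}-\rho_q^*|\le\eps$ for all large $n$ on the almost-sure event where the empirical $Q$-matrices and marginals have converged; since $\eps$ is arbitrary, $\tilde{\rho_q}^{(n)}\to\rho_q^*$ with probability one. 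The only mild care needed is the almost-sure convergence $\bQ_{i,i'}^{(n)}\to\bQ_{i,i'}$: this follows by applying the strong law of large numbers to the indicator averages $\frac1n\sum_k \mathbf{1}\{\bX_{k,i}=j,\bX_{k,i'}=j'\}$ and $\frac1n\sum_k\mathbf{1}\{\bX_{k,i}=j\}$ for each of the finitely many $(i,i',j,j')$, together with the continuous-mapping theorem applied to \eqref{def:Q}, valid because $P_{X_i}(j)>0$ for all $i,j$.
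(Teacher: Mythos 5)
Your proof is correct, and it is a substantially more complete argument than the one the paper gives. The paper's own proof is a single sentence asserting that, for fixed $p$ and $q$, the eigenvalues of the empirical covariance matrix converge almost surely to those of the true covariance matrix; it never addresses the main technical point you isolate, namely that the feasible set of the sample problem moves with $n$ (the mean-zero, unit-variance constraints --- equivalently $\ba_i \perp \sqrt{\bp_i^{(n)}}$ in \eqref{opt:MC-PCA-ai} --- are taken with respect to the empirical distribution). Your route through Theorem \ref{thm:mcpca-opt-bi} resolves exactly this: in \eqref{opt:MC-PCA-bi} the feasible set is a fixed compact product of unit spheres and the distribution enters only through the matrices $\bQ_{i,i'}-\sqrt{\bp_i}\sqrt{\bp_{i'}}^T$, so the strong law of large numbers applied to the finitely many indicator averages (valid since $P_{X_i}(j)>0$ keeps the denominators in \eqref{def:Q} bounded away from zero for large $n$), together with continuity of the Ky Fan norm and the two-sided value comparison on the common feasible set, yields $|\tilde{\rho_q}^{(n)}-\rho_q^*|\to 0$ almost surely. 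So both arguments rest on the same two facts (SLLN and eigenvalue continuity), but yours supplies the reduction that makes the continuity argument legitimate, whereas the paper's version gains only brevity; one small point worth stating explicitly in your write-up is that on the almost-sure event, for all sufficiently large $n$, every alphabet symbol appears in the sample, so the empirical problem is posed over the same alphabets as the population problem.
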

\begin{proof}
The proof follows form the fact that for a fixed $p$ and $q$, as $n\to\infty$, eigenvalues of the empirical covariance matrix converge to the eigenvalues of the true covariance matrix, with probability one.
\end{proof}

\begin{algorithm}[t]
\caption{A Block Coordinate Descend Algorithm to Compute Sample MCPCA For Finite Discrete Variables}
\begin{algorithmic}\label{alg:sample-MCPCA}
\STATE \textbf{Input:} $\bX$, $q$
\STATE \textbf{Initialization:} $\{\phi_i^{(0)}\}_{i=1}^{p}$ and $\{\bv_r^{(0)}\}_{r=1}^{q}$
\STATE \textbf{for} $j=0,1, \dots$
\STATE \hspace{.2 in} \textbf{for} $k=1,..., p$
\STATE \hspace{.4 in} \textbf{compute:} $\bw_k^{(j)}=\sum_{r=1}^{q}\sum_{i=1}^{k-1} v_{r,k}^{(j)} v_{r,i}^{(j)} \phi_i^{(j)}(\bx^i)+\sum_{r=1}^{q}\sum_{i=k+1}^{p} v_{r,k}^{(j-1)} v_{r,i}^{(j-1)} \phi_i^{(j-1)}(\bx^i)$
\STATE \hspace{.4 in} \textbf{update:} $\phi_k^{(j)}=\EE[\bw_k^{(j)}|Y_k]/||\EE[\bw_k^{(j)}|Y_k]||$
, if $||\EE[\bw_k^{(j)}|Y_k]||\neq 0$
\STATE \hspace{.2 in} \textbf{compute:} $\bK^{(j)}$ where $\bK^{(j)}(i,i')=\frac{1}{n}\sum_{s=1}^{n}\phi_i^{(j)}(\bX_{s,i})\phi_{i'}^{(j)}(\bX_{s,i'})$
\STATE \hspace{.2 in} \textbf{update:} $v_r^{(j)}=\bu_r(\bK^{(j)})$, for $1\leq r\leq q$
\STATE \hspace{.2 in} $\rho_q^{(j)}=\sum_{r=1}^{q}\lambda_r(\bK^{(j)})$
\STATE \textbf{end}
\end{algorithmic}
\end{algorithm}

\subsection{Computation of Sample MCPCA for Finite Discrete Variables}\label{subsec:sample-MCPCA-comp-disc}
One way to compute sample MCPCA is to use empirical pairwise joint distributions in Algorithm \ref{alg:MCPCA}. However, forming and storing these empirical pairwise joint distributions may be expensive. Below, we discuss computation of the sample MCPCA optimization without forming pairwise joint distributions.

Let $\bv_r=(\bv_{r,1},\cdots,\bv_{r,p})^T$. The sample MCPCA optimization \eqref{opt:MCPCA-main} can be written as follows:
\begin{align}\label{opt:sample-MCPCA-trace-V}
\max_{\{\phi_i\}_{i=1}^{p},\{\bv_r\}_{r=1}^{q}}\quad &\sum_{r=1}^{q} \sum_{i=1}^{p}\sum_{i'=1}^{p} v_{r,i} v_{r,i'} \left(\frac{1}{n}\sum_{s=1}^{n}\phi_i(\bX_{s,i})\phi_{i'}(\bX_{s,i'})\right)\\
&\frac{1}{n}\sum_{s=1}^{n}\phi_i(\bX_{s,i})=0, \quad 1\leq i\leq p,\nonumber\\
&\frac{1}{n}\sum_{s=1}^{n}\phi_i(\bX_{s,i})^2=1, \quad 1\leq i\leq p\nonumber\\
& \bv_r^T \bv_r=1, \quad 1\leq r\leq q\nonumber\\
& \bv_r^T \bv_s=0, \quad 1\leq r\neq s\leq q.\nonumber
\end{align}

Let $(Y_{1},\dots,Y_{p})$ be $p$ finite discrete random variables whose joint probability distribution $P_{Y_{1},\dots,Y_{p}}$ is equal to the empirical distribution of observed samples $\{\bx_i\}_{i=1}^{n}$. Define the vector $\bw_k\in \mathbb{R}^n$ as follows:
\begin{align}\label{eq:wk}
\bw_k\triangleq\sum_{r=1}^{q}\sum_{i\in\{1,...,p\}-\{k\}} v_{r,k} v_{r,i} \phi_i(\bx^i).
\end{align}

\begin{lemma}\label{lem:update-phi}
If all variables except $\phi_k$ are fixed in the feasible set of optimization \eqref{opt:sample-MCPCA-trace-V}, then
\begin{align}\label{eq:optimal-ai-coordinate}
\phi_k^*(Y_k)=\EE[\bw_k|Y_k]/||\EE[\bw_k|Y_k]||,
\end{align}
is the optimal solution of the constrained optimization \eqref{opt:sample-MCPCA-trace-V} if $||\EE[\bw_k|Y_k]||\neq 0$. If $||\EE[\bw_k|Y_k]||=0$, every mean zero and unit norm $\phi_k^*$ is an optimal solution of the constrained optimization \eqref{opt:sample-MCPCA-trace-V}.
\end{lemma}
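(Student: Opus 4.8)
The plan is to reduce the block-coordinate subproblem to a single inner-product maximization over $\phi_k$, exactly as was done for the vector variable $\ba_k$ in Lemma \ref{lem:update-ai}, but now working directly with the random-variable formulation over $Y_k$. First I would fix all variables except $\phi_k$ and expand the objective of \eqref{opt:sample-MCPCA-trace-V}. Writing $\bK(i,i') = \frac{1}{n}\sum_{s=1}^n \phi_i(\bX_{s,i})\phi_{i'}(\bX_{s,i'}) = \EE[\phi_i(Y_i)\phi_{i'}(Y_{i'})]$ and separating out the terms that involve index $k$, the objective becomes an affine function of $\phi_k$: the $(i,i')$ terms with $i,i'\neq k$ are constant, while the terms with $i=k$ or $i'=k$ contribute $2\sum_{r=1}^q \sum_{i\neq k} v_{r,k}v_{r,i}\,\EE[\phi_k(Y_k)\phi_i(Y_i)]$ (using symmetry of $\bK$), plus the diagonal term $\big(\sum_r v_{r,k}^2\big)\EE[\phi_k(Y_k)^2]$, which equals $\sum_r v_{r,k}^2$ by the unit-norm constraint on $\phi_k$ and is therefore also constant on the feasible set. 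Hence, up to an additive constant, maximizing over $\phi_k$ is equivalent to maximizing $\EE[\phi_k(Y_k)\,\bw_k]$ where $\bw_k$ is the random variable defined in \eqref{eq:wk}, subject to $\EE[\phi_k(Y_k)]=0$ and $\EE[\phi_k(Y_k)^2]=1$.

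Next I would compute the conditional expectation to collapse $\bw_k$ onto a function of $Y_k$ alone. Since $\phi_k(Y_k)$ is $Y_k$-measurable, the tower property gives $\EE[\phi_k(Y_k)\,\bw_k] = \EE\big[\phi_k(Y_k)\,\EE[\bw_k \mid Y_k]\big]$, so the objective depends on $\bw_k$ only through $g_k(Y_k) \triangleq \EE[\bw_k \mid Y_k]$. Now I face the finite-dimensional problem
\begin{align}
\max_{\phi_k}\quad & \EE[\phi_k(Y_k)\,g_k(Y_k)] \nonumber\\
& \EE[\phi_k(Y_k)]=0,\quad \EE[\phi_k(Y_k)^2]=1.\nonumber
\end{align}
By Cauchy--Schwarz, $\EE[\phi_k g_k] = \EE[\phi_k (g_k - \EE g_k)] \le \sqrt{\EE[\phi_k^2]\,\mathrm{var}(g_k)} = \|g_k - \EE g_k\|$, where the first equality uses $\EE[\phi_k]=0$ and $\|\cdot\|$ denotes the $L^2(P_{Y_k})$ norm; note this is exactly $\|\EE[\bw_k\mid Y_k]\|$ once we observe $\EE[\EE[\bw_k\mid Y_k]] = \EE[\bw_k]$ and check (below) that this mean is zero. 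Equality holds iff $\phi_k(Y_k) \propto g_k(Y_k) - \EE[g_k(Y_k)]$, and normalizing to unit variance yields the stated $\phi_k^*(Y_k) = \EE[\bw_k\mid Y_k]/\|\EE[\bw_k\mid Y_k]\|$ whenever the denominator is nonzero; when it is zero every feasible $\phi_k$ gives objective value zero, so any mean-zero unit-norm $\phi_k$ is optimal.

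The one point that needs a little care — and the main (minor) obstacle — is reconciling the centering. The stated optimizer is $\EE[\bw_k\mid Y_k]/\|\EE[\bw_k\mid Y_k]\|$ without explicitly subtracting a mean, so I must verify that $\EE[\bw_k] = 0$, i.e. that $\EE[\phi_i(Y_i)]=0$ for each $i\neq k$ already forces $\EE[\bw_k]=0$ by linearity of $\bw_k$ in the $\phi_i$'s; indeed $\EE[\bw_k] = \sum_{r=1}^q\sum_{i\neq k} v_{r,k}v_{r,i}\,\EE[\phi_i(Y_i)] = 0$ since each $\phi_i$ has empirical mean zero. Consequently $\EE[\bw_k\mid Y_k]$ is automatically centered, $\EE[\bw_k\mid Y_k] = \EE[\bw_k\mid Y_k] - \EE[\bw_k]$, and the Cauchy--Schwarz argument applies verbatim; the normalization by $\|\EE[\bw_k\mid Y_k]\|$ then simultaneously enforces unit norm and preserves the zero mean, so $\phi_k^*$ is feasible and attains the bound. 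This completes the proof.
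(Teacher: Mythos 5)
Your proof is correct, and its skeleton matches the paper's: fix all blocks but $\phi_k$, observe the objective is affine in $\phi_k$ (the diagonal term being constant under the unit-norm constraint), reduce to maximizing $\EE[\phi_k(Y_k)\,\bw_k]$ with $\bw_k$ as in \eqref{eq:wk}, collapse $\bw_k$ to $\EE[\bw_k\mid Y_k]$ by the tower property, and finish with Cauchy--Schwarz. The one place you genuinely diverge is the treatment of the mean-zero constraint: the paper drops it and relaxes the norm equality to an inequality (its optimization \eqref{opt:alaki2}), then proves the relaxation is tight by a contradiction/recentering argument that uses $\mathbf{1}^T\bw_k=0$; you instead keep the constraint and absorb the centering directly into Cauchy--Schwarz, noting that $\EE[\bw_k]=\sum_{r}\sum_{i\neq k}v_{r,k}v_{r,i}\EE[\phi_i(Y_i)]=0$ because the fixed $\phi_i$'s are feasible, so $\EE[\bw_k\mid Y_k]$ is already centered and its normalization is itself feasible. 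Both arguments hinge on the same fact (zero empirical mean of $\bw_k$); your version is shorter and avoids the auxiliary relaxed problem, while the paper's relaxation viewpoint makes explicit that the mean-zero constraint is inactive at the optimum. Your handling of the degenerate case $\|\EE[\bw_k\mid Y_k]\|=0$ (objective identically zero, so any mean-zero unit-norm $\phi_k$ is optimal) also matches the claim.
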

\begin{proof}
If all variables except $\phi_k$ are fixed, optimization \eqref{opt:sample-MCPCA-trace-V} can be simplified to
\begin{align}\label{opt:alaki1}
\max_{\phi_k}\quad& <\phi_k(\bx^k),\bw_k>\\
&\mathbf{1}^T \phi_k(\bx^k)=0,\nonumber\\
&||\phi_k(\bx^k)||^2=n.\nonumber
\end{align}
Note that since there exists $\phi_k$ such that $<\phi_k(\bx^k),\bw_k>\geq 0$, the constraint $||\phi_k(\bx^k)||^2=n$ can be replaced by the constraint $||\phi_k(\bx^k)||^2\leq n$. Now consider the following optimization:
\begin{align}\label{opt:alaki2}
\max_{\phi_k}\quad& <\phi_k(\bx^k),\bw_k>\\
&||\phi_k(\bx^k)||^2\leq n.\nonumber
\end{align}
We show that the optimal solution of optimization \eqref{opt:alaki2} has zero mean. For simplicity, we use $\phi_k$ instead of $\phi_k(\bx^k)$. We proceed by contradiction. Suppose $\phi_k^*$ is an optimal solution of optimization \eqref{opt:alaki2} whose mean is not zero (i.e., $\bar{\phi_k^*}\neq 0$). Consider the following solution:
\begin{align}
\tilde{\phi_k}=\sqrt{n}\frac{\phi_k^*-\bar{\phi_k^*}}{||\phi_k^*-\bar{\phi_k^*}||}.
\end{align}
Note that $||\tilde{\phi_k}||^2=n$. Thus, $\tilde{\phi_k}$ belongs to the feasible set of optimization \eqref{opt:alaki2}. Moreover we have
\begin{align}
&||\phi_k^*-\bar{\phi_k^*}||^2=||\phi_k^*-\frac{\mathbf{1}^T\phi_k^*}{n}||^2\\
&=||\phi_k^*||^2+(\frac{1}{n^2}-\frac{2}{n})(\mathbf{1}^T\phi_k^*)^2<||\phi_k^*||^2\leq n.\nonumber
\end{align}
Therefore,
\begin{align}\label{eq:alaki3}
\frac{\sqrt{n}}{||\phi_k^*-\bar{\phi_k^*}||}>1.
\end{align}
Using \eqref{eq:alaki3} and the fact that $\mathbf{1}^T \bw_k=0$, $\tilde{\phi_k}$ leads to a strictly larger objective value of optimization \eqref{opt:alaki2} than the one of $\phi_k^*$, which is a contradiction. Therefore, the optimal solution of optimization \eqref{opt:alaki2} has zero mean. Thus, optimization \eqref{opt:alaki2} is a tight relaxation of optimization \eqref{opt:alaki1}.

Define $\theta_k(Y_k)\triangleq \EE[\bw_k|Y_k]$. Thus, $<\phi_k(\bx^k),\bw_k>=\EE[\phi_k(Y_k)\theta_k(Y_k)]$. Moreover, $||\phi_k(\bx^k)||^2=n\EE[Y_k^2]$. Therefore, optimization \eqref{opt:alaki2} is simplified to the following optimization:
\begin{align}\label{opt:alaki4}
\max\quad& \EE[\phi_k(Y_k)\theta_k(Y_k)]\\
&\EE[Y_k^2]\leq 1\nonumber.
\end{align}
Using the Cauchy-Schwartz inequality completes the proof.
\end{proof}
To update variables $\{\bv_r\}_{r=1}^{q}$, one can use Lemma \ref{lem:update-vr}. Similarly to Algorithm \ref{alg:MCPCA}, to solve the sample MCPCA optimization for finite discrete variables, we propose Algorithm \ref{alg:sample-MCPCA} which is based on a block coordinate descend approach.

\begin{theorem}\label{thm:greedy-convergance-sample}
The sequence $\rho_q^{(j)}$ in Algorithm \ref{alg:sample-MCPCA} is monotonically increasing and convergent. Moreover, if $\bK^{(j)}$ has top $q$ simple eigenvalues and $||\EE[\bw_k^{(j)}|Y_k]||\neq 0$ for $1\leq k\leq p$ and $j\geq 0$, then $\{\phi_i^{(j)}\}_{i=1}^{p}$ converges to stationary points of optimization \eqref{opt:sample-MCPCA-trace-V}.
\end{theorem}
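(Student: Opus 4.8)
The plan is to mirror the proof of Theorem~\ref{thm:greedy-convergance}, with the coordinate updates on the vectors $\{\ba_i\}$ replaced by the function updates on $\{\phi_i\}$ supplied by Lemma~\ref{lem:update-phi}. Algorithm~\ref{alg:sample-MCPCA} is a block coordinate ascent applied to the objective of optimization~\eqref{opt:sample-MCPCA-trace-V}, the blocks being the individual transformations $\phi_1,\dots,\phi_p$ (each re-optimized in turn inside the inner loop) and the collection $\{\bv_r\}_{r=1}^{q}$ (re-optimized jointly after the inner loop). First I would record that each step is a genuine maximization over its block: by Lemma~\ref{lem:update-phi}, when all other variables are held fixed the update $\phi_k^{(j)}=\EE[\bw_k^{(j)}\mid Y_k]/\|\EE[\bw_k^{(j)}\mid Y_k]\|$ is an optimal solution of~\eqref{opt:sample-MCPCA-trace-V} restricted to $\phi_k$, and by Lemma~\ref{lem:update-vr} applied to $\bK^{(j)}$ the update $\bv_r^{(j)}=\bu_r(\bK^{(j)})$ is optimal for the block $\{\bv_r\}_{r=1}^{q}$. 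Consequently the objective value can only increase from one update to the next, so $\rho_q^{(j)}$ is monotonically nondecreasing.

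Next I would establish boundedness and hence convergence of $\rho_q^{(j)}$. At every iterate the function $\phi_i$ satisfies the empirical zero-mean and unit-variance constraints, so $\bK^{(j)}\in\cK_{Y}$; since $Tr(\bK^{(j)})=p$ and $\bK^{(j)}\succeq 0$ (equivalently, by Theorem~\ref{thm:prop-MCPCA}, part~[i]), we get $\rho_q^{(j)}=\sum_{r=1}^{q}\lambda_r(\bK^{(j)})\le p$. A monotone sequence that is bounded above converges, which proves the first assertion.

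For the second assertion I would invoke the classical convergence theory for block coordinate (Gauss--Seidel) ascent, as in~\cite{tseng2001convergence}. The feasible region of~\eqref{opt:sample-MCPCA-trace-V} is a product of compact manifolds --- unit spheres in the finite-dimensional spaces where each $\phi_i$ lives together with the Stiefel manifold of orthonormal $q$-frames --- and the objective is continuously differentiable. Under the stated hypotheses (the top $q$ eigenvalues of $\bK^{(j)}$ are simple and $\|\EE[\bw_k^{(j)}\mid Y_k]\|\neq 0$ for all $k$ and $j$), each block subproblem has a unique maximizer: uniqueness for the $\phi_k$-block is the equality case of the Cauchy--Schwarz step in Lemma~\ref{lem:update-phi}, and uniqueness for the $\{\bv_r\}$-block follows because simple eigenvalues make $\bu_r(\bK^{(j)})$ unique up to sign, which is pinned down consistently across iterations. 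Tseng's theorem then yields that every limit point of the iterate sequence is a stationary point of~\eqref{opt:sample-MCPCA-trace-V}, and by Theorem~\ref{thm:sample-MCPCA} this is the same as a stationary point of the sample MCPCA optimization over $\{Y_i\}_{i=1}^{p}$.

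The main obstacle is the one already flagged by the hypotheses: because the feasible set is nonconvex (a product of spheres and a Stiefel manifold), monotonicity of the objective by itself does not force the iterates to converge, so one must check that the assumptions of the block coordinate descent convergence theorem actually hold --- in particular that each coordinatewise maximizer is a well-defined single point rather than a set. The simple-eigenvalue and nonvanishing-conditional-expectation conditions are imposed precisely to guarantee this, and a secondary technical point is to fix the sign ambiguity of the eigenvectors $\bu_r(\bK^{(j)})$ so that the update $\{\bv_r\}\mapsto\{\bu_r(\bK^{(j)})\}$ is genuinely single-valued.
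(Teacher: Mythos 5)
Your proposal is correct and follows essentially the same route as the paper, whose proof simply transfers the argument of Theorem~\ref{thm:greedy-convergance}: block optimality via Lemmas~\ref{lem:update-phi} and~\ref{lem:update-vr} gives monotonicity, boundedness by $p$ (Theorem~\ref{thm:prop-MCPCA}, part (i)) gives convergence of $\rho_q^{(j)}$, and uniqueness of the block maximizers under the stated hypotheses lets one invoke \cite{tseng2001convergence} for convergence to stationary points. Your additional remarks on compactness and fixing the eigenvector sign ambiguity are sensible refinements of details the paper leaves implicit.
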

\begin{proof}
The proof is similar to the one of Theorem \ref{thm:greedy-convergance}.
\end{proof}

\begin{proposition}\label{prop:complexity}
Each iteration of Algorithm \ref{alg:sample-MCPCA} has a computational complexity of $\cO(p^3+np^2)$ and a memory complexity of $\cO(np)$.
\end{proposition}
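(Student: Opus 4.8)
The plan is to go line by line through one pass of the outer loop of Algorithm~\ref{alg:sample-MCPCA} (one value of $j$) and tally the arithmetic and the storage, using throughout that $q\le p$ and that the algorithm works directly on the $n\times p$ data matrix $\bX$ and on the $n\times p$ matrix whose $i$-th column is the current transformed column $\phi_i(\bx^i)\in\RR^n$, so that the empirical pairwise joint distributions (which would be the expensive objects to store) are never formed.

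First I would bound the inner loop over $k=1,\dots,p$. Computing one $\bw_k$ straight from its defining formula is a sum of $\cO(pq)$ scalar multiples of length-$n$ vectors, i.e.\ $\cO(npq)$, so doing this $p$ times naively would be $\cO(np^2q)$, which is too much. The point is to maintain, as the sweep over $k$ proceeds, the $q$ running vectors $\bs_r=\sum_i v_{r,i}\phi_i(\bx^i)\in\RR^n$ formed from the currently available transformations; then $\bw_k=\sum_{r=1}^{q}v_{r,k}\bigl(\bs_r-v_{r,k}\phi_k(\bx^k)\bigr)$ costs $\cO(nq)$, and immediately after $\phi_k$ is updated each $\bs_r$ is refreshed in $\cO(n)$ by swapping out the single summand that changed. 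Hence the whole inner loop costs $\cO(npq)$, which is $\cO(np^2)$ since $q\le p$. For the update of $\phi_k$, note that the $k$-th column of $\bX$ contains at most $n$ distinct values, so $\EE[\bw_k\mid Y_k]$ is computed by partitioning the $n$ sample indices according to that column's value, averaging $\bw_k$ within each block in one $\cO(n)$ sweep, broadcasting the block averages to a length-$n$ vector and normalizing; this is $\cO(n)$ per $k$, hence $\cO(np)$ over the loop.

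Next I would handle the two remaining lines. The matrix $\bK^{(j)}$ is the scaled Gram matrix of the $n\times p$ transformed-column matrix, so it has $p^2$ entries each obtained from an inner product of two length-$n$ columns, i.e.\ $\cO(np^2)$ in total; and extracting $\bu_1(\bK^{(j)}),\dots,\bu_q(\bK^{(j)})$ together with the top eigenvalues needs at most a full symmetric eigendecomposition of a $p\times p$ matrix, which is $\cO(p^3)$, while the final sum giving $\rho_q^{(j)}$ is $\cO(q)$. Adding up, one iteration costs $\cO(np^2)+\cO(np)+\cO(np^2)+\cO(p^3)=\cO(p^3+np^2)$. For the memory bound, the persistent objects are the data matrix $\bX$ and the matrix of transformed columns, both $\cO(np)$; the running vectors $\bs_r$ and the working vector $\bw_k$, of size $\cO(nq)$ and $\cO(n)$; the matrix $\bK^{(j)}$ and the coefficient matrix $\bV=(\bv_1,\dots,\bv_q)$, of size $\cO(p^2)$ and $\cO(pq)$; and the $\cO(n)$ scratch space for the conditional-expectation buckets. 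All of these are $\cO(np)$ (using $p\le n$), so the memory footprint is $\cO(np)$. The only step that is not a mechanical count is the one flagged above: recognizing that the $\bw_k$'s must be maintained incrementally rather than recomputed from scratch, and that $\EE[\bw_k\mid Y_k]$ is a single linear-time bucketed pass because the $k$-th column of $\bX$ has at most $n$ distinct values.
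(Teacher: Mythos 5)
Your proof is correct, and in fact the paper states Proposition \ref{prop:complexity} without providing any proof at all, so your argument supplies details the paper leaves implicit. Your accounting matches what the algorithm requires: the Gram-matrix computation of $\bK^{(j)}$ costs $\cO(np^2)$, its eigendecomposition $\cO(p^3)$, and the conditional expectation $\EE[\bw_k\mid Y_k]$ is a single bucketed pass of cost $\cO(n)$ per coordinate since the entries of $\bx^k$ take at most $n$ (indeed at most $|\cX_k|$) distinct values. Your one genuinely non-mechanical observation — maintaining the running sums $\bs_r=\sum_i v_{r,i}\phi_i(\bx^i)$ incrementally so that the sweep over $k$ costs $\cO(npq)$ rather than the naive $\cO(np^2q)$ — is actually needed for the stated bound to hold uniformly in $q$ (since $q\le p$, the naive recomputation would only give $\cO(np^3)$), and it is fully consistent with the Gauss--Seidel structure of Algorithm \ref{alg:sample-MCPCA}; even under the algorithm's mixed use of previous- and current-sweep quantities one can keep two running sums per $r$ at the same cost. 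Two small caveats, both of which you handle honestly: the memory bound $\cO(np)$ requires $p\le n$ (otherwise storing $\bK^{(j)}$ alone is $\cO(p^2)$), and the $\cO(n)$ bucketing assumes direct indexing or hashing on the alphabet of $Y_k$, which is available here because the alphabets are finite subsets of the observed values. Neither affects the conclusion.
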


\begin{remark}\label{remark:complexity}
\textup{The computational complexity of Isomap and LLE is $\cO(n^3)$ and $\cO(pn^2)$ while their memory complexity is $\cO(n^2)$ and $\cO(pn^2)$, respectively. Unlike Isomap and LLE, computational and memory complexity of MCPCA Algorithm \ref{alg:sample-MCPCA} scales linearly with the number of samples $n$ which makes it suitable for data sets with large number of samples.}
\end{remark}

\subsection{Sample MCPCA for Continuous Variables}\label{subsec:sample-MCPCA-cont}
In this part, we consider the case where $X_1$,..., $X_p$ are continuous variables with the density function $f_{X_1,...,X_p}$. Here we assume $X_1$,...,$X_p$ have bounded ranges. Without loss of generality, let $X_i\in[0,1]$ for $1\leq i\leq p$. Moreover, let the density function satisfy $f_{X_i}(x)>0$ for $x\in[0,1]$ and $1\leq i\leq p$. We observe $n$ independent samples $\{\bx_i\}_{i=1}^{n}$ from this distribution. The data matrix $X\in \mathbb{R}^{n\times p}$ is defined according to \eqref{eq:data-matrix}. Since $X_1$,...,$X_p$ are continuous, with probability one, each column of the matrix $X$ has $n$ distinct values. Thus, with probability one, there exists $\{\phi_i^*(.)\}$ such that $\phi_i^*(\bx^i)=\bw$ for $1\leq i\leq p$, where $\bw$ is a vector in $\mathbb{R}^n$ whose mean is zero and its norm is equal to $\sqrt{n}$. Therefore, with probability one, the optimal value of optimization \eqref{opt:sample-MCPCA-data} is equal to $p$.

In the continuous case, the space of feasible transformation functions has infinite {\it degrees of freedom}. Thus, by observing $n$ samples from these continuous variables, we over-fit functions to observed samples.
Note that in the case of having observations from finite discrete variables, transformation functions have finite degrees of freedom and if the number of samples are sufficiently large, over-fitting issue does not occur (Theorem \ref{thm:const-discrete}). One approach to overcome the over-fitting issue in the continuous case is to restrict the feasible set of optimization \eqref{opt:sample-MCPCA-data} to functions whose degrees of freedom are smaller than the number of observed samples $n$. One such family of functions is piecewise linear functions with $d$ degrees of freedom:

\begin{definition}\label{def:piecewise-linear}
Let $\bw\in\mathbb{R}^{d+1}$. $\cG_d(\bw)$ is defined as the set of all functions $g:[0,1]\to \mathbb{R}$ such that
\begin{align}\label{eq:piecewise-linear}
g_d(x) \triangleq
\left\{
	\begin{array}{ll}
		w_j  & \mbox{if } x=\frac{j}{d}, 0\leq j\leq d \\
		(w_{j+1}-w_{j})(Mx-j)+w_j & \mbox{if } \frac{j}{d}<x<\frac{j+1}{d}, 0\leq j\leq d-1
	\end{array}
\right.
\end{align}
Moreover, $\cG_d\triangleq \{\cG_d(\bw):\bw\in\mathbb{R}^{d+1}\}$.
\end{definition}

Let $\{\bx_k\}_{k=1}^{n}$ be observed sample from continuous variables $X_1$,...,$X_p$. Sample MCPCA aims to solve the following optimization:
\begin{align}\label{opt:sample-MCPCA-cont-data}
\min_{\{\bv_i\}_{i=1}^{q},\{\phi_i\}_{i=1}^{p}}\quad &\frac{1}{n}\sum_{k=1}^{n} \|\phi(\bx_k)-\hphi(\bx_k)\|^2\\
&\hphi(\bx_k)=\sum_{i=1}^{q} \left(\bv_i^T\phi(\bx_k)\right)\bv_i,\quad 1\leq k\leq n\nonumber\\
&\phi(\bx_k)= (\phi_1(\bX_{k,1}),\dots,\phi_p(\bX_{k,p})),\quad 1\leq k\leq n\nonumber\\
&\bv_i^T \bv_j=0,\quad 1\leq i\neq j\leq q\nonumber\\
&\bv_i^T \bv_i=1,\quad 1\leq i\leq q,\nonumber\\
&\frac{1}{n}\sum_{k=1}^{n}\phi_i(\bX_{k,i})=0, \quad 1\leq i\leq p,\nonumber\\
&\frac{1}{n}\sum_{k=1}^{n}\phi_i(\bX_{k,i})^2=1, \quad 1\leq i\leq p\nonumber\\
&\phi_i\in\cG_d,\quad 1\leq i\leq p. \nonumber
\end{align}

\begin{theorem}\label{thm:sample-MCPCA-cont}
Consider the following optimization:
\begin{align}\label{opt:sample-MCPCA-cont}
\max_{\{\phi_i\}_{i=1}^{p}}\quad &\sum_{r=1}^{q} \lambda_{r}(\bK)\\
&\bK(i,i')=\frac{1}{n}\sum_{k=1}^{n}\phi_i(\bX_{k,i})\phi_{i'}(\bX_{k,i'}),\quad 1\leq i,i'\leq p\nonumber\\
&\frac{1}{n}\sum_{k=1}^{n}\phi_i(\bX_{k,i})^2=1,\quad 1\leq i \leq p\nonumber\\
&\frac{1}{n}\sum_{k=1}^{n}\phi_i(\bX_{k,i})=0,\quad 1\leq i \leq p\nonumber\\
&\phi_i\in\cG_d,\quad 1\leq i\leq p. \nonumber
\end{align}
Let $\bK^*$ be an optimal solution of optimization \eqref{opt:sample-MCPCA-cont} corresponding to transformation functions $\{\phi_i^*\}_{i=1}^{p}$. Then, $\{\bu_r(\bK^*)\}_{r=1}^{q},\{\phi_i^*\}_{i=1}^{p}$ provide an optimal solution of optimization \eqref{opt:sample-MCPCA-cont-data}.
\end{theorem}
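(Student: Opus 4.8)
The plan is to transcribe, almost verbatim, the argument used for Theorem~\ref{thm:sample-MCPCA}, the only differences being that here the transformation functions are constrained to lie in $\cG_d$ at every stage (a restriction that is simply inherited by the reduced problem), and that no auxiliary collection of discrete random variables is needed since optimization~\eqref{opt:sample-MCPCA-cont} is already phrased directly in terms of the observed samples.

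First I would collect the $\bv_r$'s into a matrix $\bV=(\bv_1,\dots,\bv_q)\in\RR^{p\times q}$, so the orthonormality constraints read $\bV^T\bV=I_q$ and $\hphi(\bx_k)=\bV\bV^T\phi(\bx_k)$ for $1\le k\le n$. Setting $\bB_\bV\triangleq I-\bV\bV^T$ and using $\bB_\bV=\bB_\bV^2=\bB_\bV^T$, the objective of optimization~\eqref{opt:sample-MCPCA-cont-data} becomes
\begin{align}
\frac{1}{n}\sum_{k=1}^{n}\|\phi(\bx_k)-\hphi(\bx_k)\|^2 &=\frac{1}{n}\sum_{k=1}^{n}\phi(\bx_k)^T\bB_\bV\phi(\bx_k)\nonumber\\
&=Tr(\bK)-Tr(\bV^T\bK\bV),\nonumber
\end{align}
where $\bK\triangleq\frac{1}{n}\sum_{k=1}^{n}\phi(\bx_k)\phi(\bx_k)^T$ has entries $\bK(i,i')=\frac{1}{n}\sum_{k=1}^{n}\phi_i(\bX_{k,i})\phi_{i'}(\bX_{k,i'})$. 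The constraint $\frac{1}{n}\sum_{k=1}^{n}\phi_i(\bX_{k,i})^2=1$ forces every diagonal entry of $\bK$ to equal $1$, hence $Tr(\bK)=p$. Thus minimizing the reconstruction error is equivalent to maximizing $Tr(\bV^T\bK\bV)=\sum_{r=1}^{q}\bv_r^T\bK\bv_r$ jointly over orthonormal families $\{\bv_r\}_{r=1}^{q}$ and over $\{\phi_i\}_{i=1}^{p}\subseteq\cG_d$ satisfying the zero-mean and unit-norm constraints.

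Next, for any fixed feasible $\{\phi_i\}$, the inner maximization $\max_{\bV^T\bV=I_q}Tr(\bV^T\bK\bV)$ is the standard Ky Fan problem invoked in Lemma~\ref{thm:MC-PCA-trace} and Lemma~\ref{lem:update-vr}: its optimal value is $\sum_{r=1}^{q}\lambda_r(\bK)$, attained at $\bv_r=\bu_r(\bK)$ for $1\le r\le q$. Substituting this back shows the remaining problem over $\{\phi_i\}\subseteq\cG_d$ is exactly optimization~\eqref{opt:sample-MCPCA-cont}; therefore the two optimizations have the same optimal value, and if $\bK^*$ (with functions $\{\phi_i^*\}$) is optimal for~\eqref{opt:sample-MCPCA-cont}, then $\{\bu_r(\bK^*)\}_{r=1}^{q}$ together with $\{\phi_i^*\}_{i=1}^{p}$ is optimal for~\eqref{opt:sample-MCPCA-cont-data}. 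I do not anticipate a genuine obstacle here; the only point deserving a line of care is that when the top $q$ eigenvalues of $\bK^*$ are not simple the eigenvectors are non-unique, but this is harmless since any orthonormal basis of the leading $q$-dimensional eigenspace attains the Ky Fan value and hence is equally optimal.
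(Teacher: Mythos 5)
Your proposal is correct and follows exactly the route the paper intends: the paper's proof of this theorem is just "similar to Theorem \ref{thm:sample-MCPCA}," and your transcription of that argument, with the $\cG_d$ constraint carried through unchanged and the Ky Fan maximization handled via the standard eigenvector characterization, is precisely that proof.
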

\begin{proof}
 The proof is similar to the one of Theorem \ref{thm:sample-MCPCA}.
\end{proof}

\begin{proposition}\label{prop:M=1-sampleMCPCA=PCA}
Let columns of the data matrix $\bX$ have zero means and unit variances. If $d=1$, the sample MCPCA optimization \eqref{opt:sample-MCPCA-cont-data} is equivalent to the PCA optimization \eqref{opt:PCA}.
\end{proposition}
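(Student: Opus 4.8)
The plan is to show that when $d=1$, the only admissible transformations $\phi_i\in\cG_1$ are affine maps of the data, so that the feasible set of \eqref{opt:sample-MCPCA-cont-data} collapses (up to an irrelevant reparametrization) onto the affine transformations that leave the PCA objective invariant. First I would unpack Definition \ref{def:piecewise-linear} for $d=1$: a function $g\in\cG_1$ is determined by the two values $w_0=g(0)$ and $w_1=g(1)$, and on $(0,1)$ it interpolates linearly, so $g(x)=(w_1-w_0)x+w_0$ for all $x\in[0,1]$. Hence $\cG_1$ is exactly the family of affine functions $x\mapsto \alpha x+\beta$ on $[0,1]$. Consequently, for each feature $i$, $\phi_i(\bX_{k,i})=\alpha_i \bX_{k,i}+\beta_i$ for scalars $\alpha_i,\beta_i$, i.e. $\phi_i(\bx^i)=\alpha_i\,\bx^i+\beta_i\mathbf{1}$.

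Next I would impose the normalization constraints of \eqref{opt:sample-MCPCA-cont-data}. Since columns of $\bX$ are assumed to have zero empirical mean and unit empirical variance, the constraint $\frac1n\sum_k\phi_i(\bX_{k,i})=0$ forces $\beta_i=0$, and then $\frac1n\sum_k\phi_i(\bX_{k,i})^2=1$ forces $\alpha_i^2=1$, i.e. $\alpha_i=\pm1$. Therefore every feasible transformed data matrix in \eqref{opt:sample-MCPCA-cont-data} has the form $\phi(\bx_k)=\bS\,\bx_k$ for a diagonal sign matrix $\bS=\mathrm{diag}(\alpha_1,\dots,\alpha_p)$, $\alpha_i\in\{-1,1\}$. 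The reconstruction error objective $\frac1n\sum_k\|\phi(\bx_k)-\hphi(\bx_k)\|^2$ with $\hphi(\bx_k)=\bV\bV^T\phi(\bx_k)$ then equals, by the computation already carried out in the proof of Theorem \ref{thm:sample-MCPCA}, $p-Tr(\bV^T\bS\bK_n\bS\bV)$, where $\bK_n$ is the empirical covariance of $\bX$ (note $\btheta_n^*=0$ here). The point is that this reduces \eqref{opt:sample-MCPCA-cont-data} to $\min_{\bV,\bS}\ p-Tr(\bV^T\bS\bK_n\bS\bV)$ over orthonormal $\bV\in\mathbb{R}^{p\times q}$ and diagonal sign matrices $\bS$, whereas the PCA optimization \eqref{opt:PCA} (after centring, which is vacuous here) is by Theorem \ref{thm:PCA} equivalent to $\min_{\bV}\ p-Tr(\bV^T\bK_n\bV)$.

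Finally I would close the gap between these two problems. Since $\bS$ is orthogonal, $\bS\bK_n\bS$ has the same eigenvalues as $\bK_n$; hence $\max_{\bV}Tr(\bV^T\bS\bK_n\bS\bV)=\sum_{r=1}^q\lambda_r(\bS\bK_n\bS)=\sum_{r=1}^q\lambda_r(\bK_n)=\max_{\bV}Tr(\bV^T\bK_n\bV)$, so the optimal objective values coincide for every choice of $\bS$, and the sign ambiguity is immaterial. Concretely, if $\bV^\star$ solves PCA \eqref{opt:PCA}, then $(\bV=\bS\bV^\star,\ \phi_i(x)=\alpha_i x)$ solves \eqref{opt:sample-MCPCA-cont-data} with the same value, and conversely any MCPCA optimizer $(\bV,\bS)$ yields a PCA optimizer $\bS^T\bV=\bS\bV$; so the two optimizations have the same optimal value and their solution sets are in the obvious sign-correspondence, which is the desired equivalence. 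The only mildly delicate point — the main obstacle, though a minor one — is being careful about the boundary/interpolation clause of Definition \ref{def:piecewise-linear} so that $\cG_1$ really is the full affine family on all of $[0,1]$ (not merely at the two nodes), and about the fact that the sample points $\bX_{k,i}$ need not lie in $\{0,1\}$, so one must use the interpolation formula rather than the nodal values; once that is pinned down, the rest is the routine orthogonal-invariance argument above.
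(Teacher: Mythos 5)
Your proposal is correct and follows essentially the same route as the paper's (much terser) proof: observe that $\cG_1$ consists exactly of affine maps, that the empirical mean-zero and unit-variance constraints on normalized columns force $\phi_i(x)=\pm x$, and that the resulting optimization coincides with PCA. Your additional care with the $\pm 1$ sign ambiguity (via orthogonal invariance of the spectrum, $\bS\bK_n\bS$ having the same eigenvalues as $\bK_n$) is a detail the paper leaves implicit, and it is handled correctly.
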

\begin{proof}
For $d=1$, $\cG_{d}$ only contains linear functions. Since columns of the data matrix $\bX$ are assumed to be normalized, optimization \eqref{opt:sample-MCPCA-cont-data} is equivalent to optimization \eqref{opt:PCA}.
\end{proof}
\subsection{Computation of MCPCA and Sample MCPCA for Continuous Variables}\label{subsec:comp-cont}
Define discrete variables $Y_{i,d}$ whose alphabets are $\{1,2,..,d\}$ and
\begin{align}\label{eq:uniform-quant-vars}
Pr(Y_{1,d}=j_1,...,Y_{p,d}=j_p)=\int_{x_1=(j_1-1)/d}^{j_1/d}\dots \int_{x_p=(j_p-1)/d}^{j_p/d} f_{X_1,...,X_p}(x_1,...,x_p) dx_1...dx_p
\end{align}

Below we establish a connection between solutions of the MCPCA optimization over continuous variables and their discretized versions. We will use this connection to compute MCPCA and sample MCPCA over continuous variables.
\begin{theorem}\label{thm:computation-cont-dist}
Let $\rho_q^*$ and $\rhoh_{q,d}^*$ be optimal values of the MCPCA optimization \eqref{opt:MCPCA-main} over continuous variables $\{X_i\}_{i=1}^{p}$ and discrete variables $\{Y_{i,d}\}_{i=1}^{p}$, respectively. As $d\to\infty$, with probability one, $\rhoh_{q,d}^*\to\rho_q^*$. Moreover, let $\{\phih_{i,d}^*(.)\}$ be an optimal solution of the MCPCA optimization \eqref{opt:MCPCA-main} over discrete variables $\{Y_{i,d}\}_{i=1}^{p}$. Let $\bw_i=\left(\phih_{i,d}^*(1),\phih_{i,d}^*(1),...,\phih_{i,d}^*(d)\right)$. Then, as $d\to\infty$, with probability one, $\{g_d(\bw_i)\}$ is an optimal solution of the MCPCA optimization \eqref{opt:MCPCA-main} over continuous variables $\{X_{i}\}_{i=1}^{p}$.
\end{theorem}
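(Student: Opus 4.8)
The plan is to sandwich $\rhoh_{q,d}^{*}$ between $\rho_q^{*}-o(1)$ and $\rho_q^{*}$, and then to bootstrap this value convergence into the statement about optimizers. The only structural fact I use is that a function of $Y_{i,d}$ is precisely a function on $[0,1]$ that is constant on each cell of the uniform $d$-partition, so that every expectation — and hence every covariance entry, mean, and variance — computed under the law of $(Y_{i,d},Y_{i',d})$ equals the one computed under $f_{X_i,X_{i'}}$. Two continuity facts are used throughout: by Cauchy--Schwartz, $(\phi_i,\phi_{i'})\mapsto\EE[\phi_i(X_i)\phi_{i'}(X_{i'})]$ is jointly continuous along sequences that are bounded and convergent in $L^2(f_{X_i})$ and $L^2(f_{X_{i'}})$ respectively; and the $q$-Ky Fan norm is continuous in the matrix entries. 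Consequently the MCPCA objective is continuous along families whose coordinate transformations converge in $L^2$ after centring and scaling.

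For $\liminf_d\rhoh_{q,d}^{*}\ge\rho_q^{*}$: fix $\delta>0$, take continuous $\{\phi_i\}$ feasible for \eqref{opt:MCPCA-main} over $\{X_i\}$ with objective $\ge\rho_q^{*}-\delta$, and let $\phi_i^{(d)}$ equal $\phi_i$ at each cell midpoint and be constant on that cell. Uniform continuity of $\phi_i$ on $[0,1]$ gives $\|\phi_i^{(d)}-\phi_i\|_\infty\to0$, so means tend to $0$, variances to $1$, and all covariance entries converge; for large $d$ the centred-and-scaled $\tilde\phi_i^{(d)}$ is feasible over $\{Y_{i,d}\}$ with objective converging to that of $\{\phi_i\}$, giving $\liminf_d\rhoh_{q,d}^{*}\ge\rho_q^{*}-\delta$, and then $\delta\downarrow0$. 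For $\rhoh_{q,d}^{*}\le\rho_q^{*}$ (every $d$): take optimal discrete $\{\phih_{i,d}^{*}\}$, each a bounded step function on the cells; given $\eta>0$, mollify each (convolution with a smooth bump of width $\varepsilon\ll 1/d$, extended by reflection at $0$ and $1$) to a continuous $\psi_i$ differing from $\phih_{i,d}^{*}$ only on an $\varepsilon$-neighbourhood of the finitely many cell boundaries, a set of $f_{X_i}$-measure $\to0$ as $\varepsilon\to0$; uniform boundedness then gives $\|\psi_i-\phih_{i,d}^{*}\|_{L^2(f_{X_i})}<\eta$. Centring and scaling keeps $\psi_i$ continuous and feasible for the continuous problem, so $\rho_q^{*}$ is at least its objective, which is $\rhoh_{q,d}^{*}-O(\eta)$; let $\eta\downarrow0$. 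Combining the two bounds, $\rhoh_{q,d}^{*}\to\rho_q^{*}$. (As stated this is a deterministic statement — $Y_{i,d}$ is built from the true density — so ``with probability one'' is vacuous here; it matters only for the sample version where $f$ is replaced by an empirical law, in which case the same argument runs on the probability-one event that all cell probabilities are positive.)

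For the optimizer statement: each $g_d(\bw_i)$ is continuous (piecewise linear), bounded, and non-constant (its knot values are the distinct values of $\phih_{i,d}^{*}$), hence has positive variance, so its centred-and-scaled version is feasible for \eqref{opt:MCPCA-main} over $\{X_i\}$ and its objective is $\le\rho_q^{*}$. For the matching lower bound it suffices, by the continuity above and $\rhoh_{q,d}^{*}\to\rho_q^{*}$, to show $\|g_d(\bw_i)-\phih_{i,d}^{*}\|_{L^2(f_{X_i})}\to0$; then every covariance entry, mean, and variance of the interpolated family converges to that of the optimal discrete family, whose objective converges to $\rho_q^{*}$ (in particular the interpolant's variance $\to1$). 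On each cell the interpolant lies between two consecutive values of $\phih_{i,d}^{*}$, so
\begin{align}\label{eq:prop-interp-err}
\big\|g_d(\bw_i)-\phih_{i,d}^{*}\big\|_{L^2(f_{X_i})}^{2}\ \le\ \sum_{j}\PP(Y_{i,d}=j)\,\big(\phih_{i,d}^{*}(j)-\phih_{i,d}^{*}(j-1)\big)^{2},
\end{align}
a discrete Dirichlet energy of the optimal transformation against the mesh.

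The main obstacle is establishing that the right side of \eqref{eq:prop-interp-err} tends to $0$: a crude estimate only shows it is $O(1)$, so one must exploit optimality. The heuristic is that an MCPCA maximizer cannot oscillate on the mesh scale, because the high-``frequency'' part of a cell-constant transformation contributes negligibly to the inter-coordinate correlations and hence to the Ky Fan norm; equivalently, the optimal discrete transformations should converge in $L^2$ to a continuous optimizer of the limiting problem, for which \eqref{eq:prop-interp-err} is the usual $O(1/d)$ discretization error. Turning this heuristic into a proof is the hard part; a transparent sufficient condition under which it goes through is that $f_{X_1,\dots,X_p}$ be continuous and bounded away from $0$ on $[0,1]^p$, so that $\{Y_{i,d}\}$ is a $\Gamma$-convergent discretization of $\{X_i\}$ and optimizers converge accordingly. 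Granting that the right side of \eqref{eq:prop-interp-err} tends to $0$, continuity of the objective together with the value convergence established above completes the proof.
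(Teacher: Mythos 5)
Your value-convergence argument follows the same skeleton as the paper's proof: discretize a near-optimal continuous family at the cell points to get $\liminf_d \rhoh_{q,d}^* \geq \rho_q^*$, and map an optimal discrete family back into the continuous feasible set for the reverse inequality. You do make one genuinely useful refinement: for the reverse direction you mollify the cell-constant functions at fixed $d$ (bandwidth $\varepsilon \ll 1/d$) instead of using the piecewise-linear interpolant, which yields $\rhoh_{q,d}^* \leq \rho_q^*$ for every $d$ with no control needed on how the discrete optimizer varies from cell to cell; the paper instead reuses the interpolant $g_d$ for this step. So the first claim of the theorem is handled correctly, and arguably more carefully than in the paper.

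The gap is in the second claim, and you have named it accurately: to show that $\{g_d(\bw_i)\}$ is asymptotically optimal for the continuous problem you need $\|g_d(\bw_i)-\phih_{i,d}^*\|_{L^2(f_{X_i})}\to 0$, i.e. that the discrete Dirichlet energy $\sum_j \PP(Y_{i,d}=j)\bigl(\phih_{i,d}^*(j)-\phih_{i,d}^*(j-1)\bigr)^2$ of the optimal discrete transformations vanishes, and you leave this unproven (``granting that\dots''). Without it the conclusion fails for an arbitrary discrete feasible family: alternating values $\pm 1$ produce an interpolant whose variance is roughly $1/3$, so both feasibility after normalization and the matching of covariance entries break down; one must genuinely exploit optimality of $\{\phih_{i,d}^*\}$ (or regularity assumptions on the density) to exclude mesh-scale oscillation, and your $\Gamma$-convergence remark is only a pointer, not an argument. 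You should be aware, however, that the paper's own proof does not close this step either: it constructs $\tilde{\phi_i}$ from the discrete optimizer via \eqref{eq:phi-bar} and asserts ``similarly to the previous argument'' that it is feasible and has the same covariances as $d\to\infty$, but the previous argument relied on the uniform continuity of a fixed $\phi_i$, which is precisely what is unavailable for the $d$-dependent optimizers $\phih_{i,d}^*$. So your proposal is incomplete on the optimizer statement, but the obstacle you isolate is real, and it is exactly the step the paper treats informally.
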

\begin{proof}
For $1\leq i\leq p$, let $\phi_i:[0,1]\to\mathbb{R}$ be a feasible function in the MCPCA optimization \eqref{opt:MCPCA-main} over continuous variables $\{X_i\}_{i=1}^{p}$. Define $\phih_{i,d}:\{1,2,...,d\}\to\mathbb{R}$ such that
\begin{align}\label{eq:phihat}
\phih_{i,d}(j)\triangleq \phi_i((j-1)/d).
\end{align}
Below we show that as $d\to\infty$, with probability one, $\{\phih_{i,d}\}_{i=1}^{p}$ is feasible in the MCPCA optimization \eqref{opt:MCPCA-main} over discrete variables $\{Y_{i,d}\}_{i=1}^{p}$. We have
\begin{align}
\EE[\phih_{i,d}(Y_{i,d})]&=\sum_{j=1}^{d} Pr(Y_{i,d}=j) \phih_{i,d}(j)\\
&=\sum_{j=1}^{d} \int_{x=(j-1)/d}^{j/d} f_{X_i}(x) \phi_i(\frac{j-1}{d})\nonumber\\
&\to \sum_{j=1}^{d} \int_{x=(j-1)/d}^{j/d} \phi_i(x) f_{X_i}(x) dx\nonumber\\
&=\int_{x=0}^{1} \phi_i(x) f_{X_i}(x) dx=0.\nonumber
\end{align}
Similarly as $d\to\infty$, with probability one, $\EE[\phih_{i,d}(Y_{i,d})^2]=1$, and
\begin{align}
\EE[\phih_{i,d}(Y_{i,d}) \phih_{i',d}(Y_{i',d})]=\EE[\phi_i(X_i)\phi_{i'}(X_{i'})].
\end{align}

Now consider $\{\phih_{i,d}(Y_{i,d})\}_{i=1}^{p}$ as a feasible point for the MCPCA optimization \eqref{opt:MCPCA-main} over discrete variables $\{Y_{i,d}\}_{i=1}^{p}$.
For $1\leq i\leq p$, define
\begin{align}\label{eq:phi-bar}
\tilde{\phi_i}\triangleq g_d(\phih_{i,d}(1),\phih_{i,d}(1),\phih_{i,d}(2),...,\phih_{i,d}(d)).
\end{align}
Note that $\tilde{\phi_i}:[0,1]\to\mathbb{R}$. Similarly to the previous argument, as $d\to\infty$, with probability one, $\{\tilde{\phi_i}\}_{i=1}^{p}$ is a feasible point in the MCPCA optimization \eqref{opt:MCPCA-main} over continuous variables $\{X_i\}_{i=1}^{p}$. Moreover, as $d\to\infty$, with probability one, we have
\begin{align}
\EE[\tilde{\phi_i}(X_i)\tilde{\phi_{i'}}(X_{i'})]=\EE[\phih_{i,d}(Y_{i,d}) \phih_{i',d}(Y_{i',d})].
\end{align}

Consider $\{\phi_i^*\}_{i=1}^{p}$ as an optimal solution of optimization \eqref{opt:MCPCA-main} over continuous variables $\{X_i\}_{i=1}^{p}$ with the optimal value $\rho_q^*$. Construct $\{\phih_{i,d}(.)\}_{i=1}^{p}$ according to equation \eqref{eq:phihat}. As $d\to\infty$, with probability one, $\{\phih_{i,d}(.)\}_{i=1}^{p}$ is a feasible point for the MCPCA optimization \eqref{opt:MCPCA-main} over discrete variables $\{Y_{i,d}\}_{i=1}^{p}$ which leads to the MCPCA objective value $\rhoh_{q,d}=\rho_q^*$. Thus, $\rhoh_{q,d}^*\geq \rho_q^*$.

Now consider $\{\phih_{i,d}^*\}_{i=1}^{p}$ as an optimal solution of optimization \eqref{opt:MCPCA-main} over discrete variables $\{Y_{i,d}\}_{i=1}^{p}$ which leads to the MCPCA objective value $\rhoh_{q,d}^*$. Construct $\{\tilde{\phi_{i}}\}_{i=1}^{p}$ according to equation \eqref{eq:phi-bar}. As $d\to\infty$, with probability one, $\{\tilde{\phi_{i}}\}_{i=1}^{p}$ is a feasible point for the MCPCA optimization \eqref{opt:MCPCA-main} over continuous variables $\{X_i\}_{i=1}^{p}$ with the optimal value $\rho_q=\rho_{q,d}^*$. Thus, $\rho_q\geq \rho_{q,d}^*$. This completes the proof.
\end{proof}
Theorem \ref{thm:computation-cont-dist} simplifies the MCPCA computation over continuous variables $\{X_i\}_{i=1}^{p}$ to the MCPCA computation over discrete variables $\{Y_{i,d}\}_{i=1}^{d}$ which can be solved using Algorithm \ref{alg:MCPCA}. A similar approach can be taken to simplify the sample MCPCA optimization over continuous variables to the one of the discrete variables which can be solved using Algorithm \ref{alg:sample-MCPCA}.

Variable $Y_{i,d}$ provides a discretized version of the continuous variable $X_i$ where the position of knots (i.e., discretization thresholds) are uniformly spaced in the range of the variable. However the argument of Theorem \ref{thm:computation-cont-dist} can be extended to consider other nonuniform and data-dependent discretization as well. For example, in the case that we observe $n$ samples from $X_i$, one can choose the position of discretization knots to have equal number of samples in each discretization level. In the sample MCPCA implementation for continuous variables, we use such a nonuniform discretization approach.

\begin{figure}
\centering
  \includegraphics[width=\linewidth]{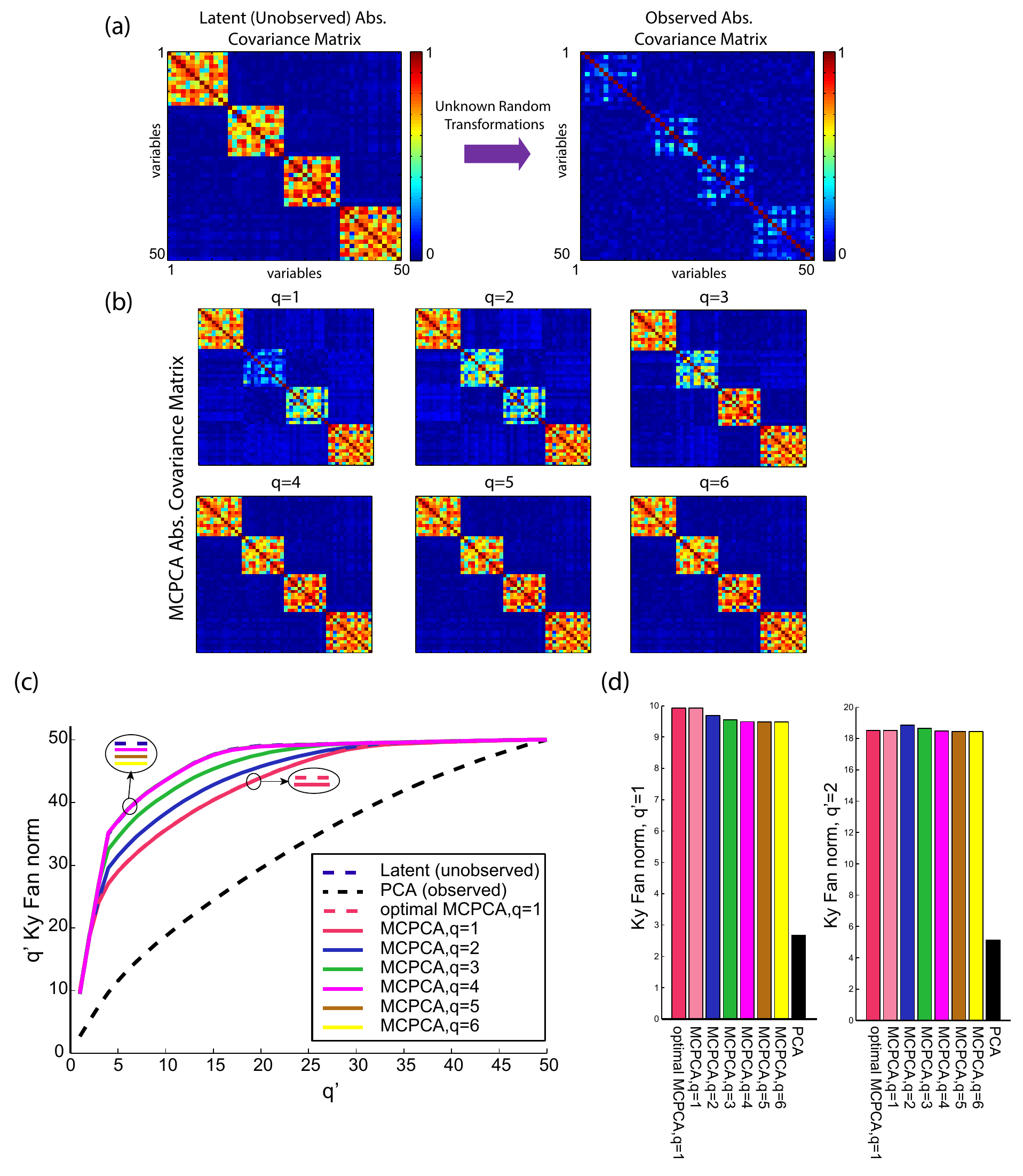}
\caption{(a) An illustration of latent, observed, and MCPCA absolute covariance matrices with different $q$ values. (b,c) An illustration of $q'$ Ky Fan norm of of latent, observed, and MCPCA covariance matrices with different $q$ values for $1\leq q'\leq p$ (panel b), and $q'=1,2$ (panel c). }
\label{fig:heatmap-p75}
\end{figure}

\section{MCPCA Applications to Synthetic and Real Data Sets}\label{sec:simulations}
\subsection{Synthetic Discrete Data}\label{subsec:synthetic-discrete-data}
First, we illustrate performance of MCPCA over simulated discrete data. We generate $n=1000$ independent samples from $p=50$ discrete variables whose covariance matrix is shown in Figure \ref{fig:heatmap-p75}-a (left panel). These samples are generated as discretized version of continuous jointly Gaussian samples. Alphabet sizes of variables (i.e., the number of quantization levels) are equal to 10. We then apply unknown random functions (with zero means and unit variances) to samples of each variable. The covariance matrix of observed samples (i.e., samples from transformed variables) is shown in Figure \ref{fig:heatmap-p75}-a (right panel). Owing to transformations of variables, the block diagonal structure of the latent covariance matrix has been faded in the observed one.

We apply the sample MCPCA Algorithm \ref{alg:sample-MCPCA} with parameter $q$ to the observed data matrix. We use 10 random initializations and 10 repeats of Algorithm \ref{alg:sample-MCPCA}. Figure \ref{fig:heatmap-p75}-b illustrates the covariance matrix computed by the MCPCA algorithm with parameter $1\leq q\leq 6$. MCPCA with $q=1$ highlights some of the block diagonal structure in the latent covariance matrix. MCPCA with larger $q$ recovers all the blocks. Note that the MCPCA algorithm aims to find a covariance matrix of transformed variables with the largest Ky Fan norm and is not tailored to infer a specific hidden structure in the data. Nevertheless inferring a low rank covariance matrix often captures such hidden structures in the data.

Figure \ref{fig:heatmap-p75}-c,d shows the $q'$ Ky Fan norm for the latent covariance matrix, for the observed covariance matrix (i.e., the PCA objective value), and for covariance matrices computed by MCPCA with different $q$ values. For $q=1$, Theorem \ref{thm:q=1} provides a globally optimal solution for the MCPCA optimization. We include that solution as well as the MCPCA solution computed in Algorithm \ref{alg:sample-MCPCA}. Figure \ref{fig:heatmap-p75}-c shows that the Ky Fan norm of covariance matrices computed by MCPCA are significantly larger than the one of the PCA. In Figure \ref{fig:heatmap-p75}-d, we show the $q'$ Ky Fan norm for $q'=1,2$ for different covariance matrices. Note that the method of Theorem \ref{thm:q=1} provides a globally optimal solution for $q'$ Ky Fan norm maximization when $q'=1$, while the MCPCA Algorithm \ref{alg:sample-MCPCA} provides a locally optimal solution. In this case (Figure \ref{fig:heatmap-p75}-d, the left panel), the gap between global and local optimal values is small. Moreover for the case of $q'=2$ (Figure \ref{fig:heatmap-p75}-d, the right panel), the MCPCA solution with parameter $q=2$ is outperforming other solutions. Finally in the case considered in Figure \ref{fig:heatmap-p75}-c,d, we observe that the Ky Fan norm of the covariance matrix computed by the MCPCA algorithm is not sensitive to parameter $q$.

\begin{figure}
\centering
  \includegraphics[width=0.7\linewidth]{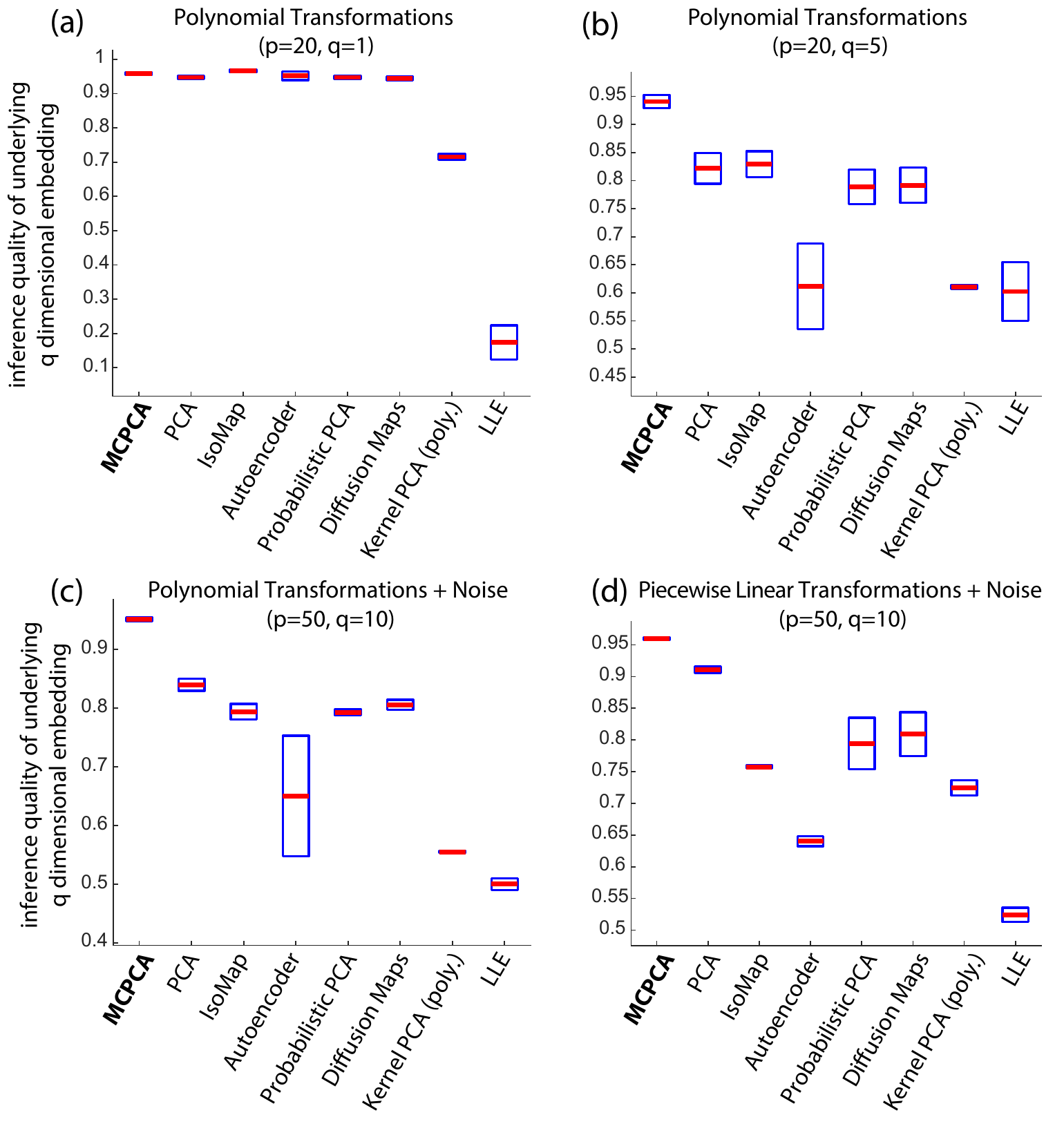}
\caption{Performance comparison of dimensionality reduction methods on synthetic datasets. The line in the middle of each box is the median result while the tops and bottoms of each box are the 25th and 75th percentiles of the results.}
\label{fig:cmp-all}
\end{figure}

\begin{figure}
\centering
  \includegraphics[width=0.9\linewidth]{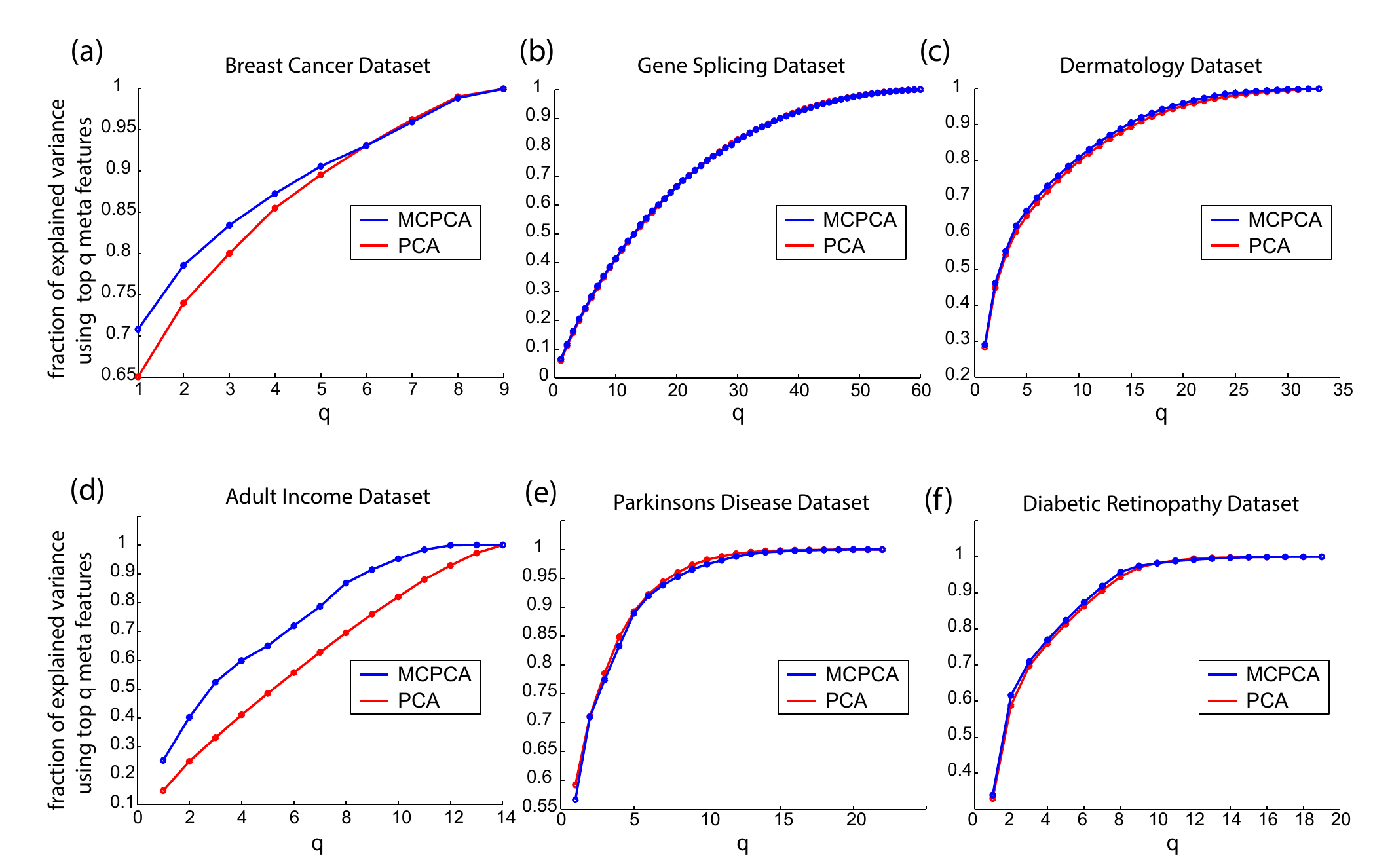}
\caption{This figure demonstrates the fraction of explained variance in six datasets using top q meta features computed by PCA and MCPCA.}
\label{fig:variance}
\end{figure}

\subsection{Synthetic Continuous Data}\label{subsec:synthetic-cont-data}
Next, we compare performance of different dimensionality reduction methods including MCPCA, PCA, Isomap, LLE, multilayer autoencoders (neural networks), kernel PCA, probabilistic PCA and diffusion maps on synthetic datasets. We assess the performance of different dimensionality reduction methods based on how much sample distances in the inferred and true low dimensional spaces match with each other. More precisely, let $\bX_{latent}\in\mathbb{R}^{n\times p}$ be a matrix whose rank is $q<p$. Let $d_{i,j}$ be the distance between sample $i$ and $j$ in the $q$ dimensional representation of $\bX_{latent}$. Let $\bN\in\mathbb{R}^{n\times p}$ be the noise matrix. Let $\bX\in\mathbb{R}^{n\times p}$ be the observed data matrix whose columns are transformations of columns of the matrix $\bX_{latent}+\bN$. These transformations are assumed to be continuous and bijective. Let $\tilde{d}_{i,j}$ be the distance between sample $i$ and $j$ in the inferred $q$ dimensional representation of $\bX$. We asses the performance of the dimensionality reduction method by computing the Spearman's rank correlation between $d_{i,j}$ and $\tilde{d}_{i,j}$ for $1\leq i,j\leq n$.

We generate $\bX_{latent}$ as $\bX_{latent}=\bU \bV^T$ where $\bU\in\mathbb{R}^{n\times q}$ and $\bV\in\mathbb{R}^{p\times q}$. Elements of $\bU$ and $\bV$ are generated according to a Gaussian distribution with zero mean and unit variance. In the noiseless case, $\bN$ is an all zero matrix. In the noisy case, elements of $\bN$ are generated according to a Gaussian distribution with zero mean and unit variance. We consider two types of transformations to generate columns of $\bX$ using columns of the matrix $\bX_{latent}+\bN$: (i) a polynomial transformation where for each variable we randomly select a transformation from the set $\{\bx,\bx^3,\bx^5\}$, and (ii) a piecewise linear transformation according to Definition \ref{def:piecewise-linear} where $w_{j+1}-w_{j}$ has an exponential distribution with parameter $100$. The positions of knots are chosen so that each bin has equal number of samples.

We use default parameters for different dimensionality reduction methods. IsoMap and LLE have a parameter $N_{ngbr}$ which determines the number of neighbors considered in their distance graphs. $N_{ngbr}$ is set to be 12. Moreover for the continuous data, MCPCA has a parameter $d$ which restricts the optimization to a set of piecewise linear functions with degree $d$. We set $d=10$. For other methods we use implementations of reference \cite{van2009dimensionality}. Experiments have been repeated 10 times in each case.

In Figure \ref{fig:cmp-all}-a we consider a relatively easy setup where $p=20$, $q=1$, transformation functions are polynomials, and there is no added noise to observed samples. In this setup, all methods except LLE and kernel PCA have good performance. Gaussian kernel PCA performed poorly in these experiments. Thus, we only illustrate performance of polynomial kernel PCA in this figure. It further highlights sensitivity of kernel PCA to the model setup. In Figure \ref{fig:cmp-all}-b we consider a similar setup to the one of panel (a) but we increase $q$ to be 5. MCPCA continues to have a good performance while the performance of other methods drop significantly. Next, we increase $p$ to 50 and $q$ to 10. We also add noise to observed samples as described above. MCPCA continues to have a good performance outperforming all other methods (Figure \ref{fig:cmp-all}-c). In Figure \ref{fig:cmp-all}-d we change nonlinear transformations from polynomials to piecewise linear functions compared to panel (c). Again, in this setup MCPCA outperforms all other methods. These experiments highlight robustness of MCPCA against model parameters and noise. Performance of other methods appears to be sensitive to these factors.

\begin{table}[t]
\centering
\resizebox{0.9\textwidth}{!}{\begin{minipage}{\textwidth}
  \begin{tabular}{ | c | c | c | c| c| }
    \hline
     \text{Data Set}& \# \text{of samples} $(n)$ &\# \text{of features} $(p)$ &\# of \text{of classes}& \text{class distribution}  \\ \hline
    \text{Breast Cancer} & 683 & 9 & 2 & (239,444) \\ \hline
    \text{Gene Splicing} & 3,175 & 60 & 2 & (1527,1648) \\ \hline
    \text{Dermatology} & 366 & 33 & 6 & (112,61,72,49,52,20) \\ \hline
    \text{Adult Income} & 30,162 & 14 & 2 & (7508,22654) \\ \hline
    \text{Parkinsons Disease} & 195 & 22 & 2 & (48,147) \\ \hline
    \text{Diabetic Retinopathy} & 1,151 & 19 & 2 & (540,611) \\ \hline
  \end{tabular}\caption{Properties of data sets considered in Section \ref{subsec:real-data}. \label{tab:dataset-prop}}
\end{minipage} }
\end{table}

\subsection{Real Data Analysis}\label{subsec:real-data}
Having illustrated effectiveness of MCPCA on synthetic datasets, we apply it to real datasets. We consider six data sets from the UCI machine learning repository data sets \cite{UCI}, namely breast cancer data set, gene splicing data set, dermatology data set, adult income data set, parkinsons disease data set, and diabetic retinopathy data set. These data sets have been chosen to span various types of input data. Some of them have discrete features, some have continuous features, while some have mixed discrete and continuous features. The number of samples ($n$) and the number of features ($p$) vary across these data sets. Samples in five of these data sets have binary labels while in one of them the number of sample classes is six. Basic properties of these data sets have been summarized in Table \ref{tab:dataset-prop}. Below we explain some of these  properties with more details:
\begin{itemize}
  \item The breast cancer data set has 683 individuals with breast cancer, among which 444 are benign and 239 are malignant (we remove 16 samples with missing values from the original data set.). Attributes in this data set include features such as clump thickness, uniformity of cell size, mitoses, etc. Values of these features are discrete in the set of $\{1,2,...,10\}$. For more information about this data set, see \cite{breast}.
  \item The gene splicing data set has 3,175 samples \footnote{We use the processed data provided in \url{http://www.cs.toronto.edu/~delve/data/datasets.html}}. Each sample is a 60 base pair subset of genome. The goal is to classify two types of splice junctions in DNA sequences: exon/intron (EI) or intron/exon (IE) sites.  Values of features are discrete in the set of $\{A,G,C,T\}$. For more information about this data set, see \cite{UCI}.
  \item The dermatology data set has 366 samples and 33 features (we ignore the age feature from the original data since it has missing values.). The classification of erythemato-squamous diseases is a difficult task in dermatology since they share clinical features of erythema and with similar scaling. This data set have samples with six diseases: psoriasis, seboreic dermatitis, lichen planus, pityriasis rosea, cronic dermatitis, and pityriasis rubra pilaris. The number of samples of each disease are 112, 61, 72, 49, 52, 20, respectively. Features include 12 clinical features and 21 histopathological features. Variables are discrete whose alphabet sizes are 2 (for one feature), 3 (for one feature), and 4 (for 31 features). For more information about this data set, see \cite{derm}.
  \item The adult income data set is the largest data set we consider in this section. It has 30,162 samples (after removing samples from the original training data with missing values.). The task is to classify individuals to two groups based on their income. This data set includes 22,654 individuals with income $\leq 50,000\$$ and $7,508$ individuals with income $>50,000\$$. Features include variables such as age, sex, race, education, work class, capital gain, capital loss, hours per week, etc. All features except one has fewer than 120 distinct alphabet values. For more information about this data set, see \cite{adult}.
  \item The parkinsons disease data set has 195 samples where 47 of them come from healthy individuals and 147 of them come from parkinsons patients. Each feature is a particular voice measure such as average vocal fundamental frequency, measures of variation in amplitude, measures of frequency variation, etc. Features are continuous with alphabet sizes ranges from 20 to 195. For more information about this data set, see \cite{parkinsons}.
  \item The diabetic retinopathy data set has 1,151 samples where 540 samples have no signs of the disease. The data contains 19 features extracted from the messidor image set to predict whether an image contains signs of diabetic retinopathy or not. The alphabet size of features range from 2 to 1,151. For more information on this data set, see \cite{diabetics}.
\end{itemize}

PCA and MCPCA aim to maximize the amount of explained variance in the data (or in the transformation of the data) using low dimensional features. PCA restricts its optimization to merely linear transformations while MCPCA considers a more general family of nonlinear transformation functions. More precisely, let $\bK_{(\phi_1,...,\phi_p)}\in\mathbb{R}^{p\times p}$ be the covariance matrix of transformations of variables. Then $\frac{1}{p}\sum_{r=1}^{q}\lambda_r(\bK_{\phi_1,...,\phi_p})$ is the fraction of explained variance in the transformation of the data using its optimal $q$ dimensional representation. We normalize features to have zero means and unit variances.

We perform a two-fold cross validation analysis: we choose half of the data uniformly randomly for training. Then we test performance of the methods in the remaining half of the data. In discrete data sets (i.e., breast cancer, gene splicing and dermatology data sets) we use sample MCPCA Algorithm \ref{alg:sample-MCPCA} to compute optimal transformations of features in the training data for each $q$ value. Then, we apply those transformations to the test data. In the adult income data set all features except one has fewer than 120 distinct alphabet values. For the only continuous feature in this data set we use $d=120$.  In continuous data sets (i.e., Parkinsons disease and diabetic retinopathy data sets) we use the procedure explained in Section \ref{subsec:comp-cont}. In these experiments $d=10$ is fixed. We repeat each experiment 10 times.

\begin{figure}
\centering
  \includegraphics[width=0.8\textwidth]{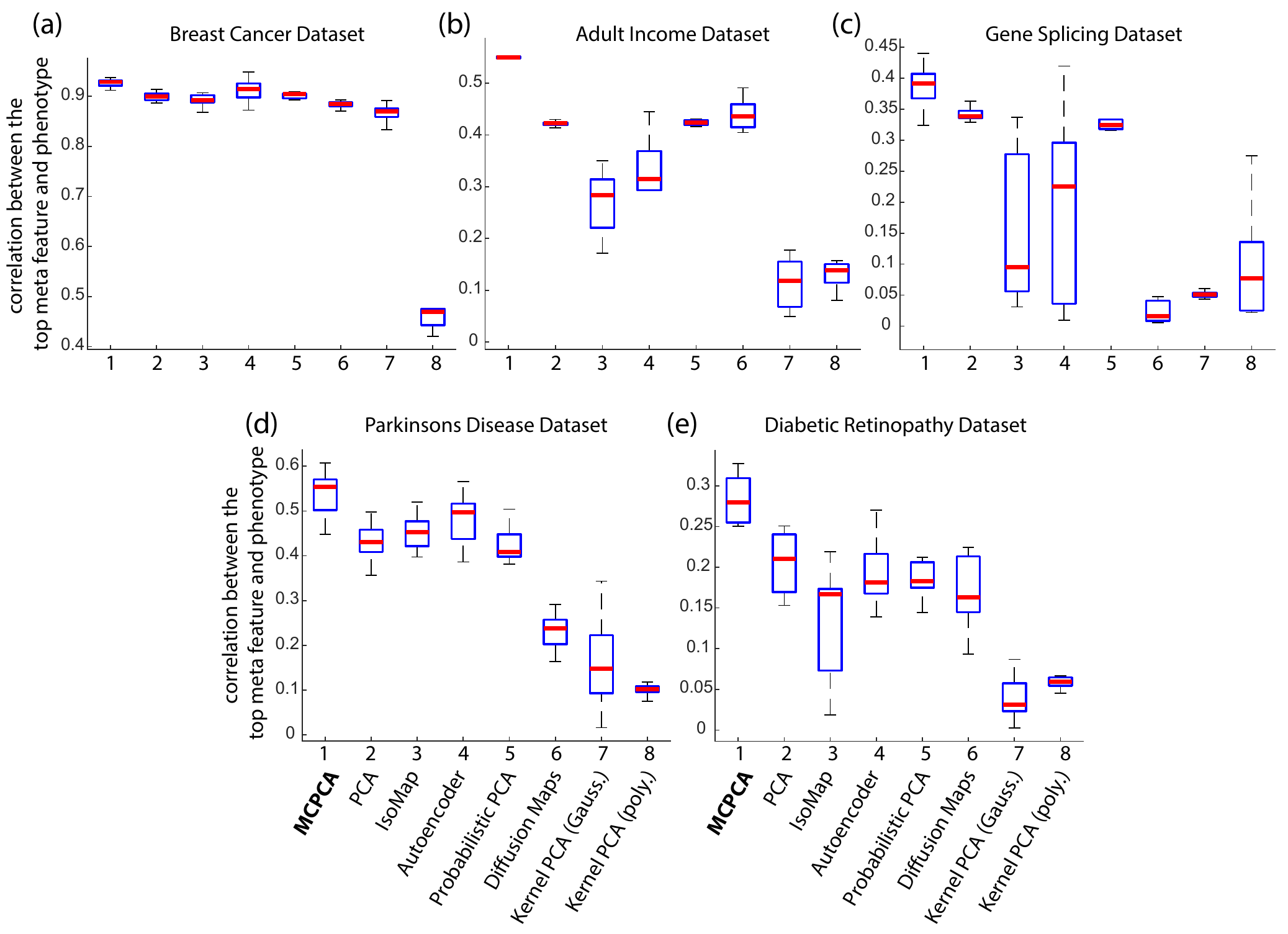}
\caption{This figure illustrates the correlation between the top meta feature and phenotype for five datasets with binary phenotypes. }
\label{fig:corr-cross}
\end{figure}

Figure \ref{fig:variance} shows the fraction of explained variance using top $q$ meta features computed by PCA and MCPCA in a two-fold cross validation analysis. In breast cancer and adult income datasets MCPCA significantly outperforms PCA for all values of $q$, while in other datasets their performance is comparable. The fact that MCPCA shows higher or comparable performance to PCA in holdout datasets indicates that MCPCA captures meaningful nonlinear correlations among features whenever they exist.

Next, we examine how predictive of phenotype extracted meta features are. Similarly to the previous experiment we use a two-fold cross validation analysis. We choose half of samples uniformly randomly to train the methods, and test their performance in the remaining half. We repeat each experiment 10 times. In continuous data sets, we consider $d\in\{1,5,10,15\}$. For the Isomap in the training phase we consider $N_{ngbr}\in\{10,15,20\}$. In the Isomap case, since the method does not have the so-called {\it parametric out-of-sample} property \cite{van2009dimensionality} (meaning that we cannot use the low dimensional embedding of the training data to compute a low dimensional embedding of the test data), we run the method on the test data using optimal parameters learned in the training step. This issue occurs in other nonlinear dimensionality reduction methods. In those cases we run the methods in the test data using their default parameters \cite{van2009dimensionality}.

Figure \ref{fig:corr-cross} shows the correlation between the top extracted meta feature using different dimensionality reduction methods and phenotype. The implementation of LLE crashed in these experiments, thus excluded from this figure. In all cases MCPCA consistently outperforms all other methods in different ranges of correlation between the meta feature and phenotype. For example, correlation between the meta feature and phenotype is high in the breast cancer dataset, is average in the adult income dataset, and is low in gene splicing and diabetic retinopathy datasets. Nevertheless, in all cases MCPCA shows a significant gain over all other methods.

\section{Discussion}
Here we introduced Maximally Correlated Principal Component Analysis (MCPCA) as a multivariate extension of maximal correlation and a generalization of PCA. MCPCA computes, possibly nonlinear, transformations of variables whose covariance matrix has the largest Ky Fan norm. MCPCA resolves two weaknesses of PCA by considering nonlinear correlations among features and being suitable for both continuous and categorical data. Although the MCPCA optimization is non-convex, we characterized its global optimizers for nonlinear functions of jointly Gaussian variables, and for categorical variables under some conditions. For general categorical variables, we proposed a block coordinate descend algorithm and showed its convergence to stationary points of the MCPCA optimization. Given the widespread applicability of PCA and the improved and robust performance of MCPCA compared to state-of-the-art dimensionality reduction methods, we expect the proposed method to find broad use in different areas of science. Moreover, techniques developed for efficiently optimizing feature transformations over a broad family of linear and nonlinear functions can be employed in several other statistical and machine learning problems such as nonlinear regression and deep learning.


\end{document}